\def\eqref#1{equation~\ref{#1}}
\def\1{\bm{1}}
\DeclareMathAlphabet{\mathsfit}{\encodingdefault}{\sfdefault}{m}{sl}
\SetMathAlphabet{\mathsfit}{bold}{\encodingdefault}{\sfdefault}{bx}{n}
\def\sA{{\mathbb{A}}}
\def\sR{{\mathbb{R}}}
\def\sS{{\mathbb{S}}}
\newtheorem{theorem}{Theorem}
\newtheorem{definition}{Definition}
\newtheorem{proposition}{Proposition}%
\newtheorem{lemma}{Lemma}%
\newtheorem*{theorem*}{Theorem}
\newtheorem*{definition*}{Definition}
\newcommand{\dE}{\mathbb{E}}
 \newcommand{\dP}{\mathbb{P}} 
 \newcommand{\dT}{\mathbb{T}}
\newcommand{\cA}{\mathcal{A}} 
 \newcommand{\cD}{\mathcal{D}}
 \newcommand{\cH}{\mathcal{H}}
\newcommand{\cO}{\mathcal{O}} 
 \newcommand{\cR}{\mathcal{R}}
\newcommand{\cS}{\mathcal{S}} \newcommand{\cT}{\mathcal{T}}
\renewcommand{\sA}{\mathscr{A}}
 \renewcommand{\sR}{\mathscr{R}}
\renewcommand{\sS}{\mathscr{S}}
\newcommand{\p}[4]{{#3}\!\left#1{#4}\right#2}
\renewcommand{\leq}{\leqslant}
\newcolumntype{R}[1]{>{\raggedright\arraybackslash}p{#1}}
\newcolumntype{C}[1]{>{\centering\arraybackslash}p{#1}}
\newcolumntype{L}[1]{>{\raggedleft\arraybackslash}p{#1}}
\definecolor{mColor1}{rgb}{0.95,0.95,0.95}
\newcommand{\stoptocwriting}{%
  \addtocontents{toc}{\protect\setcounter{tocdepth}{-5}}}
\newcommand{\resumetocwriting}{%
  \addtocontents{toc}{\protect\setcounter{tocdepth}{\arabic{tocdepth}}}}
\def\s{{s}}
\def\a{{a}}
\newcommand{\setword}[2]{%
  \phantomsection
  #1\def\@currentlabel{\unexpanded{#1}}\label{#2}%
}
\title{A Dataset Perspective on Offline~Reinforcement~Learning}
\author{\vspace{0.1cm}\\
    \textbf{Kajetan Schweighofer~\thanks{Authors contributed equally. The code is available at \href{https://github.com/ml-jku/OfflineRL}{github.com/ml-jku/OfflineRL} 
    }~$~^{,}$\footnotemark[4] \quad
    Andreas Radler~\footnotemark[1]~$~^{,}$\footnotemark[4] \quad 
    Marius-Constantin Dinu\footnotemark[1]~$~^{,}$\footnotemark[4]~$~^{,}$\footnotemark[3]} \\ 
    \textbf{Markus Hofmarcher~\footnotemark[4] \quad Vihang Patil~\footnotemark[4] \quad
    Angela Bitto-Nemling~\footnotemark[4]~$~^{,}$\footnotemark[2]} \\
    \textbf{Hamid Eghbal-zadeh~\footnotemark[4] \quad
    Sepp Hochreiter~\footnotemark[4]~$~^{,}$\footnotemark[2]}
     \\ \\
  \footnotemark[4]~~ELLIS Unit Linz and LIT AI Lab, \\ 
                  ~~Institute for Machine Learning, \\ 
                  ~~Johannes Kepler University Linz, Austria \\ 
  \footnotemark[2]~~Institute of Advanced Research in 
                    Artificial Intelligence (IARAI), Vienna, Austria \\
  \footnotemark[3]~~Dynatrace Research, Linz, Austria 
}
\begin{document}

\maketitle

\begin{abstract}
The application of Reinforcement Learning (RL) in real world environments 
can be expensive or risky due to sub-optimal policies during training.
In Offline RL, this problem is avoided since interactions with an environment are prohibited.
Policies are learned from a given dataset, 
which solely determines their performance.
Despite this fact, how dataset characteristics influence Offline RL algorithms is still hardly investigated.
The dataset characteristics are determined by the behavioral policy that samples this dataset. 
Therefore, we define characteristics of behavioral policies as exploratory for yielding high expected information in their interaction with the Markov Decision Process (MDP) and as exploitative for having high expected return.
We implement two corresponding empirical measures for the datasets sampled by the behavioral policy in deterministic MDPs.
The first empirical measure SACo is 
defined by the normalized unique state-action pairs and captures exploration.
The second empirical measure TQ is 
defined by the normalized average trajectory return and captures exploitation.
Empirical evaluations show the effectiveness of TQ and SACo.
In large-scale experiments using our proposed measures, we show that the unconstrained off-policy Deep Q-Network family requires datasets with high SACo to find a good policy.
Furthermore, experiments show that policy constraint algorithms
perform well on datasets with high TQ and SACo.
Finally, the experiments show, that purely dataset-constrained Behavioral Cloning performs
competitively to the best Offline RL algorithms for datasets with high~TQ.  
\end{abstract}

\begin{figure}[h]
    \centering
    \includegraphics[width=1\textwidth]{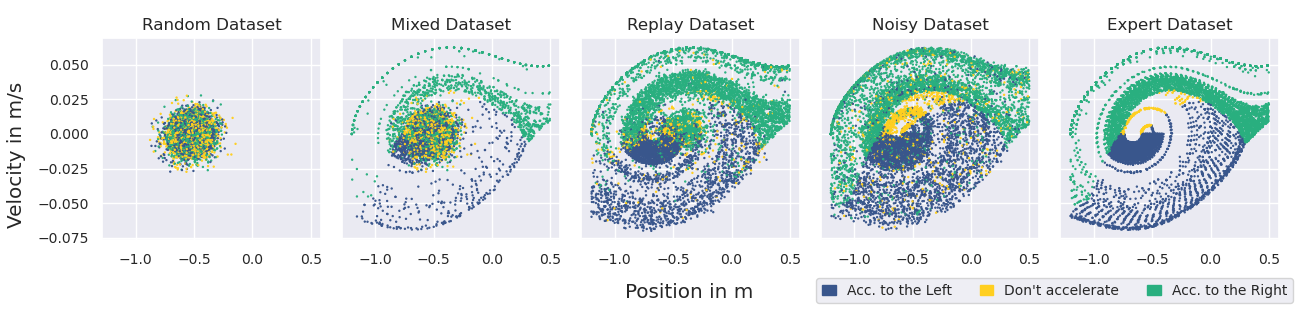}
    \caption{We illustrate the effect of the behavioral policy on the distribution of the sampled dataset. State-action pairs of different datasets were sampled from the \texttt{MountainCar} environment using different behavioral policies. 
    One can visually perceive differences between these datasets, which we aim to quantify by our introduced measures (see Sec.~\ref{derived-measures}).
    }
    \label{fig:res:projections_single}
\end{figure}

\stoptocwriting
\section{Introduction}
\label{sec:introduction}

Central problems in RL 
are credit assignment \citep{Sutton:84, Arjona:19, Holzleitner:20, Patil:20, Widrich:21, Dinu:22}
and efficient exploration of the environment \citep{Wiering:98, Mcfarlane:03,  Schmidhuber:10}.
Exploration can be costly due to 
high computational complexity, violation of physical constraints, 
risk of physical damage, interaction with human experts, etc. \citep{Dulac:2019}. 
Furthermore, exploration may endanger humans through accidents inflicted by self-driving cars, crashes of production machines when optimizing
production processes, or high financial losses when applied in trading or pricing. 
In limiting cases, simulations may alleviate these factors. 
However, designing robust and high quality simulators is a challenging, time consuming and resource intensive task, and introduces problems related to distributional shift and domain gap between the real world environment and the simulation \citep{Rao:20}. 

Confronted with these tasks, one can utilize the framework of Offline RL \citep{Levine:20}, also referred to as Batch RL \citep{Lange:12}, which offers to learn 
policies from pre-collected or logged datasets, 
without interacting with an environment \citep{Agarwal:20, Fujimoto:19a, Fujimoto:19b, Kumar:20}. 
Many such Offline RL datasets already exist for various real world problems \citep{Cabi:2019, dasari:2020, Yu:2020}. 
Offline RL shares
numerous traits with supervised deep learning,
including, but not limited to leveraging large datasets. 
A core obstacle is generalization to unseen data,
as stored samples may not cover the entire state-action space.
In Offline RL, the generalization problem takes the form of
domain shift \citep{Adler:20} during inference.
Apart from the non-stationarity of an environment or agent to environment interactions, the domain shift may be caused by the data collection process itself \citep{Khetarpal:20}. 
We illustrate this by collecting datasets using different behavioral policies in Fig.~\ref{fig:res:projections_single}.

Multiple Offline RL algorithms \citep{Agarwal:20, Fujimoto:19a, Fujimoto:19b, Gulcehre:21, Kumar:20, Wang:20} have been proposed to 
address these problems and have shown good results. 
Well known off-policy algorithms such as Deep Q-Networks (DQN) \citep{Mnih:13} can readily be used in Offline RL, by initializing the replay-buffer with a pre-collected dataset.
In practice, however, those algorithms often fail or lag far behind the performance they attain when trained in an Online RL setting.
The reduced performance is attributed to the extrapolation errors for unseen state-action pairs and the resulting domain shift between the fixed given dataset and the states visited by the learned policy \citep{Fujimoto:19a, Gulcehre:21}.
Several algorithmic improvements tackle these problems, including policy constraints \citep{Fujimoto:19a,Fujimoto:19b,Wang:20}, regularization of learned action-values \citep{Kumar:20}, and off-policy algorithms with more robust action-value estimates \citep{Agarwal:20}.
While unified datasets have been released \citep{Gulcehre:20, Fu:21} for comparisons of Offline RL algorithms, grounded work in understanding how the dataset characteristics influence the performance of algorithms is still lacking \citep{Riedmiller:21, Monier:20}.

We therefore study core dataset characteristics, and derive from first-principles theoretical measures related to exploration and exploitation which are well established policy properties.
We derive a measure of exploration based on the \emph{expected information} of the interaction of the behavioral policy in the MDP, the transition-entropy.  
Furthermore, we show that for deterministic MDPs, the transition-entropy equals the occupancy-entropy.
Our measure of exploitation, the expected trajectory return, is a generalization of the expected return of a policy.
We show that these measures have theoretical guarantees under MDP homomorphisms \citep{vanderPol:20}, thus exhibit certain 
stability traits under such transformations.

To characterize datasets and compare them across environments and generating policies,
we implement two empirical measures that correspond to the theoretical measures:
(1) SACo, corresponding to the occupancy-entropy, defined by the normalized unique state-action pairs, capturing exploration.
(2) TQ, corresponding to the expected trajectory return, defined by the normalized average return of trajectories in the dataset, capturing exploitation.

We conducted experiments on six different environments from three different environment suites \citep{Brockman:16, Chevalier-Boisvert:18, Young:19}, 
to create datasets with different characteristics 
(see Sec.~\ref{sec:dataset_generation}).
On these datasets, 6750 RL learning trials were conducted,
which cover a selection of popular algorithms in the Offline RL setting \citep{Agarwal:20, Dabney:17, Fujimoto:19b, Gulcehre:21, Kumar:20, Mnih:13, Pomerleau:91, Wang:20}.
We evaluated their performance
on datasets with different TQ and SACo. 
Variants of the off-policy DQN family \citep{Mnih:13, Agarwal:20, Dabney:17} 
were found to require datasets with high SACo to perform well. 
Algorithms that constrain the learned policy towards
the distribution of the behavioral policy perform well for
datasets with high TQ or SACo or intermediate variations thereof. 
For datasets with high TQ, Behavioral Cloning (BC) \citep{Pomerleau:91} 
outperforms variants of the DQN family and is competitive to the best performing Offline RL algorithms. 

In summary, our contributions are:
\begin{itemize}
    \item[\textbf{(a)}] we derive theoretical measures that capture exploration and exploitation,
    \item[\textbf{(b)}] we prove theoretical guarantees for the stability of these measures under MDP homomorphisms,
    \item[\textbf{(c)}] we provide an effective method to characterize datasets through the empirical measures TQ and SACo,
    \item[\textbf{(d)}] we conduct an extensive empirical evaluation of how dataset characteristics affect algorithms in Offline RL.
\end{itemize}

\section{Characterizing RL Datasets}
\label{sec:eval_metrics}

The selection of suitable measures for evaluating RL datasets and enabling their comparison with respect to algorithmic performance is a challenging and open research question \citep{Riedmiller:21, Monier:20}.
As the distribution of the dataset is governed by the behavioral policy used to sample it, we aim to find measures for the characteristics of the behavioral policy.
We define the characteristics of the behavioral policy as how exploitative and explorative it acts, and analyze the corresponding measures. 
These are the expected trajectory return for exploitation, and the transition-entropy for exploration in stochastic MDPs, which simplifies to the occupancy-entropy for exploration in deterministic MDPs.
Furthermore, we study these theoretical measures under MDP homomorphisms \citep{Ravindran:01, Givan:03, vanderPol:20, Abel:2022} 
to analyze their dependence on such transformations.
Finally, we implement empirical measures that correspond to the expected trajectory return and the occupancy-entropy, TQ and SACo, which can be calculated for a given set of datasets. 

We define our problem setting as a finite MDP to be a $5$-tuple of $(\sS, \sA, \sR, p, \gamma)$ of finite set $\sS$ with states $s$ (random variable $S_t$ at time $t$), $\sA$ with actions $a$ (random variable $A_t$), $\sR$ with rewards $r$ (random variable $R_{t+1}$), dynamics $p(S_{t+1} = s', R_{t+1} = r \mid S_t = s, A_t = a)$, and $\gamma \in [0, 1)$ as a discount factor.
The agent selects actions $a \sim \pi(S_t = s)$ based on the policy $\pi$, which depends on the current state $s$. 
Our objective is to find the policy $\pi$ that maximizes the expected return $G_t =  \sum_{k=0}^{T} \gamma^k R_{t+k+1}$.
In Offline RL, we assume that a dataset $\cD = \{\tau_i \} |_{i=1}^B$, consisting of $B$ trajectories, is provided. 
A single trajectory $\tau$ consists of a sequence of $(s, a, r, s')$ tuples.

\subsection{Transition Entropy as Measure for Exploration}
\label{sec:exploration}

We define \emph{explorativeness} of a policy $\pi$ as the expected information of the interaction between the policy and a certain MDP.
While the policy actively selects its next action given its current state, it is transitioned into a next state and receives a reward signal according to the dynamics of the MDP $p(r,s' \mid s,a)$, which cannot be influenced by the policy.
Therefore, the policy interacting in the MDP can be seen as a single stochastic process that generates transitions $(s,a,r,s')$.
A policy is explorative, if it is able to generate many different transitions with high probability in an MDP.
As transitions can only be observed through the interaction between policy and MDP, explorativeness of a policy can only be defined in conjunction with a specific MDP.
Policies that act very explorative in one MDP, could act much less explorative in another MDP and vice versa.
For instance, a policy that does not open doors might explore multiple rooms very thoroughly if all doors are already open, but will get stuck in a single room if they are closed initially.

We measure explorativeness of a policy by the Shannon entropy \citep{Shannon:48} of the transition probabilities $p_\pi(s, a, r, s')$ under the policy interacting with the MDP.
We can rewrite the transition probabilities as $~{p(s,a,r,s') = p(s',r \mid s,a) \; p(s,a)}$.
The dynamics $p(r,s' \mid s,a)$ are solely MDP-dependent, while the state-action probability $p_{\pi}(s,a)$ is policy- and MDP-dependent.
The state-action probability $p_\pi(s,a)$ is often referred to as occupancy measure\footnote{To avoid additional assumptions on the MDP, we focus this analysis on a single stage, see \cite{Neu:20}.} $\rho_\pi(s, a)$ \citep{Neu:20, Ho:16}, as it describes how the policy occupies the state-action space.

We start with the Shannon entropy of the transition probabilities  
\begin{equation}
    H(p_\pi(s,a,r,s')) \coloneqq - \sum\limits_{\substack{s,a,r,s' \\ p_\pi(s,a,r,s') > 0}} p_\pi(s,a,r,s') \log(p_\pi(s,a,r,s')), 
\end{equation}
which we will further refer to as transition-entropy.
The transition-entropy can be factored into the occupancy weighted sum of entropies of the dynamics and the occupancy (for a detailed derivation see Eq.~\ref{eq:unified} in Sec.~\ref{appendic_subsec_explorativeness} in the Appendix):
\begin{align}
  H(p_\pi(s,a,r,s')) &=\sum\limits_{\substack{s,a}} \rho_\pi(s,a) \; H(p(r,s' \mid s, a)) + H(\rho_\pi(s, a)).
    \label{eq:unified_main}
\end{align}
An explorative policy should therefore aim to find a good balance between visiting all possible state-actions similarly likely, while visiting state-actions pairs with more stochastic dynamics $p(r, s' \mid s,a)$ more often.
This link between good exploration and visiting more stochastic dynamics is also found in optimal experiment design \citep{Storck:95, Cohn:93}.
Note that there may be other ways to define exploration. 
A reasonable option is to define a \emph{utility} measure for transitions to reach a goal or learn a certain property of an MDP.
Higher exploration would then mean that more util transitions are more likely.
Similarly, another option is to define a \emph{distance} measure for transitions and define exploration as increasing the likelihood to sample transitions which have a higher distance to each other.
A priori, it is hard to define the notions of utility or distance, hence we turned to the Shannon entropy. 
 \begin{figure}
     \centering
     \includegraphics[width=\textwidth]{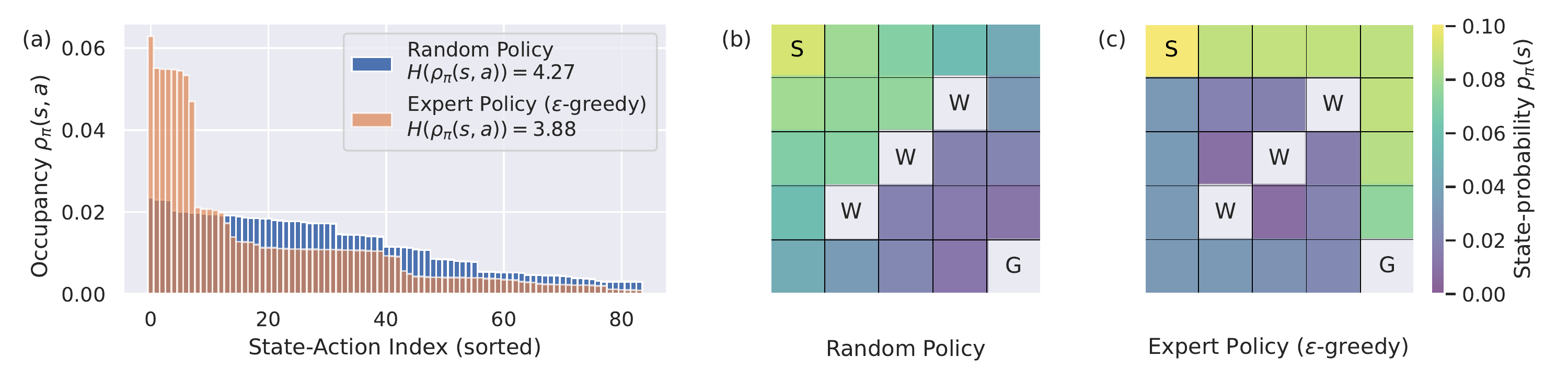}
     \caption{(a) Distribution of occupancies of two policies, a random policy and an expert policy that acts $\epsilon$-greedy with $\epsilon=0.5$. We show the occupancy-entropies, where the random policy attains a higher value than the $\epsilon$-greedy expert policy.
     Both policies are evaluated in a five by five deterministic gridworld with four eligible actions (up, down, left, right).
     Episodes start at the starting position \textbf{S} and end upon reaching the goal state \textbf{G}, which yields a positive return.
     Walls \textbf{W} cannot be passed through.
     (b) \& (c) Illustrate the state probabilities $p_\pi(s)$ under the random (b) and the $\epsilon$-greedy expert (c) policy, which is the sum over actions of the underlying occupancies $\rho_\pi(s, a)$.
     }
     \label{fig:occupancy}
 \end{figure}

\paragraph{Deterministic MDPs} In this class of MDPs, we can simplify the exploration measure from Eq.~\ref{eq:unified_main}.
Since $p(r,s' \mid s,a)$ is deterministic, the dynamics-entropy $H(p(r,s' \mid s,a))$ is zero and the left term in  Eq.~\ref{eq:unified_main} vanishes as shown in Eq.~\ref{eq:exp_info_det} in Sec.~\ref{appendic_subsec_explorativeness} in the Appendix.
Therefore, for deterministic MDPs the transition-entropy simplifies to the occupancy-entropy:
\begin{equation}
    H(\rho_\pi(s, a)) \coloneqq - \sum\limits_{\substack{s,a \\ \rho_\pi(s,a) > 0}} \rho_\pi(s,a) \log(\rho_\pi(s,a)).
    \label{eq:unified_det}
\end{equation}
In this special case, a policy explores maximally if all possible state-action pairs are visited equally likely and thus the occupancy-entropy is maximal.
Fig.~\ref{fig:occupancy} gives an intuition about the occupancy distribution and the resulting occupancy-entropy of different policies.

\subsection{Expected Trajectory Return as Measure for Exploitation}
\label{sec:trajectory_quality}

We define \emph{exploitativeness} of a policy as how performant in terms of expected return it interacts with the MDP.
We measure how exploitative a policy acts by its expected return $~{g_\pi = \dE_\pi \left[ G_t \right]}$.
Furthermore, we generalize the expected return to arbitrary policies (e.g. non-representable or non-Markovian policies as discussed in \cite{Fu:21}), as the Offline RL setting makes no assumptions on the behavioral policy used for dataset collection.
Therefore, we define exploitativeness on a distribution of trajectories $\cT$ observed from arbitrary policies as 
expected trajectory return $g_\cT$, which is given by:
\begin{equation}
    g_\cT \coloneqq \dE_{\tau \sim \cT} \left[ \sum_{t = 0}^\infty \gamma^t r_t \mid r_t \in \tau \right].
    \label{eq:trajectory_quality}
\end{equation}
In this turn, we can thus also evaluate policies for which we can not represent the dataset generating behavior but can generate data indefinitely.
This is of relevance when considering human-generated datasets.

\subsection{Common Structures of MDPs}
\label{common_structures_of_MDPs}

We are interested in the 
\emph{stability} of the proposed measures. 
We regard stability of our measures as the degree to which the measures change under slight modifications of the input.
Modifications to the input are small changes in the state space, action space or dynamics of the MDP.
To formalize changes of the MDP, we introduce the concept of a common abstract MDP (AMDP) (see Fig.~\ref{fig:amdp_mapping}). 
We base the definition of the AMDP on prior work from \cite{Sutton:99smdp, Jong:05, Li06mdp, Abel:2019, vanderPol:20, Abel:2022} by considering state and action abstractions.

\begin{definition}
\label{def:amdp_definition}

Given two MDPs $M = (\sS, \sA, \sR, p, \gamma)$ and $\tilde M = (\tilde \sS, \tilde \sA, \tilde \sR, \tilde p, \gamma)$ with finite and discrete state-action spaces. We assume there exists a common \textit{abstract MDP} (AMDP) $\hat{M} = (\hat{\sS}, \hat{\sA}, \hat{\sR}, \hat{p}, \gamma)$, whereas $M$ and $\tilde M$ are homomorphic images of $\hat{M}$.
We define an MDP homomorphism by the surjective abstraction functions as $\phi : \sS \rightarrow \hat{\sS}$ and $\tilde \phi : \tilde \sS \rightarrow \hat{\sS}$, with $\phi(s), \tilde \phi(\tilde s) \in \hat{\sS}$ for the state abstractions and $\{ \psi_s : \sA \rightarrow \hat{\sA} \mid s \in \sS \}$ and $\{ \tilde \psi_{\tilde s} : \tilde \sA \rightarrow \hat{\sA} \mid \tilde s \in \tilde \sS \}$, with $\psi_s(a), \tilde \psi_{\tilde s}(\tilde a) \in \hat{\sA}$ for the action abstractions (see Appendix Sec.~\ref{sec:amdp_definition} for more details regarding the assumptions on these abstraction functions and their implications on the MDP dynamics).
Let $\pi(a \mid s)$ and $\tilde \pi(\tilde a \mid \tilde s)$ be corresponding policies of $M$, $\tilde M$ such that they map via $\phi, \psi_s$ and $\tilde \phi, \tilde \psi_{\tilde s}$ to the same abstract policy $\hat{\pi}(\hat{a} \mid \hat{s})$ of the common AMDP $\hat{M}$.
Let $\cT$ and $\tilde \cT$ be corresponding trajectory distributions of $M$, $\tilde M$, such that they map via $\phi, \psi_s$ and $\tilde \phi, \tilde \psi_{\tilde s}$ to the same abstract trajectory distribution $\hat \cT$ of the common AMDP.
\end{definition}

\begin{table}
\centering
\begin{tabular}{cc}
\includegraphics[width=0.60\textwidth]{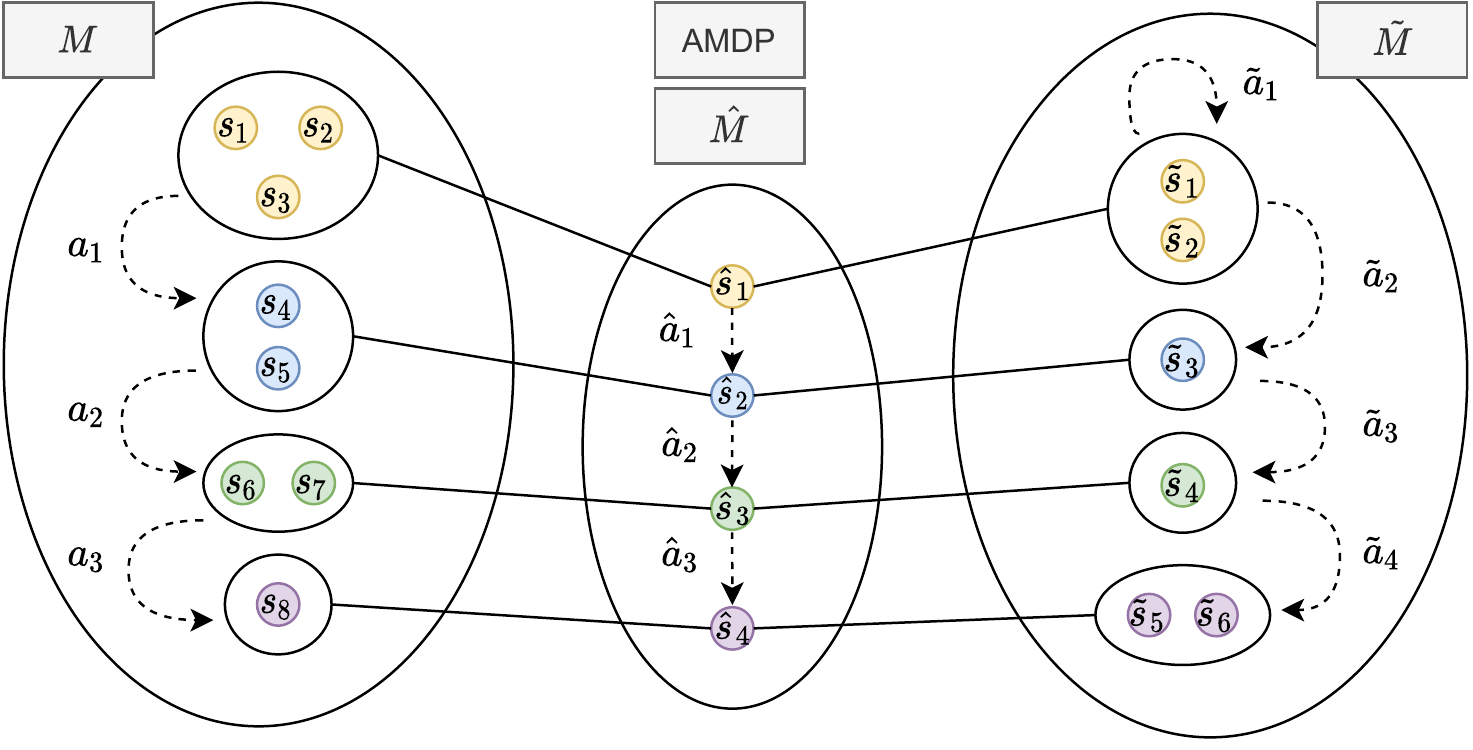} & \includegraphics[width=0.25\textwidth]{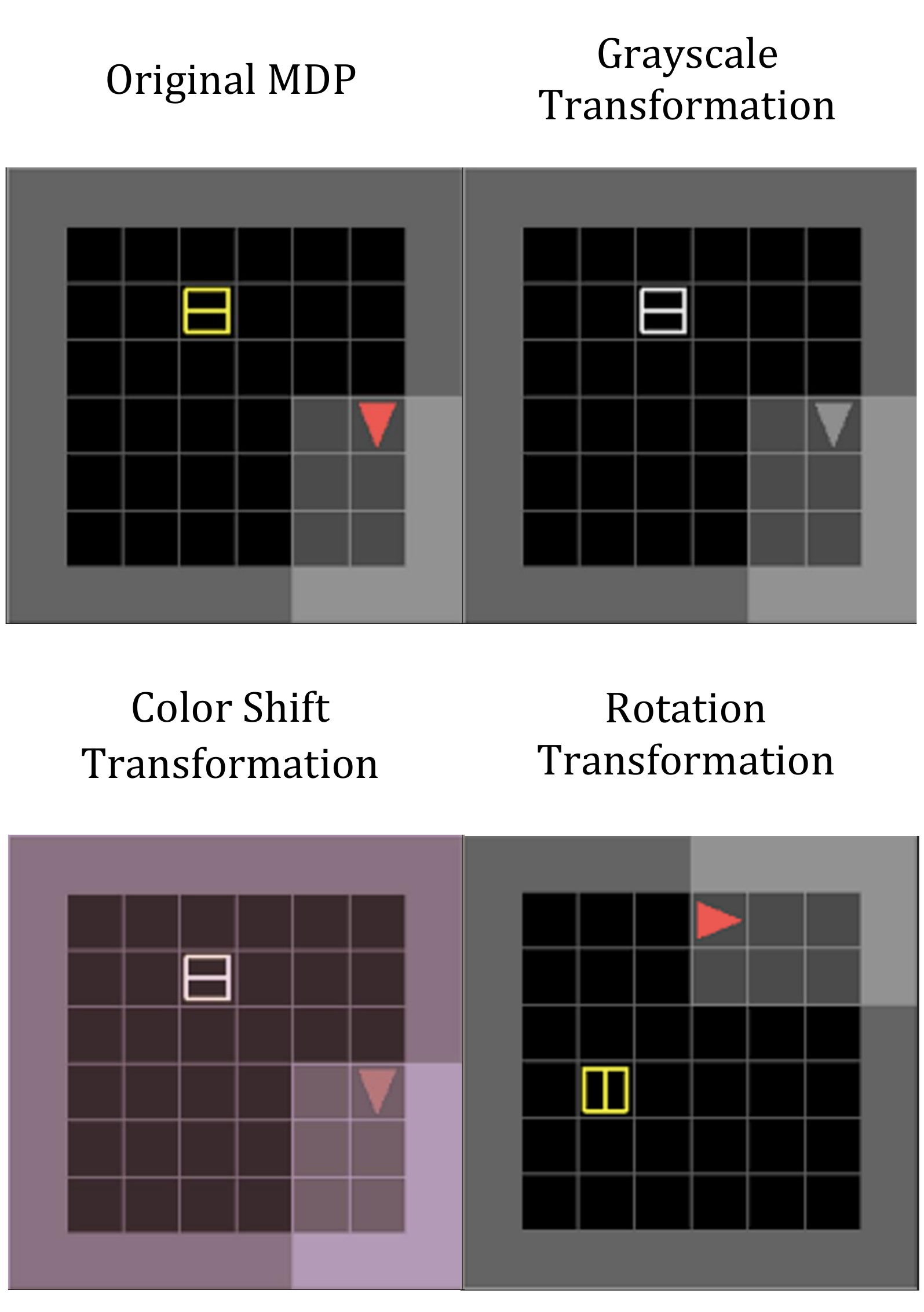} \\
\cr
\begin{tabular}[c]{@{}l@{}}(a)\end{tabular}
 & \begin{tabular}[c]{@{}l@{}}(b)\end{tabular}
\end{tabular}
\captionof{figure}{(a) Illustration of two homomorphic images of an abstract MDP (AMDP), (b) example of homomorphic transformations on \texttt{MiniGrid} \citep{Chevalier-Boisvert:18}. For further details see Sec.~\ref{sec:amdp_overview} in the Appendix.}
\label{fig:amdp_mapping}
\end{table}

We use Def.~\ref{def:amdp_definition} to derive an upper bound of the difference in transition-entropy for two homomorphic images of the same common AMDP. For brevity we write $H(p)$, $H(\tilde p)$ and $H(\hat p)$ for the transition-entropies induced by $\pi$, $\tilde \pi$ and $\hat \pi$ respectively.

\begin{theorem}
\label{th-max_abs_diff_ub} 
Given two homomorphic images $M$ and $\tilde M$ of a common AMDP $\hat{M}$ and their respective transition probabilities $p(s,a,r,s')$, $\tilde p(\tilde s,\tilde a, r,\tilde s')$ and $\hat{p}(\hat{s},\hat{a},r,\hat{s}')$ induced by corresponding policies $\pi$ and $\tilde \pi$ of $M$ and $\tilde M$ and the abstract policy $\hat \pi$ of $\hat M$ they map to.
The maximum absolute difference in transition-entropy is upper bounded by:
\begin{align}
|H(p) - H(\tilde p)| 
&\leq \max \Big [ H(p), H(\tilde p) \Big ] - H(\hat{p}) \label{form-max_abs_diff_ub}
\end{align}
\end{theorem}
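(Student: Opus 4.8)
The plan is to reduce the claimed bound to a single structural fact: the abstract transition distribution $\hat p$ is a \emph{coarsening} of both fine transition distributions $p$ and $\tilde p$, and coarsening can never increase Shannon entropy. The homomorphisms send a fine transition $(s,a,r,s')$ to the abstract transition $(\phi(s), \psi_s(a), r, \phi(s'))$, with the reward coordinate preserved by reward-consistency, so the abstraction partitions the fine transition space into fibers indexed by abstract transitions. First I would show that summing $p$ over each fiber returns exactly $\hat p$, and likewise for $\tilde p$, so that $\hat p$ is the pushforward of $p$ (and of $\tilde p$) under the respective abstraction map.

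The key lemma is then a grouping inequality: if $P$ is a distribution on a finite set and $\hat P(\hat x) = \sum_{x \mapsto \hat x} P(x)$ is its pushforward under a surjection, then $H(\hat P) \leq H(P)$. I would prove this with the chain rule for entropy,
\begin{equation*}
  H(P) = H(\hat P) + \sum_{\hat x} \hat P(\hat x)\, H\big( P(\,\cdot \mid \text{fiber } \hat x)\big),
\end{equation*}
noting that each conditional-entropy term is non-negative; equivalently it follows from the subadditivity of $\eta(t) = -t\log t$, i.e.\ $\eta\big(\sum_i t_i\big) \leq \sum_i \eta(t_i)$ for non-negative $t_i$. Applying the lemma to the two abstraction maps yields $H(\hat p) \leq H(p)$ and $H(\hat p) \leq H(\tilde p)$ at once.

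With both inequalities the theorem follows by a symmetric case split. Assume without loss of generality $H(p) \geq H(\tilde p)$, so $\max\big[H(p), H(\tilde p)\big] = H(p)$ and $|H(p) - H(\tilde p)| = H(p) - H(\tilde p)$. The target bound $H(p) - H(\tilde p) \leq H(p) - H(\hat p)$ is then equivalent to $H(\hat p) \leq H(\tilde p)$, which is precisely one of the coarsening inequalities; the reverse case $H(\tilde p) \geq H(p)$ uses $H(\hat p) \leq H(p)$ by symmetry.

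I expect the main obstacle to be the first step, namely verifying rigorously that $\hat p$ is the fiberwise sum of $p$. This is where the MDP-homomorphism machinery enters. I would factor $p(s,a,r,s') = p(r,s'\mid s,a)\,\rho_\pi(s,a)$ and use the transition-consistency condition $\hat p(r,\hat s'\mid \hat s,\hat a) = \sum_{s'\in\phi^{-1}(\hat s')} p(r,s'\mid s,a)$ (independent of the chosen representative $(s,a)$ in the preimage) together with the occupancy aggregation $\hat\rho_{\hat\pi}(\hat s,\hat a) = \sum_{(s,a)\mapsto(\hat s,\hat a)} \rho_\pi(s,a)$ that the corresponding policies and trajectory distributions $\cT, \hat\cT$ from Definition~\ref{def:amdp_definition} induce; both properties I would invoke from the abstraction assumptions in Appendix Sec.~\ref{sec:amdp_definition}. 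Summing the factored product over a fiber should then telescope into $\hat p(\hat s,\hat a,r,\hat s')$, with the bookkeeping over the per-state action maps $\psi_s$ being the part that needs the most care. Once this identification is in place, the entropy argument is entirely standard.
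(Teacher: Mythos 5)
Your proposal is correct and takes essentially the same route as the paper: the paper's appendix lemma establishes exactly your coarsening inequality $H(\hat p) \leq H(p)$ by asserting the same fiberwise mass-conservation identity and then invoking sub-additivity of $f(x) = -x\log(x)$ (one of the two equivalent arguments you give for the grouping inequality), and the theorem follows by combining $H(\hat p) \leq H(p)$ and $H(\hat p) \leq H(\tilde p)$ just as in your case split. Your added care in deriving the pushforward identity from the factorization $p(s,a,r,s') = p(r,s' \mid s,a)\,\rho_\pi(s,a)$ addresses a step the paper states without detailed verification, but the overall argument is the same.
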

\begin{proof}
We use the fact that homomorphic images have a transition-entropy greater or equal to the transition-entropy of the common AMDP. 
For a proof of this statement, see Sec.~\ref{app:derivation of measures} in the Appendix. 
Therefore, we use $H(p) \geq H(\hat{p})$ and $H(\tilde p) \geq H(\hat{p})$. 
Combining these inequalities leads to Eq.~\ref{form-max_abs_diff_ub}. 
\end{proof}

\begin{proposition}
\label{prop:1}
Any corresponding policies $\pi(a \mid s)$ and $\tilde \pi(\tilde a \mid \tilde s)$ of homomorphic images $M$ and $\tilde M$ of a common AMDP $\hat M$, that map to the same abstract policy $\hat \pi(\hat a \mid \hat s)$, exhibit the same expected return $g_\pi = g_{\tilde \pi}$. 
Furthermore, any corresponding trajectory distributions $\cT$ and $\tilde \cT$ of homomorphic images $M$ and $\tilde M$ of a common AMDP $\hat M$, that map to the same abstract trajectory distribution $\hat \cT$, exhibit the same expected trajectory return $g_\cT = g_{\tilde \cT}$. 
\end{proposition}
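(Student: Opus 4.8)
The plan is to route both claims through the common AMDP $\hat M$, exploiting the defining properties of an MDP homomorphism established in the appendix: that the abstraction maps preserve the per-step reward, so that the reward collected after taking $a$ in $s$ equals the reward collected after taking $\psi_s(a)$ in $\phi(s)$, and that they preserve the aggregated transition dynamics, so that $\hat p(\phi(s') \mid \phi(s), \psi_s(a))$ equals the total $p$-probability of landing in the abstraction class $\phi^{-1}(\phi(s'))$. Since $\pi$ and $\tilde \pi$ both lift from the same abstract policy $\hat \pi$ (and $\cT, \tilde \cT$ from the same abstract trajectory distribution $\hat \cT$), it suffices to show that each quantity equals its abstract counterpart; transitivity then yields $g_\pi = g_{\tilde \pi}$ and $g_\cT = g_{\tilde \cT}$.

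For the expected-return claim I would first establish the \emph{value-equivalence} identity $V^\pi(s) = \hat V^{\hat \pi}(\phi(s))$ for every $s \in \sS$, and analogously for $\tilde M$. The natural route is backward induction on the horizon (equivalently, identifying both sides as the unique fixed point of the respective Bellman operator): the immediate-reward terms agree by reward-invariance, while transition-invariance together with the inductive hypothesis, namely that the value is constant on each abstraction class, lets me rewrite the expected continuation value under $(p, \pi)$ as the expected continuation value under $(\hat p, \hat \pi)$. Averaging $V^\pi$ over the initial-state distribution (which corresponds to the abstract initial distribution) gives $g_\pi = g_{\hat \pi}$, and identically $g_{\tilde \pi} = g_{\hat \pi}$, so $g_\pi = g_{\tilde \pi}$.

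For the expected-trajectory-return claim I would argue directly by \emph{pushforward}. Each ground trajectory $\tau = (s_0, a_0, r_0, s_1, \dots)$ maps to the abstract trajectory $(\phi(s_0), \psi_{s_0}(a_0), r_0, \phi(s_1), \dots)$, which by reward-invariance carries the identical reward sequence and hence the identical discounted return $\sum_{t=0}^\infty \gamma^t r_t$. Because the map sends $\cT$ to $\hat \cT$ by assumption and the return depends only on the invariant reward sequence, a change of variables under the pushforward gives $g_\cT = \dE_{\hat \tau \sim \hat \cT}[\sum_{t=0}^\infty \gamma^t r_t] = g_{\hat \cT}$, and likewise $g_{\tilde \cT} = g_{\hat \cT}$, whence $g_\cT = g_{\tilde \cT}$. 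The interchange of expectation and the infinite sum is legitimate since $\gamma \in [0,1)$ and $\sR$ is finite, so the discounted return is bounded and the series converges absolutely.

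The main obstacle is the transition-invariance bookkeeping in the expected-return argument, in particular making the inductive step watertight when the action abstraction $\psi_s$ is state-dependent and several ground states collapse onto one abstract state: I must invoke the appendix conditions guaranteeing that $\hat p$ is well defined independently of the representative chosen within an abstraction class, and that the aggregation of $\pi$ into $\hat \pi$ is consistent across that class. Once reward- and transition-invariance are secured, the two claims reduce to essentially one computation; indeed the first is the special case of the second in which $\cT, \tilde \cT$ are the trajectory distributions induced by $\pi, \tilde \pi$, provided one checks, again via transition-invariance, that these induced distributions push forward to the distribution induced by $\hat \pi$.
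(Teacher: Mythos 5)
Your proof is correct, and it is considerably more explicit than the paper's, whose entire argument is the single sentence that Prop.~\ref{prop:1} ``follows from the conditions in Eq.~\ref{eq:amdp_return_1} and Eq.~\ref{eq:amdp_return_2}'' --- i.e., from reward-invariance alone, with the actual mechanism left implicit. Your second argument (the pushforward/change-of-variables computation for $g_\cT = g_{\tilde \cT}$, using that the trajectory map leaves the reward sequence untouched) is precisely that implicit mechanism spelled out, so there you coincide with the paper. Your first argument, however, takes a genuinely different route: rather than reducing the policy claim to the trajectory claim, you prove the statewise value equivalence $V^\pi(s) = \hat V^{\hat \pi}(\phi(s))$ by Bellman backward induction, which forces you to invoke the transition-invariance conditions and the aggregation property $\hat \pi(\hat a \mid \phi(s)) = \sum_{a \in \psi_s^{-1}(\hat a)} \pi(a \mid s)$ --- machinery the paper never mentions. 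This buys a strictly stronger conclusion (equality of value functions on every abstraction class, not merely of the scalar expected returns) and makes visible exactly where each homomorphism condition is needed, while the paper's implicit pushforward route is shorter and covers arbitrary, possibly non-Markovian, trajectory distributions uniformly. You also correctly flag a hypothesis the paper glosses over: for the expected-return claim one needs the initial-state distributions of $M$ and $\tilde M$ (equivalently, the induced trajectory distributions) to correspond to a common abstract one, since value equivalence alone does not pin down the scalar $g_\pi$ without it.
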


Prop.~\ref{prop:1} follows from the conditions in Eq.~\ref{eq:amdp_return_1} and Eq.~\ref{eq:amdp_return_2} for $M$ and $\tilde M$ respectively. 
For completeness we mention that for \emph{isomorphic} transformations of MDPs (i.e. under an MDP homomorphism with bijective abstraction functions), it follows directly that the measures discussed in Sec.~\ref{sec:exploration} and Sec.~\ref{sec:trajectory_quality} are preserved.

We conclude: 
(1) Policies, defined on homomorphic images of a common AMDP which map to the same abstract policy, yield the same expected return.
Trajectory distributions on homomorphic images that have a corresponding abstract trajectory distribution, yield the same expected trajectory return.
(2) From Eq.~\ref{form-max_abs_diff_ub} we can formally confirm the intuition about the stability of our exploration measures between homomorphic images of a common AMDP:
If the transition-entropy of the \emph{greatest} common AMDP of two corresponding MDPs increases, the difference in their transition-entropies decreases.

\subsection{Empirical Measures}
\label{derived-measures}

We aim to implement empirical measures that are applicable to a set of given datasets and correspond to the theoretical measures of exploration and exploitation of a policy introduced in Sec.~\ref{sec:exploration} and Sec.~\ref{sec:trajectory_quality}.
Furthermore, these measures are normalized to references, to allow for a comparison of datasets sampled from different MDPs.

\paragraph{SACo.} 

First, we implement an empirical measure that corresponds to the theoretical measure of exploration of a policy in a deterministic MDP, given by Eq.~\ref{eq:unified_det}.
One way to implement such a measure is the na\"{\i}ve entropy estimator of a dataset 
$\hat{H}(\cD)$
or its corresponding perplexity estimator
$\hat{PP}(\cD) = e^{\hat{H}(\cD)}$.
The entropy estimator is upper bounded by the logarithm of \emph{unique} state-action pairs in the dataset $\log(u_{s,a}(\cD))$, thus $\hat{H}(\cD) \leq \log(u_{s,a}(\cD))$.
It follows, that the perplexity estimator is upper-bounded by the number of unique state-action pairs in the dataset $\hat{PP}(\cD) \leq u_{s,a}(\cD)$.
We therefore base our empirical measure SACo on the unique state-action pairs $u_{s,a}(\cD)$, which has the benefit of being easy to interpret while corresponding to the exploration of the policy.
Additionally, $u_{s,a}(\cD)$ captures the notion of \textit{coverage of the state-action space} given a dataset, that is invoked in the Offline RL literature \citep{Fujimoto:19a, Agarwal:20, Kumar:20, Monier:20}.
A further theoretical analysis on the relation between the na\"{\i}ve entropy estimator and unique state-action count is given in Sec.\ref{sec:estimators} in the Appendix.
The unique state-action pairs are normalized with the unique state-action pairs of a \emph{reference dataset} $\cD_{\text{ref}}$ of the same MDP and the same dataset size.
Hence our empirical measure SACo is defined as
\begin{align}
    \mathit{SACo}(\cD) \coloneqq \frac{u_{s,a}(\cD)}{u_{s,a}(\cD_{\text{ref}})}.
    \label{eq:saco}
\end{align}
We use the replay dataset as the reference dataset $\cD_{\text{ref}}$ in our experiments, since it was collected throughout training of the online policy and is assumed to have the most diverse set of state-action pairs.
Counting unique state-action pairs of large datasets is often infeasible due to time and memory restrictions, or if datasets are distributed across different machines. 
We use HyperLogLog \citep{Flajolet:07} as a probabilistic counting method to facilitate the applicability of SACo to large-scale benchmarks (see Sec.~\ref{sec:count_unique_saco} in the Appendix for details).

\paragraph{TQ.} 

Second, we implement an empirical measure that estimates the expected trajectory return $g_\cT$
 given by Eq.~\ref{eq:trajectory_quality}
, which is a measure of how exploitative the behavioral policy is.
The expected trajectory return $g_\cT$ is estimated by the average return of the trajectories in the dataset $~{\bar g(\cD) = \frac{1}{B} \sum_{b=0}^B \sum_{t=0}^{T_b} \gamma^t r_{b, t}}$ for $B$ trajectories with their respective lengths $T_b$.
To allow comparisons across MDPs, we normalize the average trajectory return with the best and the worst behavior observed in the same MDP, which therefore corresponds to the quality of the trajectories relative to those.
We thus define our empirical measure TQ as the normalized average trajectory return:

\begin{equation}
    \mathit{TQ}(\cD) \coloneqq \frac{\bar g(\cD) - \bar g(\cD_{\text{min}})}{\bar g(\cD_{\text{expert}}) - \bar g(\cD_{\text{min}})},
     \label{eq:tq}
\end{equation}

where $\cD_{\text{min}}$ is a dataset collected by a minimal performant policy $\pi_{\text{min}}$ and $\cD_{\text{expert}}$ is a dataset collected by an expert policy $\pi_{\text{expert}}$.
Throughout our experiments, we chose the dataset sampled from the best policy found during online training as $\cD_{\text{expert}}$ and the dataset sampled from a random policy as $\cD_{\text{min}}$.

\section{Dataset Generation}
\label{subsec: data_in_orl}

In Offline RL,
dataset generation is neither harmonized, nor thoroughly investigated \citep{Riedmiller:21}.
It has been shown in \cite{Kumar:20} that the performance of Conservative Q-learning~(CQL)
improved by changing the behavioral policy of the underlying dataset. 
Similarly, in \cite{Gulcehre:21}
exchanging an expert with a noisy expert behavioral policy changed the performance of the compared algorithms.
Furthermore, there is no consensus of which behavioral policy is the most representative for generating datasets to compare the performance of algorithms.
\cite{Kumar:20} claims that datasets generated by multiple different behavioral policies fit a real world setting best.
Contrary to that, \cite{Fujimoto:19a},
claim that a single behavioral policy is a better fit. 
Thus, there is an ambiguity in the Offline RL literature
on what may be the correct data generation scheme 
to test Offline RL algorithms. 
We review which datasets are utilized in the literature to compare among algorithms, with the aim to represent the most prominent dataset creation schemes in our empirical evaluation.

\cite{Agarwal:20} test on a dataset which consists of all 
training samples seen during online training of a DQN agent. 
\cite{Fujimoto:19a} generate data using a 
trained policy with an exploration factor.
\cite{Fujimoto:19b} evaluates on multiple datasets,
which include a dataset consisting of
all transitions a RL algorithm samples during online training and
data generated using a trained policy. 
\cite{Gulcehre:21} uses the RL Unplugged dataset \citep{Gulcehre:20}, which consists of different datasets collected by various policies. 
\cite{Kumar:20} uses three datasets generated by using 
a random, expert and a mixture of expert and random policy, 
generated from multiple different policies.
\cite{Wang:20} use datasets generated from a pre-trained model based on transfer learning from human experts.

\subsection{Dataset Generation Schemes}
\label{sec:dataset_generation}

We generate data in five different settings: 1) \emph{random}, 2) \emph{expert}, 3) \emph{mixed} 4) \emph{noisy} and 5) \emph{replay}. 
These generation schemes are designed to systematically cover and extend prior settings from literature. 
For each of the datasets, we have collected a predefined number of samples by interacting with the respective environment (see Sec.~\ref{sec:experiments}). 
The number of samples in a dataset is determined by
the number of environment interactions that are necessary to obtain expert policies 
through an Online RL algorithm.
We use DQN \citep{Mnih:13} as a baseline for the Online RL algorithm, which serves 
as an expert behavioral policy to create and collect samples, as described below. 
Details on how the online policy was trained
are given in Sec.~\ref{subsec:online_training} in the Appendix. 

\begin{itemize}
    
\item \textbf{Random Dataset.} 
This dataset is sampled by a random behavioral policy. 
Such a dataset was used for evaluation in \cite{Kumar:20}. 
It serves as a na\"{\i}ve baseline for data collection.

\item \textbf{Expert Dataset.}
We trained an online policy until convergence and sampled with the final greedy expert policy.
Such datasets are used in \cite{Fu:21, Gulcehre:21, Kumar:20}.

\item \textbf{Mixed Dataset.} The mixed dataset is generated using a mixture
of the random dataset ($80\%$) and the expert dataset ($20\%$). 
This is similar to \cite{Fu:21, Gulcehre:21}, where they refer to such a dataset as \emph{medium-expert}.

\item \textbf{Noisy Dataset.} The noisy dataset is generated with an expert policy that selects the actions $\epsilon$-greedy with $\epsilon=0.2$. Creating a dataset from a fixed noisy policy is similar to the dataset creation process in \cite{Fujimoto:19a, Fujimoto:19b, Kumar:20, Gulcehre:21}.

\item \textbf{Replay Dataset.} This dataset is the collection of all samples generated by the online policy during training, thus, consists of data generated by multiple policies. Such a dataset was used in \cite{Agarwal:20, Fujimoto:19b}.

\end{itemize}

\subsection{Dataset Generation from a domain shift perspective}
\label{sec:dist_shift_rl_datasets}

Generating multiple datasets in the same MDP from different policies can also be interpreted as a domain shift between the distributions of the policies.
Therefore, we want to further analyze which domain shifts are possible when generating RL datasets.
The interaction of a policy $\pi$ in a certain MDP generates transitions $(s,a,r,s')$, with a probability $p_\pi(s,a,r,s)$.
We define a domain shift as a change of the distribution $p_\pi(s,a,r,s)$ to $\breve p_\pi(s,a,r,s)$, analogous to the definition of domain shift in the supervised learning setting \citep{Widmer1996, Gama2014, Webb2018, Wouter2018, Kouw2019, Khetarpal:20,  Adler:20}.
Four distinct sources of domain shift can be disentangled by applying the chain rule of conditional probability on the transition probability
$~{p_\pi(s,a,r,s) = p(r \mid s, a, s') \; p(s' \mid s, a) \; p_\pi(a \mid s) \; \rho_\pi(s)}$, where $\rho_\pi(s) = \sum_a \rho_\pi(s, a)$ is the state-occupancy.
This separation is not unique, but yields the policy and the conditional probabilities used in the definition of the MDP.
Note that we separated the dynamics $p(r, s' \mid s, a)$ into the reward-dynamics $p(r \mid s, a, s')$ and the state-dynamics $p(s' \mid s, a)$ to further disentangle possible reasons for a domain shift.
We consider the following types of domain shifts in RL:

\begin{itemize}
    \item \textbf{Reward-Dynamics shift:} $p(r \mid s, a, s')$ is changed to $\breve p(r \mid s, a, s')$, while the state-dynamics $p(s' \mid s, a)$, the policy $p_\pi(a \mid s)$ and the state-occupancy $\rho_\pi(s)$ stay the same.
    This changes the expected return under the policy, but its occupancy stays the same.
    \item \textbf{State-Dynamics shift:} $p(s' \mid s, a)$ is changed to $\breve p(s' \mid s, a)$, while the reward-dynamics $p(r \mid s, a, s')$ and the policy $p_\pi(a \mid s)$ stay the same.
    This changes the occupancy under the policy, as the same behavior will result in a different distribution of outcomes.
    Although not necessary, the expected return under the policy is likely to change under a different occupancy.
    \item \textbf{Policy shift:} $p_\pi(a \mid s)$ is changed to $\breve p_\pi(a \mid s)$ while the reward-dynamics $p(r \mid s, a, s')$ and the state-dynamics $p(s' \mid s, a)$ stay the same. 
    In general, RL datasets are not sampled from the same behavioral policy. Therefore, this domain shift is inherent to RL datasets.
\end{itemize}

The state-occupancy $\rho_\pi(s)$ depends on the state-dynamics $p(s' \mid s, a)$ and the policy $p_\pi(a \mid s)$ in a non-trivial way.
This dependence is seen when formulating the state distribution recursively as $\rho_\pi(s') = \sum_{s} \rho_\pi(s) \sum_a p_\pi(a \mid s) \; p(s' \mid s, a)$.
Note that a shift in the initial-state distribution or the absorbing-state distribution can also cause domain shifts of the state-occupancy.
Every combination of the discussed shifts can occur as well, where we refer to the co-occurrence of all types of shifts as general domain shift.
Note, that a state-dynamics and policy shift in general induce a change of the state probabilities, if their effects do not counterbalance each other.
Finally, we conclude that generating datasets with different policies generally results in a shift of the dataset distribution, which we also observe empirically
(see Sec.~\ref{sec:results_mdp_relaxations} in the Appendix).

\section{Experiments}
\label{sec:experiments}

We conducted our study on six different deterministic environments from multiple suites. 
These are two classic control environments from the OpenAI gym suite \citep{Brockman:16}, two MiniGrid \citep{Chevalier-Boisvert:18} and two MinAtar environments \citep{Young:19}. For the first two suites, $10^5$ samples were collected for every dataset, whereas $2 \cdot 10^6$ samples were collected for the MinAtar environments. 
Over all environments (six), different data generation schemes (five) and seeds (five), 
we generated a total number of 150 datasets.

We train nine different algorithms popular in the Offline RL literature 
including Behavioral Cloning~(BC) \citep{Pomerleau:91} and 
variants of DQN,  
Quantile-Regression DQN~(QRDQN) \citep{Dabney:17} and Random Ensemble Mixture~(REM) \citep{Agarwal:20}. 
Furthermore, Behavior Value Estimation~(BVE) \citep{Gulcehre:21} and Monte-Carlo Estimation~(MCE) are used. 
Finally, three widely popular Offline RL algorithms that constrain the learned policy to be near in distribution to the behavioral policy that created the dataset, Batch-Constrained Q-learning~(BCQ) \citep{Fujimoto:19b}, Conservative Q-learning~(CQL) \citep{Kumar:19} and Critic Regularized Regression~(CRR) \citep{Wang:20} are considered.
Details on specific implementations are given in Sec.~\ref{sec:algorithms} in the Appendix. 

The considered algorithms
were executed on each of the $150$ datasets 
for five different seeds. 
Details on online and offline training are given in Sec.~\ref{sec:impl_details}. 
We relate the performance of the best policies found during offline training
 to the best policy found during online training. 
The performance $\omega$ of the best policy found during offline training is given by $
    ~{\omega(\cD_{\text{offline}}) = \left(\bar g(\cD_{\text{offline}}) - \bar g(\cD_{\text{min}})\right) / \left(\bar g(\cD_{\text{expert}}) - \bar g(\cD_{\text{min}})\right)}
$.
Policies are evaluated in the environment 
after fixed intervals during offline training.
The trajectories sampled for the evaluation step yielding the highest average return represent the dataset $\cD_{\text{offline}}$.

\paragraph{Results.}

\begin{figure}
    \centering
    \includegraphics[width=0.9\textwidth]{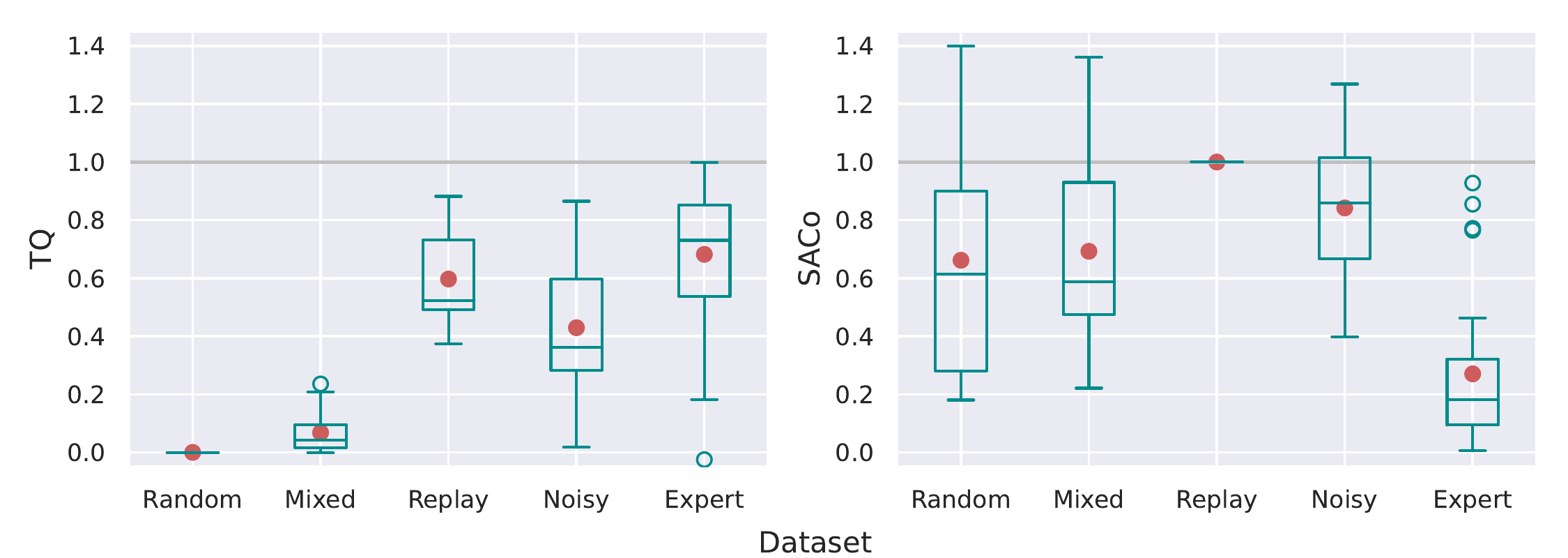}
    \caption{TQ and SACo over each dataset across dataset creation seeds and environments. Red dots represent mean values. The replay dataset exhibits a good balance between TQ and SACo, which is an explanation for the high performance across algorithms when using the replay dataset compared to other datasets.}
    \label{fig:datasets}
\end{figure}

\begin{figure}[h]
    \centering
    \includegraphics[width=\textwidth]{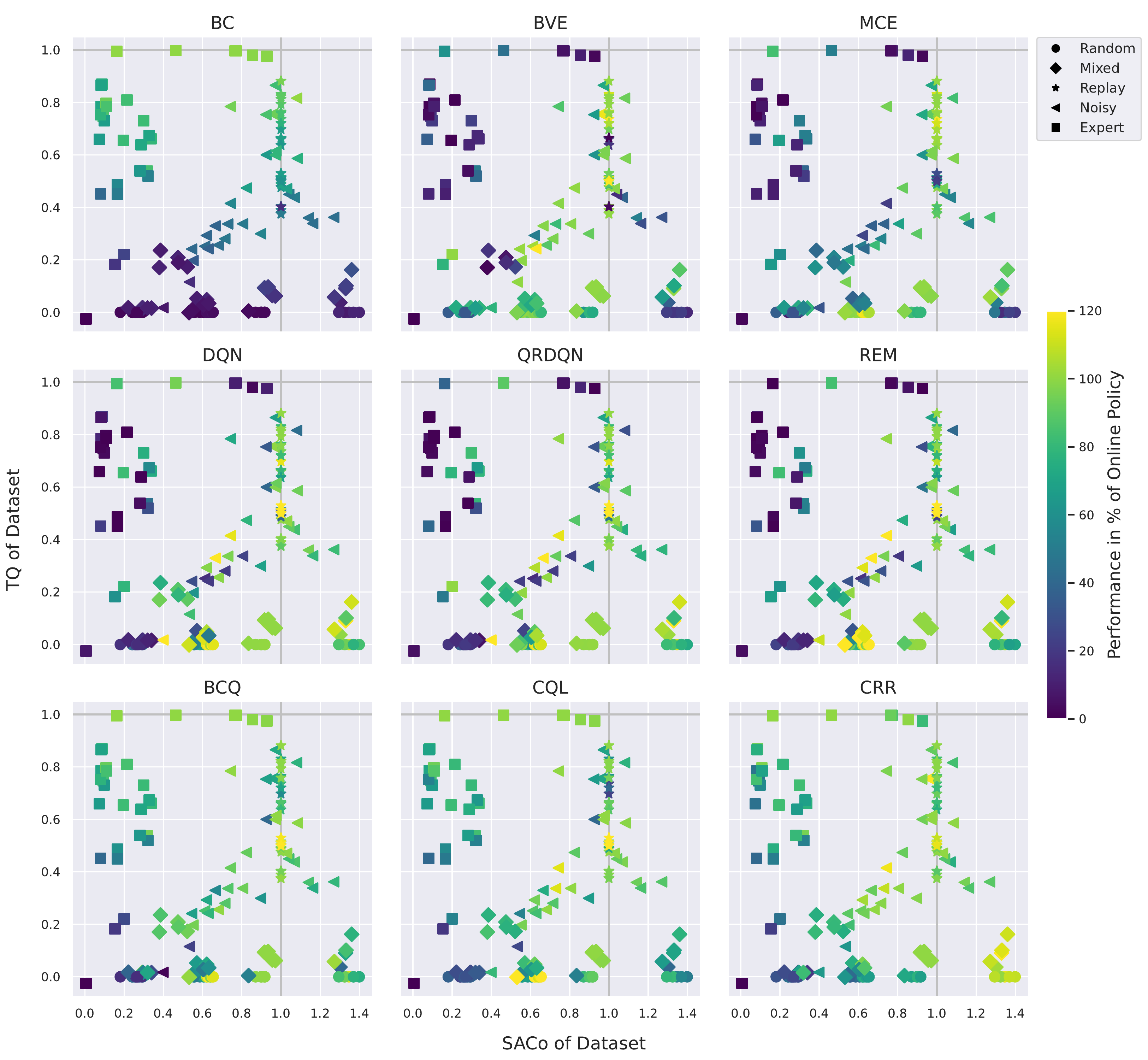}
    \caption{We characterize datasets using TQ and SACo, thus every point denotes one of the 150 datasets created for our evaluation. The position of points is thus the same in each subplot, as we trained every algorithm on every dataset. The performance of the best policy found during offline training on a dataset is denoted by the color of each individual point. We see: a) BC improves as TQ increases b) DQN variants (middle row) require high SACo to perform well c) Algorithms which constrain the learned policy towards the distribution of the behavioral policy (bottom row) perform well across datasets if they exhibit high TQ or SACo or both.}
    \label{fig:algos_all}
\end{figure}

We aim to analyze our experiments through the lens of dataset characteristics.
Fig.~\ref{fig:datasets} shows the TQ and the SACo of the gathered datasets, across dataset creation seeds and environments.
Random and mixed datasets exhibit low TQ, while expert data has the highest TQ on average.
In contrast, expert data exhibits low SACo on average, whereas random and mixed datasets attain higher values. 
The replay dataset provides a good balance between TQ and SACo. 
Fig.~\ref{fig:res:projections} in the Appendix visualizes how generating the dataset influences the covered state-action space.
In Fig.~\ref{fig:algos_all},
we characterize the generated datasets using TQ and SACo.
Each point represents a dataset, and each subplot contains the same datasets.
The performance of the best policy found during offline training using a specific dataset is denoted by the color of the respective point.

These results indicate that algorithms of the DQN family (DQN, QRDQN, REM) 
need datasets with high SACo to find a good policy. 
Furthermore, it was found that BC works well only if datasets have high TQ,
which is expected as it imitates behavior observed in the dataset. 
BVE and MCE were found to be very sensitive to the specific environment and dataset setting, favoring datasets with high SACo.
These algorithms are unconstrained in their final policy improvement step, but approximate the action-values of the behavioral policy instead of the optimal policy as in DQN.
Thus, they may not leverage datasets with high SACo as well as algorithms from the DQN family, but encounter the same limitations for datasets with low SACo.
BCQ, CQL and CRR enforce explicit or implicit constraints on the learned policy to be close to the distribution of the behavioral policy.
These algorithms were found to outperform algorithms of the DQN family on average, but especially for datasets with low SACo and high TQ. 
Furthermore, algorithms with constraints towards the behavioral policy were found to perform well if datasets exhibit high TQ or SACo or moderate values of TQ and SACo.

We observe from Fig.~\ref{fig:algos_all}, that algorithms that are closely related (e.g, DQN family, second row)
perform similarly when trained on similar datasets in terms of TQ and SACo
, which is an interesting property to investigate in future work. 
Scatterplots between pairs of TQ, SACo and the performance are given in Fig.~\ref{fig:ap_tq} and Fig.~\ref{fig:ap_saco} in the Appendix.
All scores for all environments and algorithms over datasets are given in Sec.~\ref{sec:perf_offline_algo} in the Appendix. 
We also analyzed these experiments with a logarithmic version of SACo (logarithmic in the nominator and denominator) in Sec.~\ref{app:lsaco} in the Appendix.
Furthermore, we analyzed these experiments using na\"{\i}ve entropy estimators instead of SACo in Sec.~\ref{sec:naive_emp} in the Appendix.
We found that the presented version with SACo yields the visually and conceptually simplest interpretation for our experiments, while all versions yield qualitatively similar results.

\section{Discussion}
\label{sec:analysis}

\paragraph{Limitations.}

We conducted our main experiments on discrete-action environments and showed initial results on continuous-action environments in Sec.~\ref{mujoco} in the Appendix.
Nevertheless, future work would need to investigate the effects of dataset characteristics for continuous action-spaces on a wider range of different tasks, algorithms and data collection schemes as done for discrete action-spaces.
Furthermore, this study is limited to model-free algorithms, while model-based algorithms have recently been reported to achieve superior results in complex continuous control industrial benchmarks \citep{Swazinna:22}.
For computational reasons, we limited this study to only use the off-policy algorithm DQN as an online policy. 
A comparison to an on-policy method as behavioral policy would be of interest, as well as exploration algorithms to generate even more diverse datasets; e.g. using generative flow networks \citep{Bengio:21a, Bengio:21b}.
Counting unique state-action pairs as an empirical exploration measure is simple and interpretable, but explicit entropy estimation would be especially beneficial on continuous state and action spaces.
Lastly, although we did define different types of domain shifts, the definition and analysis of an appropriate domain shift measure is beyond the scope of this work.

\paragraph{Conclusion.}

In this study, we derived two theoretical measures, the transition-entropy corresponding to explorativeness of a policy and the expected trajectory return, corresponding to the exploitativeness of a policy.
Furthermore, we analyzed stability traits of these measures under MDP homomorphisms.
Moreover, we implemented two empirical measures, TQ and SACo, which correspond to the expected trajectory return and the transition-entropy in deterministic MDPs.
We generated 150 datasets using six environments and five dataset sampling strategies over five seeds.
On these datasets, the performance of nine model-free algorithms was evaluated over independent runs, resulting in 6750 trained offline policies. 
The performance of the offline algorithms shows clear correlations with the exploratory and exploitative characteristics of the dataset, measured by TQ and SACo.
We found, that popular algorithms in Offline RL are strongly influenced by the characteristics of the dataset and the average performance across different datasets might not be enough for a fair comparison.
Our study thus provides a blueprint to characterize 
Offline RL datasets and understanding their effect on algorithms.

\paragraph{Acknowledgements}

The ELLIS Unit Linz, the LIT AI Lab, the Institute for Machine Learning, are supported by the Federal State Upper Austria. IARAI is supported by Here Technologies. We thank the projects AI-MOTION (LIT-2018-6-YOU-212), AI-SNN (LIT-2018-6-YOU-214), DeepFlood (LIT-2019-8-YOU-213), Medical Cognitive Computing Center (MC3), INCONTROL-RL (FFG-881064), PRIMAL (FFG-873979), S3AI (FFG-872172), DL for GranularFlow (FFG-871302), AIRI FG 9-N (FWF-36284, FWF-36235), ELISE (H2020-ICT-2019-3 ID: 951847). We thank Audi.JKU Deep Learning Center, TGW LOGISTICS GROUP GMBH, Silicon Austria Labs (SAL), FILL Gesellschaft mbH, Anyline GmbH, Google, ZF Friedrichshafen AG, Robert Bosch GmbH, UCB Biopharma SRL, Merck Healthcare KGaA, Verbund AG, Software Competence Center Hagenberg GmbH, T\"{U}V Austria, Frauscher Sensonic, AI~Austria~Reinforcement~Learning~Community, and the NVIDIA Corporation.

\bibliography{bibliography}
\bibliographystyle{plainnat}

\newpage
\onecolumn
\appendix

\section{Appendix}
\resumetocwriting

\setcounter{tocdepth}{3}
\setcounter{secnumdepth}{4}
\renewcommand\thefigure{\thesection.\arabic{figure}}    
\setcounter{figure}{0}    
\renewcommand\thetable{\thesection.\arabic{table}}    
\setcounter{table}{0}

\renewcommand{\contentsname}{Contents of the appendix}
\tableofcontents

\clearpage

\subsection{Explorativeness of policies in stochastic and deterministic MDPs}
\label{appendic_subsec_explorativeness}

In the following, we introduce a measure to quantify how well a given policy $\pi$ explores a given MDP on expectation.

\subsubsection{Explorativeness as expected information}

We consider an MDP with finite state, action and reward spaces throughout the following analysis.
We define \emph{explorativeness} of a policy $\pi$ as the expected information of the interaction between the policy and a given MDP.
In information theory, the expected information content of a measurement of a random variable $X$ is defined as its Shannon entropy, given by $~{H(X) \coloneqq - \sum_i p(x_i) \log (p(x_i))}$  \citep{Shannon:48}.
Corresponding to this, we define the expected information about policy MDP interactions through the transitions $(s,a,r,s')$, observed according to the transition probability $p_\pi(s,a,r,s')$ by a policy $\pi$ interacting with the environment.
We want to explicitly stress the interconnection of the policy and the MDP as a single transition generating process.
Without a given MDP, explorativeness of a policy can not be defined in this framework.
The expected information is thus given by the transition-entropy

\begin{equation}
    H(p_\pi(s,a,r,s')) \coloneqq - \sum_{\substack{s, a, r, s'\\p_\pi(s,a,r,s')>0}} p_\pi(s,a,r,s') \log(p_\pi(s,a,r,s')).
\end{equation}

To illustrate how $p_\pi(s, a,r, s')$ is influenced by the policy and the MDP dynamics, we rewrite the transition probability as $p_\pi(s, a,r, s') = p(r,s' \mid s,a) \; p_\pi(s,a)$. 
The dynamics $p(r,s' \mid s,a)$ depend solely on the MDP, while $p_\pi(s,a)$ is steered by the policy $\pi$.
The state-action probability $p_\pi(s,a)$ is often referred to as the occupancy measure $\rho^h_\pi(s, a) = \dP_\pi[s_h=s, a_h=a]$ \citep{Neu:20} induced by the policy $\pi$, where $h$ denotes the stage to specify changes of the dynamics. 
In the following, we only consider episodes consisting of a single stage and drop the index $h$ accordingly, but the following derivations would extend to episodes with multiple stages.
Consequently, $\rho_\pi(s,a)$ is used instead of $p_\pi(s,a)$ to emphasize that the state-action probability under a policy is the occupancy under this policy.

The occupancy depends not only on the policy alone, but also on the MDP dynamics.
To illustrate this fact, consider that the occupancy of a policy can be rewritten as $\rho_\pi(s,a) = p_\pi(a \mid s) \; p_\pi(s)$, thus the probability that a policy selects an action given a state $p_\pi(a \mid s)$, times the probability of being in a certain state under that policy $p_\pi(s)$.
The probability of being in a certain state can be recursively defined as $~{p_\pi(s') = \sum_s p_\pi(s) \sum_a p_\pi(a \mid s) \; p(s' \mid s, a)}$ via the MDP state-dynamics $p(s' \mid s, a)$.

Using the above, the transition-entropy can be further decomposed:
\begin{align}
 H&(p_\pi(s,a,r,s')) \nonumber\\
    &= - \sum\limits_{\substack{s,a,r,s'\\p_\pi(s,a,r,s') > 0}} p_\pi(s,a,r,s') \log(p_\pi(s,a,r,s')) \nonumber\\
    &= - \sum\limits_{\substack{s,a,r,s'\\p_\pi(s,a,r,s') > 0}} p(r,s' \mid s,a) \; \rho_\pi(s,a) \log(p(r,s' \mid s, a) \; \rho_\pi(s,a))\nonumber\\
    &= - \sum\limits_{\substack{s,a,r,s'\\p_\pi(s,a,r,s') > 0}} \rho_\pi(s,a) \; p(r,s' \mid s,a) \log(p(r,s' \mid s, a)) - \sum\limits_{\substack{s,a,r,s'\\\rho_\pi(s,a) > 0}} p(r,s' \mid s,a) \; \rho_\pi(s,a)  \log(\rho_\pi(s,a))\nonumber\\
    &= - \sum\limits_{\substack{s,a}} \rho_\pi(s,a) \sum\limits_{\substack{r,s'\\p_\pi(s,a,r,s') > 0}} p(r,s' \mid s, a) \; \log(p(r,s' \mid s, a)) - \sum\limits_{\substack{s,a,r,s'\\\rho_\pi(s,a) > 0}} p(r,s' \mid s,a) \; \rho_\pi(s,a) \log(\rho_\pi(s,a))\nonumber\\
    &= \sum\limits_{\substack{s,a}} \rho_\pi(s,a)  \; H(p(r,s' \mid s, a)) - \sum\limits_{\substack{s,a\\\rho_\pi(s,a) > 0}} \rho_\pi(s,a) \log(\rho_\pi(s,a)) \sum\limits_{r,s'} p(r,s' \mid s,a)\nonumber\\
    &= \sum\limits_{\substack{s,a}} \rho_\pi(s,a)  \; H(p(r,s' \mid s, a)) - \sum\limits_{\substack{s,a\\\rho_\pi(s,a) > 0}} \rho_\pi(s,a) \log(\rho_\pi(s,a))\nonumber\\
    &= \sum\limits_{\substack{s,a}} \rho_\pi(s,a) \; H(p(r,s' \mid s, a)) + H(\rho_\pi(s, a)).
    \label{eq:unified}
\end{align}

The transition-entropy thus equals the occupancy weighted sum of dynamics-entropies $H(p(r,s' \mid s,a))$ under every visitable state-action pair plus the occupancy-entropy $H(\rho_\pi(s, a))$ under the policy.

To maximize the transition-entropy, thus explore more on expectation, a tradeoff between two options is encountered, assuming a fixed visited state-action support under any candidate policy.
First, the occupancy can be distributed such that state-action pairs where transitions are more stochastic, and thus having high dynamics-entropy, are visited more often.
Second, the occupancy can be distributed evenly among all state-action pairs to have a high occupancy-entropy.
However, the transition-entropy is not straightforward to optimize without further assumptions, as there is a strong interplay between the policy and the MDP dynamics, which does not allow for smooth changes in the occupancy.

Furthermore, the transition-entropy generally increases if more state-action pairs are visitable under a policy.
Note that this is only strictly true, if additional assumptions on the distribution of MDP dynamics and occupancies under two policies that are compared are introduced.
This holds for instance for the upper bound on the entropy, where possible outcomes follow a uniform distribution.
The entropy of a random variable $X$ that follows a uniform distribution is $~{H(X) = \log(N)}$, which grows logarithmically by the number of possible outcomes $N$.

To summarize, the interaction of a policy in an MDP induces high transition-entropy if a large state-action support is occupied, more stochastic transitions are visited more frequently and all other possible transitions are visited evenly.

\subsubsection{Deterministic MDPs}

Deterministic MDPs differ in the sense that all information about the dynamics of a specific transition $~{p(r, s' \mid s, a)}$ is obtained the first time this transition is observed.
Note that for any deterministic MDP, there exists a function $~{\dT: \cS \times \cA \rightarrow \cR \times \cS; (s, a) \mapsto (r,s')}$ that maps state-action pairs to a reward and next state.
The dynamics of a deterministic MDP can thus be written as

\begin{equation}
    p_{\text{det}}(r,s' \mid s,a) \coloneqq 
    \begin{cases}
    1 & \text{if } (r,s') = \dT(s,a)\\
    0 & \text{otherwise.}\\
    \end{cases}
    \label{eq:det_dynamics}
\end{equation}

Starting from the general result of Eq.~\ref{eq:unified} and inserting the definition of deterministic dynamics from Eq. \ref{eq:det_dynamics}, the expected information about the dynamics in a deterministic MDP is given by 

\begin{align}
 H(p_\pi(s,a,r,s'))
    &= \sum\limits_{\substack{s,a}} \rho_\pi(s,a) \; H(p_{\text{det}}(r,s' \mid s, a)) + H(\rho_\pi(s,a)) \nonumber\\
    &= - \sum\limits_{\substack{s,a}} \rho_\pi(s,a)  \;  \sum\limits_{\substack{r, s'\\(r,s') = \dT(s,a)}} p_{\text{det}}(r,s'\mid s,a) \log(p_{\text{det}}(r,s'\mid s,a)) + H(\rho_\pi(s,a))\nonumber\\
    &= - \sum\limits_{\substack{s,a}} \rho_\pi(s,a)  \; 1 \log(1) + H(\rho_\pi(s,a))\nonumber\\
    &= H(\rho_\pi(s,a)).
\label{eq:exp_info_det}
\end{align}

The transition-entropy thus reduces to the occupancy-entropy under the policy for deterministic dynamics.
Therefore, the more uniform the policy visits the state-action space, the higher the transition-entropy and thus the more explorative the policy is on expectation.
Additionally, having a large state-action support is still beneficial to increase the transition-entropy as it is for stochastic MDPs.

\clearpage
\subsection{Bias of Entropy Estimators}\label{sec:estimators}

We discussed two possible estimators for the occupancy-entropy, which we want to characterize further in this section.
The first is the na\"{\i}ve entropy estimator for a dataset $\hat H(\cD) = - \sum_{i=1}^K \hat p_i \log(\hat p_i)$ with $\hat p_i = n_{s,a} / N$, where $n_{s,a}$ is the count of a specific state-action pairs $(s,a)$ in the dataset of size $N$.
The second is the upper bound of the na\"{\i}ve entropy estimator, $\log(u_{s,a}(\cD))$, which is a non-consistent estimator of $H(\rho_\pi)$.
We start with from the bias analysis of the na\"{\i}ve entropy estimator $\hat H(\cD)$ given by \cite{Basharin:59} and extended to the second order by \cite{Harris:75}:

\begin{align}
    H(\rho_\pi) - \dE[\hat H(\cD)] &= \frac{K-1}{2N} + \frac{1}{12 N^2} \left(\sum_k^K \frac{1}{p_k} - 1 \right) + \cO(N^{-3})
\end{align}

where $K$ denotes the number of visitable state-action pairs under the policy, thus $p_k = \rho_\pi(s,a)$ for the $k$-th visitable state-action pair.
For brevity, let

\begin{align}
    Z  &\coloneqq \frac{K-1}{2N} + \frac{1}{12 N^2} \left(\sum_k^K \frac{1}{p_k} - 1 \right) + \cO(N^{-3}) 
    \label{eq:bias}
\end{align}

We can rewrite Eq.~\ref{eq:bias} to arrive at an expression of the bias of $\log(u_{s,a}(\cD))$:

\begin{align}
    H(\rho_\pi) - \dE[\hat H(\cD)] &= Z \nonumber\\
    H(\rho_\pi) &= Z + \dE[\hat H(\cD)] \nonumber\\
    H(\rho_\pi) - \dE[\log(u_{s,a}(\cD))] &= Z + \dE[\hat H(\cD)] - \dE[\log(u_{s,a}(\cD))]
\end{align}

We use the fact that:

\begin{equation}
    0 \geq \dE[\hat H(\cD)] - \dE[\log(u_{s,a}(\cD))] \geq - \log(N)
\end{equation}

and arrive at:

\begin{equation}
    Z \geq H(\rho_\pi) - \dE[\log(u_{s,a}(\cD))] \geq Z - \log(N) \geq \frac{K-1}{2N} - \log(N)
\end{equation}

The bias of $\log(u_{s,a}(\cD))$ is thus smaller than the na\"{\i}ve entropy estimator $\hat H(\cD)$ as long as $\frac{K-1}{2N} - \log(N) \geq 0$.
This bias will not become negative, which would result in $\log(u_{s,a}(\cD))$ overestimating $H(\rho_\pi)$, if the following holds:

\begin{align}
    \frac{K-1}{2N} - \log(N) &\geq 0 \nonumber\\
    K &\geq 2N \log(N) + 1 .
\end{align}

Therefore, as long as $K \geq 2N \log(N) + 1$, the estimator $\log(u_{s,a}(\cD))$ is equally or less biased than the na\"{i}ve entropy estimator $\hat H(\cD)$.

\clearpage
\subsection{AMDP Definition}
\label{sec:amdp_definition}

In the following paragraphs, we extend the details for Definition~\ref{def:amdp_definition}:

\begin{definition*}
Given two MDPs $M = (\sS, \sA, \sR, p, \gamma)$ and $\tilde M = (\tilde \sS, \tilde \sA, \tilde \sR, \tilde p, \gamma)$, we assume there exists a common \textit{abstract MDP} (AMDP) $\hat{M} = (\hat{\sS}, \hat{\sA}, \hat{\sR}, \hat{p}, \gamma)$ (see Fig.~\ref{fig:amdp_mapping}), whereas $M$ and $\tilde M$ are homomorphic images of $\hat{M}$.
We base the definition of the AMDP on prior work from \cite{Sutton:99smdp, Jong:05, Li06mdp, Abel:2019, vanderPol:20, Abel:2022} by considering state and action abstractions. 
We define an MDP homomorphism by the surjective abstraction functions as $\phi : \sS \rightarrow \hat{\sS}$ and $\tilde \phi : \tilde \sS \rightarrow \hat{\sS}$, with $\phi(s), \tilde \phi(\tilde s) \in \hat{\sS}$ for the state abstractions and $\{ \psi_s : \sA \rightarrow \hat{\sA} \mid s \in \sS \}$ and $\{ \tilde \psi_{\tilde s} : \tilde \sA \rightarrow \hat{\sA} \mid \tilde s \in \tilde \sS \}$, with $\psi_s(a), \tilde \psi_{\tilde s}(\tilde a) \in \hat{\sA}$ for the action abstractions.

The inverse images $\phi^{-1}(\hat{s})$ with $\hat{s} \in \hat{\sS}$, and $\psi_s^{-1}(\hat{a})$ with $\hat{a} \in \hat{\sA}$, are the set of ground states and actions that correspond to $\hat{s}$ and $\hat{a}$, under abstraction function $\phi$ and $\psi_s$ respectively. 
Under these assumptions, $\{\phi^{-1}(\hat{s}) \mid \hat{s} \in \hat{\sS}\}$ and $\{\psi_s^{-1}(\hat{a}) \mid \hat{a} \in \hat{\sA}\}$ partition the ground state $\sS$ and ground action $\sA$. The inverse mappings hold equivalently for $\tilde \phi^{-1}(\hat s)$, $\tilde \psi_{\tilde s}^{-1}(\hat s)$.
\end{definition*}

The mappings are built to satisfy the following conditions:

\begin{align}
    \hat{p}(r \mid \phi(s), \psi_s(a), \phi(s')) 
    &\overset{\Delta}{=}
    p(r \mid s,a,s') 
    &\forall s, s' \in \sS, a \in \sA \label{eq:amdp_return_1}\\
    \hat{p}(r \mid \tilde \phi(\tilde s), \tilde \psi_{\tilde s}(\tilde a), \tilde \phi(\tilde s')) 
    &\overset{\Delta}{=} 
    \tilde p(r \mid \tilde s,\tilde a,\tilde s') 
    &\forall \tilde s, \tilde s' \in \tilde \sS, \tilde a \in \tilde \sA \label{eq:amdp_return_2}\\
    \hat{p}(\phi(s') \mid \phi(s), \psi_s(a)) 
    &\overset{\Delta}{=}
    \sum_{\bar s' \in \phi^{-1}(\phi(s'))} p(\bar s' \mid s, a) &\forall  s, s' \in \sS, a \in \sA\\
    \hat{p}(\tilde \phi(\tilde s') \mid \tilde \phi(\tilde s), \tilde \psi_{\tilde s}(\tilde a)) 
    &\overset{\Delta}{=}
    \sum_{\bar s' \in \tilde \phi^{-1}(\tilde \phi(\tilde s'))} \tilde p(\bar s' \mid \tilde s, \tilde a) 
    &\forall \tilde s, \tilde s' \in \tilde \sS, \tilde a \in \tilde \sA
\end{align}

Note that $\phi$ and $\psi_s$ are surjective functions, therefore expressions such as $\phi^{-1}(\phi(s'))$ return a set.

\subsection{Bound transition-entropies of homomorphic images}
\label{th-occ_entr_lb} 
\label{app:derivation of measures}

In the following, we give the missing proof of Theorem \ref{th-max_abs_diff_ub} in Sec.~\ref{common_structures_of_MDPs}.

\begin{lemma}
Let $\pi(a \mid s)$ be a policy of a homomorphic image $M$ of an AMDP $\hat{M}$ with corresponding abstract policy $\hat{\pi}(\hat{a} \mid \hat{s})$.
Let $p(s,a,r,s')$ and $\hat{p}(s,a,r,s')$ be the transition probabilities induced by $\pi(a \mid s)$ and $\hat{\pi}(\hat{a} \mid \hat{s})$ respectively.
The transition-entropy of the common AMDP $H(\hat{p}(\hat{s}, \hat{a}, r, \hat{s}'))$ provides a lower-bound to $H(p(s,a,r,s'))$.
\end{lemma}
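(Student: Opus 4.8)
The plan is to recognize the abstract transition distribution $\hat p$ as a deterministic coarse-graining of the ground distribution $p$, and then invoke the standard information-theoretic fact that entropy cannot increase under such aggregation, i.e. $H(f(X)) \leq H(X)$ for any function $f$. Concretely, I would introduce the abstraction map on transitions $f(s,a,r,s') \coloneqq (\phi(s), \psi_s(a), r, \phi(s'))$, which sends each ground transition to its abstract counterpart while leaving the reward untouched, and show that $\hat p$ is exactly the pushforward (image measure) of $p$ under $f$.

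First I would establish the aggregation identity
\begin{equation}
\hat p(\hat s, \hat a, r, \hat s') = \sum_{s \in \phi^{-1}(\hat s)} \sum_{a \in \psi_s^{-1}(\hat a)} \sum_{s' \in \phi^{-1}(\hat s')} p(s, a, r, s').
\label{eq:lemma-agg}
\end{equation}
To do this, I would factor the ground transition probability as $p(s,a,r,s') = p(r \mid s,a,s')\, p(s' \mid s,a)\, \rho_\pi(s,a)$ and substitute into the right-hand side of Eq.~\ref{eq:lemma-agg}. The reward condition (Eq.~\ref{eq:amdp_return_1}) makes $p(r \mid s,a,s')$ constant on each preimage block and equal to $\hat p(r \mid \hat s, \hat a, \hat s')$, so it factors out; the state-transition aggregation condition collapses $\sum_{s' \in \phi^{-1}(\hat s')} p(s' \mid s,a)$ to $\hat p(\hat s' \mid \hat s, \hat a)$; and the correspondence of $\pi$ and $\hat \pi$ yields the occupancy aggregation $\sum_{s \in \phi^{-1}(\hat s)} \sum_{a \in \psi_s^{-1}(\hat a)} \rho_\pi(s,a) = \hat\rho_{\hat\pi}(\hat s, \hat a)$. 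Recombining the three factors as $\hat p(r, \hat s' \mid \hat s, \hat a)\, \hat\rho_{\hat\pi}(\hat s, \hat a)$ recovers the left-hand side of Eq.~\ref{eq:lemma-agg}.

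With the pushforward identity in hand, the entropy bound follows from a block-wise argument. The sets $\{(s,a,r,s') : f(s,a,r,s') = (\hat s, \hat a, r, \hat s')\}$ partition the ground transition support, and each $p(s,a,r,s')$ is at most the block sum $\hat p(\hat s, \hat a, r, \hat s')$, so $-\log p(s,a,r,s') \geq -\log \hat p(\hat s, \hat a, r, \hat s')$ for every ground transition in the block. Weighting by $p(s,a,r,s')$ and summing within the block gives $-\sum_{\text{block}} p \log p \geq -\hat p \log \hat p$; summing over all blocks yields $H(p) \geq H(\hat p)$, which is the claim.

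The main obstacle will be the aggregation identity Eq.~\ref{eq:lemma-agg}, and in particular justifying the occupancy aggregation $\sum \rho_\pi = \hat\rho_{\hat\pi}$ rigorously from the definition of corresponding policies together with the homomorphism conditions on the dynamics; the entropy inequality itself is a routine consequence of the fact that grouping probability mass can only decrease Shannon entropy (equivalently, that $-\log$ is decreasing). I would also need to handle the state-dependence of the action abstraction $\psi_s$ carefully, since the inner action sum ranges over $\psi_s^{-1}(\hat a)$ and therefore depends on the particular ground state $s$ being summed over.
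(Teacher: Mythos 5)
Your proposal is correct and takes essentially the same route as the paper's proof: both establish that $\hat{p}(\hat{s},\hat{a},r,\hat{s}')$ aggregates the probability mass of $p(s,a,r,s')$ over the preimage blocks of the abstraction maps, and then argue block-by-block that merging mass cannot increase entropy, where your monotonicity-of-$(-\log)$ argument is precisely the direct proof of the sub-additivity of $x \mapsto -x\log(x)$ that the paper invokes via concavity. The additional detail you propose for the aggregation identity (factoring $p$ and using the reward, state-transition, and occupancy conditions of the homomorphism) fills in a step the paper merely asserts, rather than constituting a different approach.
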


\begin{proof}
We start with an intuition. The space states, actions and next-states of $M$ can be partitioned in a way, such that each partition maps to a single state-action-next-state tuple $(\hat{s}, \hat{a}, \hat{s}')$ in $\hat{M}$. This is illustrated in Fig.~\ref{fig:amdp_mapping}. 
For this mapping, the probability mass is conserved, formally 
$~{\hat p(\hat{s}, \hat{a}, \hat{s}') = \sum_{s \in \phi^{-1}(\hat{s}), a \in \psi_s^{-1}(\hat{a}), s' \in \phi^{-1}(\hat{s}')} p(s,a,s')}$. 
By these mappings from $\hat{M}$ to $M$, the entropy can only be increased or be equal.

Here is a formal proof.
We start with the definitions of the entropies. 

\begin{align}
    H(\hat{p}(s,a,r,s')) &= - \sum_{\hat{s}, \hat{a}, r, \hat{s}'} p(\hat{s}, \hat{a}, r, \hat{s}') \cdot \log(\hat{p}(\hat{s}, \hat{a}, r, \hat{s}'))
    \label{eq:entropy_1}
    \\
    H(p(s,a,r,s')) &= - \sum_{s, a, r, s'} p(s,a,r,s') \cdot \log(p(s,a,r,s'))
    \label{eq:entropy_2}
\end{align}

First we look at the function $f(x) = - x \log(x)$ for $x > 0$. We can show that $f(x)$ is concave by showing that its second derivative is non-positive.

\begin{align}
    f'(x) &= - \log(x) - 1 \\
    f''(x) &= - \frac{1}{x}
\end{align}

Since $f(x) > 0$ for $x > 0$ and $f(x)$ is concave, it follows that $f(x)$ is \emph{sub-additive} meaning that for every $x_1, x_2 > 0$ that $f(x_1) + f(x_2) \geq f(x_1 + x_2)$. More generally, we can say that for any $x_i > 0$ that $\sum_i f(x_i) \geq f(\sum_i x_i)$. 
From the assumptions of the MDP homomorphism it follows that the probability mass of an abstract state-action-next-state 3-tuple is equal to the sum of probability masses of the state-action-next-state 3-tuples in its inverse images.

\begin{align}
    \hat{p}(\hat{s}, \hat{a}, r, \hat{s}') &= 
    \sum\limits_{\substack{s \in \phi^{-1}(\hat{s})\\a \in \psi_s^{-1}(\hat{a})\\s' \in \phi^{-1}(\hat{s}')}} 
    p(s,a,r,s')
\end{align}

We fix an arbitrary abstract transition $(\hat{s},\hat{a},r,\hat{s}')$. From sub-additivity of $f(x)$ follows that

\begin{align}
    \sum\limits_{\substack{s \in \phi^{-1}(\hat{s})\\a \in \psi_s^{-1}(\hat{a})\\s' \in \phi^{-1}(\hat{s}')}} 
    f(p(s,a,r,s'))
    &\geq 
    f(\hat{p}(\hat{s},\hat{a},r,\hat{s}'))
    \\
    -\sum\limits_{\substack{s \in \phi^{-1}(\hat{s})\\a \in \psi_s^{-1}(\hat{a})\\s' \in \phi^{-1}(\hat{s}')}} 
    p(s,a,r,s') \cdot \log(p(s,a,r,s'))
    &\geq 
    -\hat{p}(\hat{s},\hat{a},r,\hat{s}') \cdot \log(\hat{p}(\hat{s},\hat{a},r,\hat{s}'))
    \label{eq:pproof}
\end{align}

As Eq.~\ref{eq:pproof} holds for \emph{every} abstract transition, the inequality also holds for the sum over all transitions: 

\begin{align}
    -
    \sum_{\hat{s},\hat{a},r,\hat{s}'} 
    \sum\limits_{\substack{s \in \phi^{-1}(\hat{s})\\a \in \psi_s^{-1}(\hat{a})\\s' \in \phi^{-1}(\hat{s}')}} 
    p(s,a,r,s') \cdot \log(p(s,a,r,s'))
    &\geq 
    - \sum_{\hat{s},\hat{a},r,\hat{s}'} 
    p(\hat{s},\hat{a},r,\hat{s}') \cdot \log(p(\hat{s},\hat{a},r,\hat{s}'))
    \label{eq:entropy_proof}
\end{align}

These are the transition-entropy for the abstract MDP (Eq.~\ref{eq:entropy_1}) and the MDP that is an homomorphic image of the abstract MDP (Eq.~\ref{eq:entropy_2}).
Therefore Eq.~\ref{eq:entropy_proof} results in

\begin{align}
    H(p(s,a,r,s')) &\geq H(p(\hat{s},\hat{a},r,\hat{s}')).
\end{align}

This concludes the proof.
\end{proof}

\clearpage
\subsection{Analyzing AMDPs}
\label{sec:amdp_overview}

This section provides an illustrative overview of the properties of abstract MDPs (AMDPs). 

\subsubsection{Common AMDP Transformations}

This section provides an illustrative overview on the core properties of AMDPs. 
\FloatBarrier
\begin{figure}[H]
    \centering
    \includegraphics[width=0.3\textwidth]{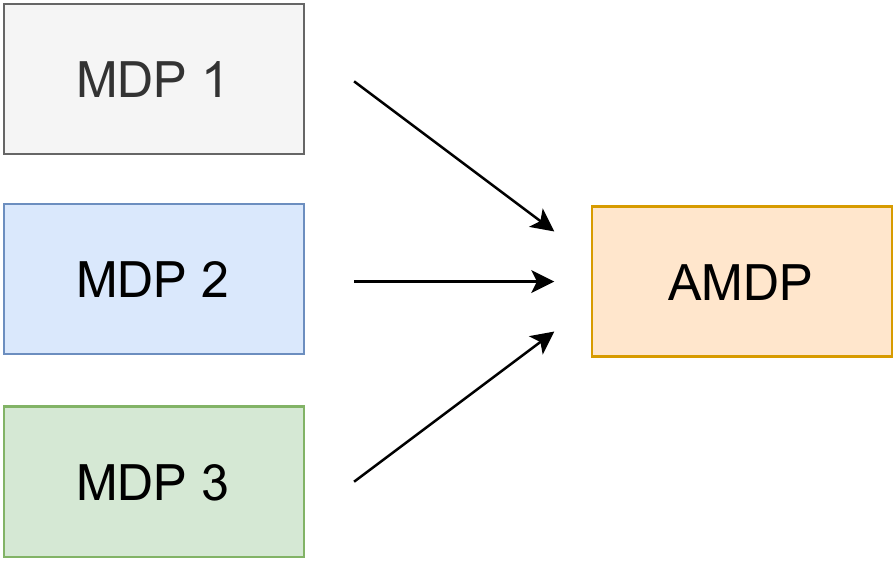}
    \caption{Schematic depiction of three homomorphic images of an abstract MDP (AMDP).}
    \label{fig:original}
\end{figure}

\begin{figure}
    \centering
    \includegraphics[width=1\textwidth]{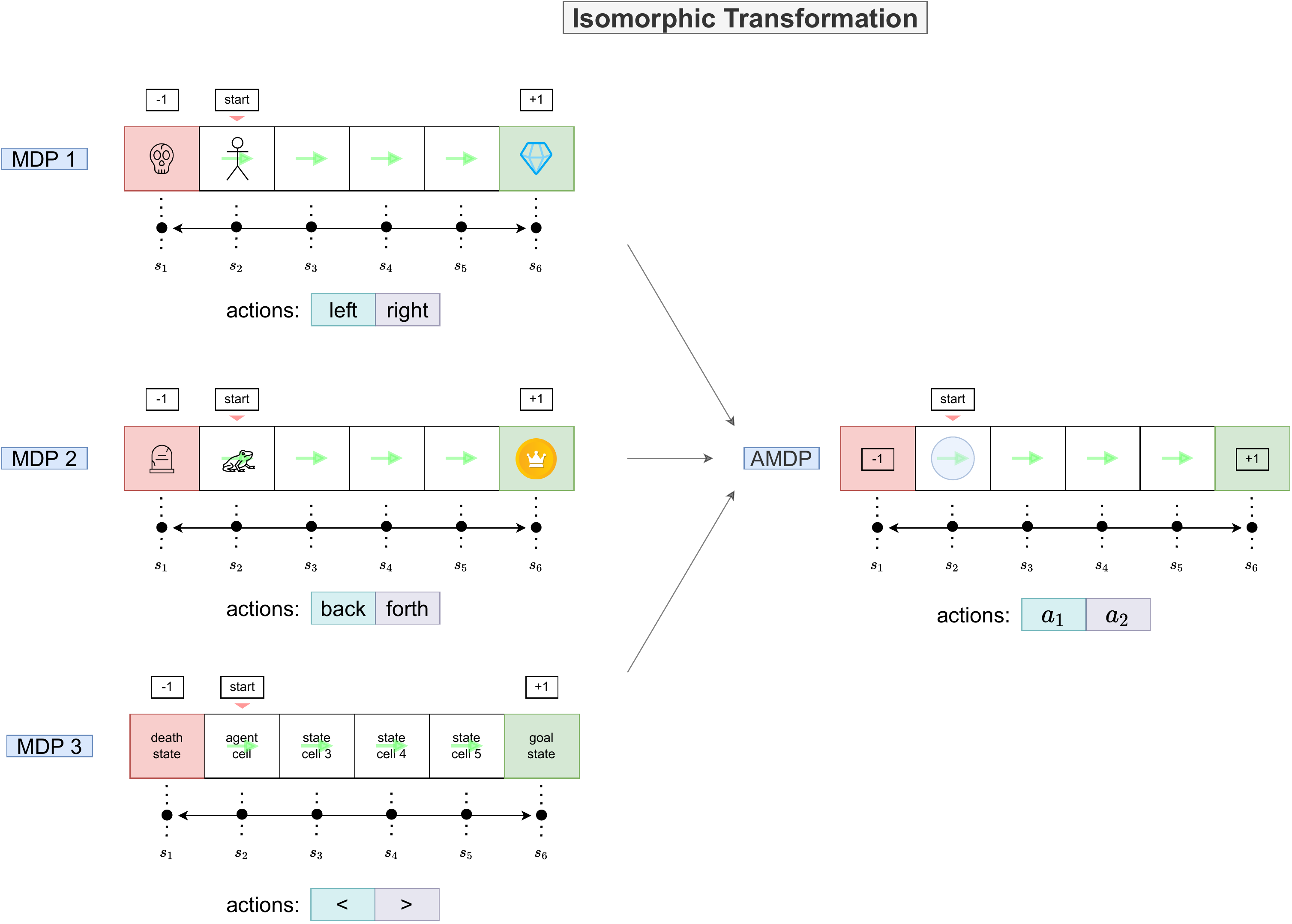}
    \caption{An example of three homomorphic images of an abstract MDP (AMDP). MDP 1 to 3 are distinct in their state and/or action space.}
    \label{fig:iso_transform_appendix}
\end{figure}

\begin{figure}
    \centering
    \includegraphics[width=1\textwidth]{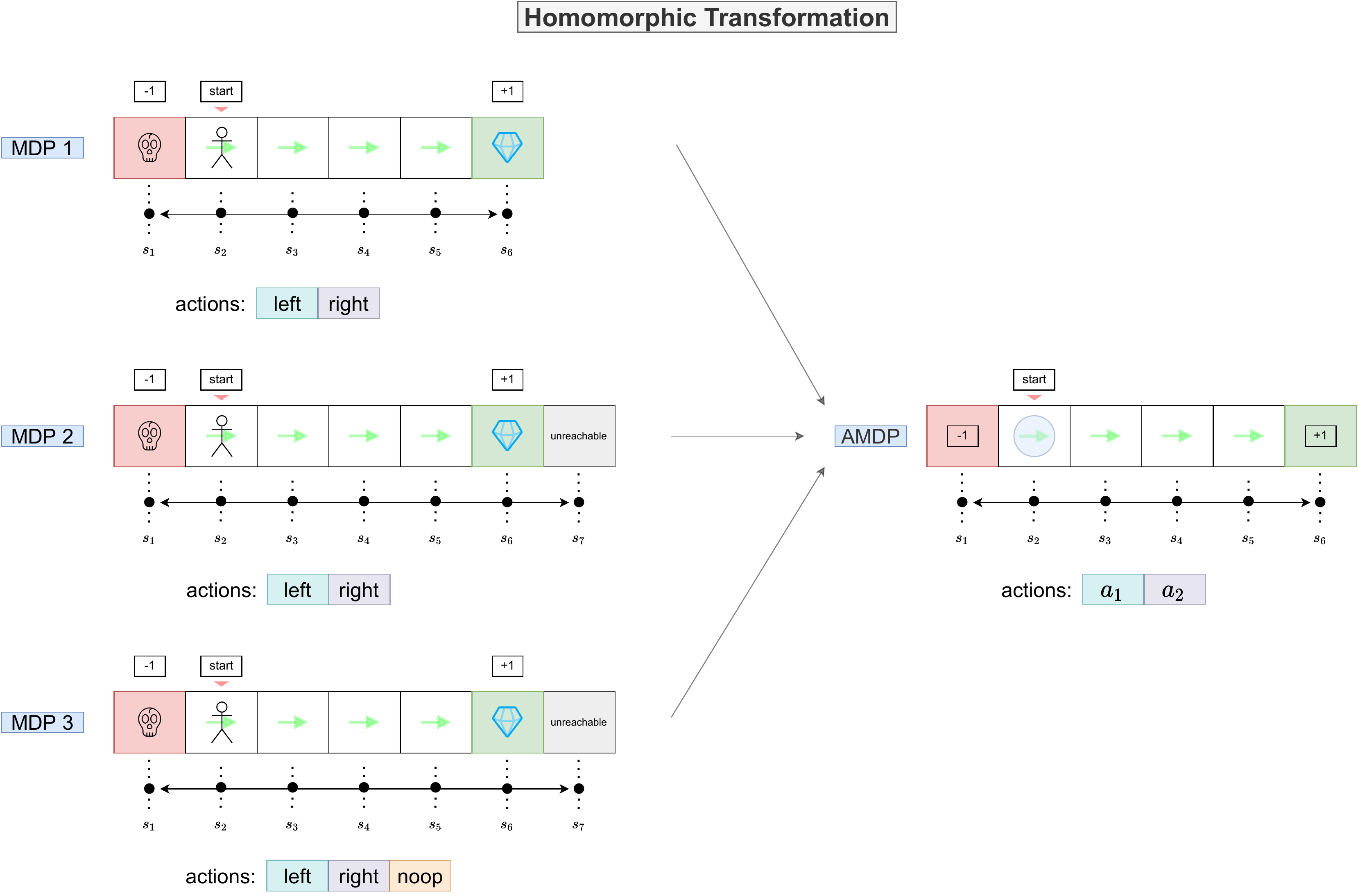}
    \caption{An example of three homomorphic images of an abstract MDP (AMDP). All MDPs are assumed to have the same state-action support. Their realization only show minor differences, e.g. in different symbols for the agent or the goal.}
    \label{fig:homo_transform_appendix}
\end{figure}

\begin{figure}
    \centering
    \includegraphics[width=1\textwidth]{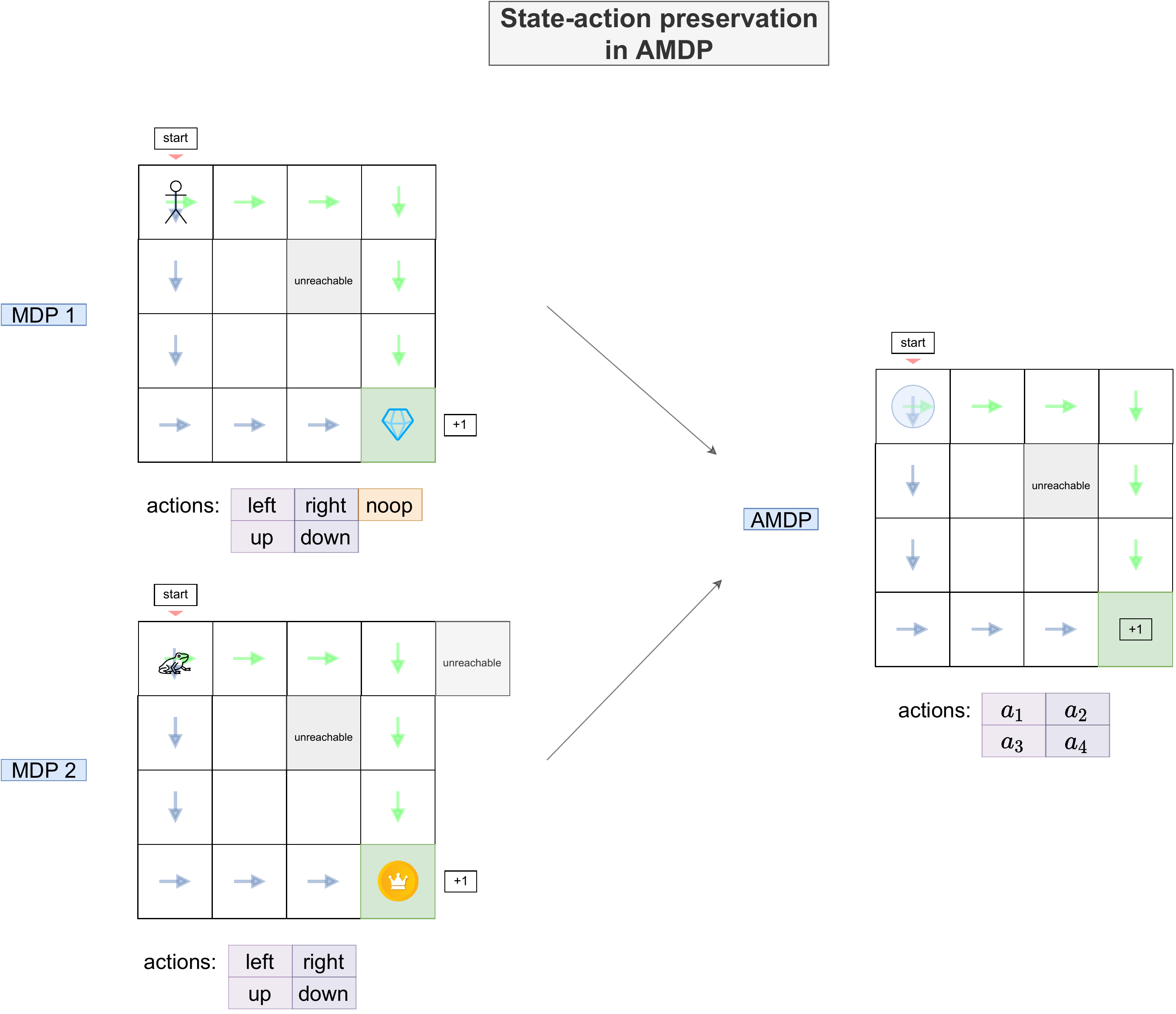}
    \caption{An example of state-action preservation for two policy paths given two MDPs. MDP 1 and 2 are homomorphic images of an abstract MDP (AMDP). The AMDP preserves the paths from all its homomorphic images.}
    \label{fig:amdp_dist_shift_overview}
\end{figure}

\clearpage
\subsubsection{AMDPs with domain shift}

This section provides an illustrative overview of the core properties of abstract MDPs (AMDPs) with domain shift.  
 
\begin{figure}
    \centering
    \includegraphics[width=0.5\textwidth]{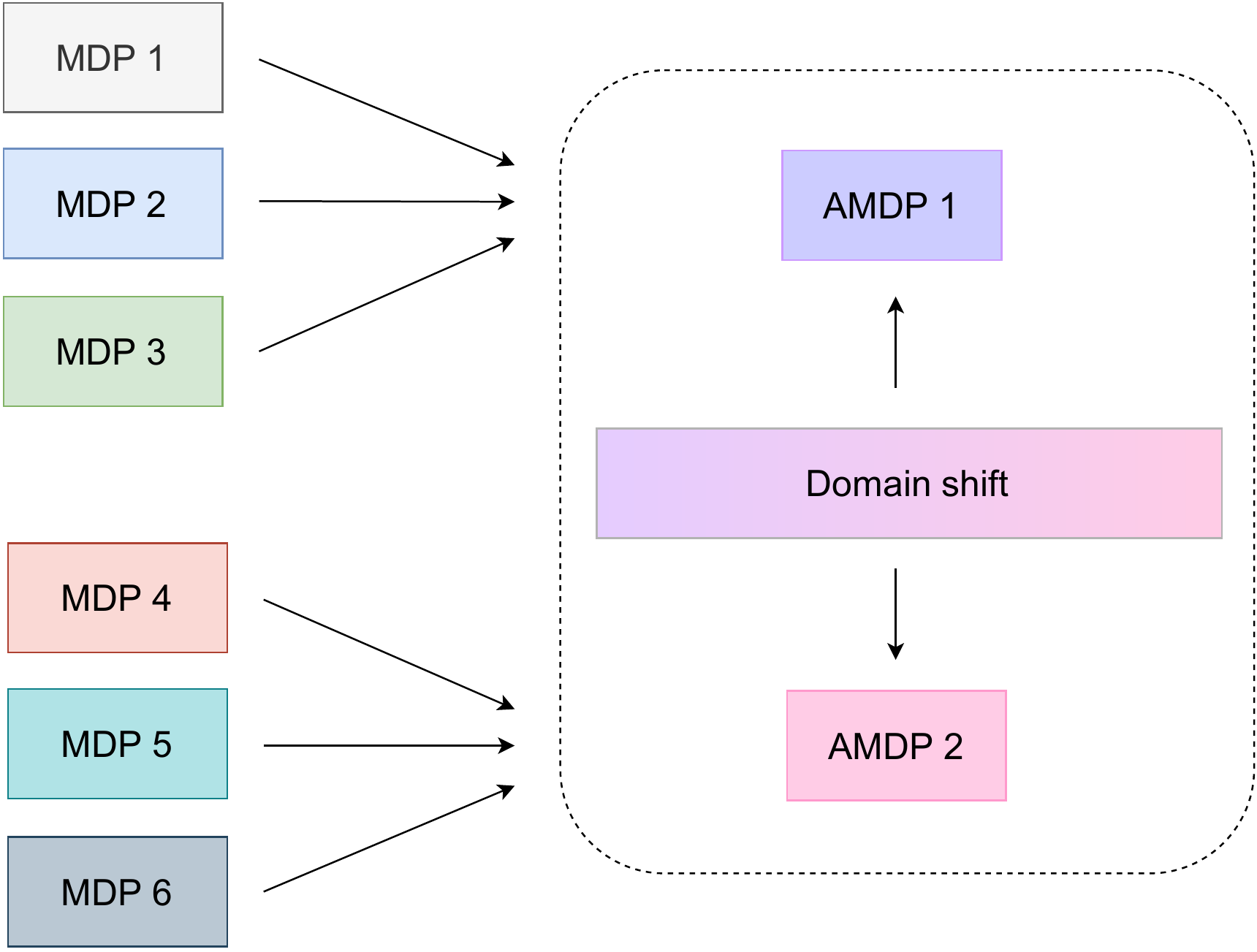}
    \caption{We show six MDPs that are mapped to their respective abstract MDP (AMDP). The AMDPs are assumed to have the same state-action support, and are realized as domain shifted transformations of each other. 
Under such transformation, the dynamics of the MDPs may differ and therefore abstract policies succeeding in one AMDP may fail in the other.}
    \label{fig:dist_shift_damdp}
\end{figure}

\begin{figure}
    \centering
    \includegraphics[width=.85\textwidth]{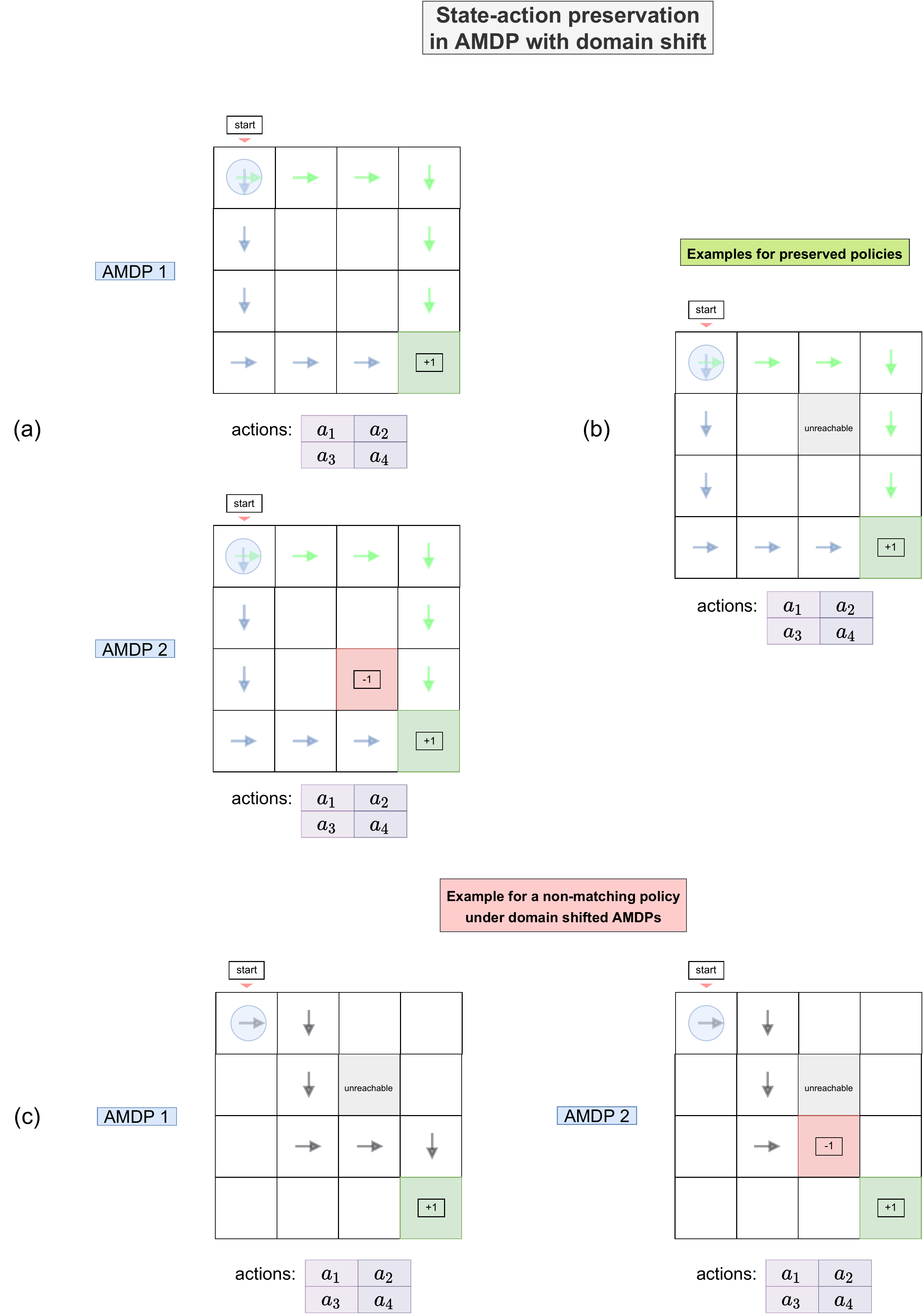}
    \caption{(a) We consider two abstract MDPs (AMDPs) with domain shift between them, and with two policy paths. (b) In this example, we see that the two defined AMDPs  partially preserve the state-action paths. (c) In this example, we see on the same AMDPs conflicting policy paths, since both AMDPs have different policies.}
    \label{fig:amdp_domain_shift_state-action_preservation}
\end{figure}

\begin{table}
\centering
\begin{tabular}{c}
\includegraphics[width=0.5\textwidth]{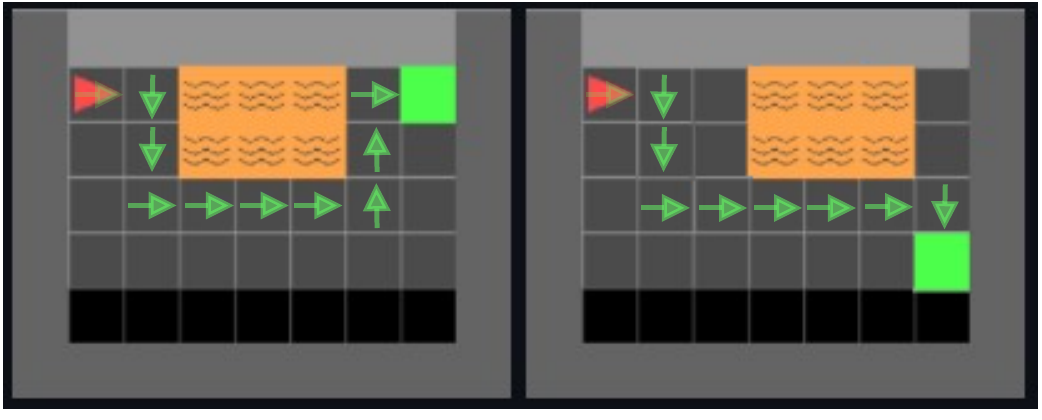}
\end{tabular}\\
\begin{tabular}{cc}
\includegraphics[width=0.42\textwidth]{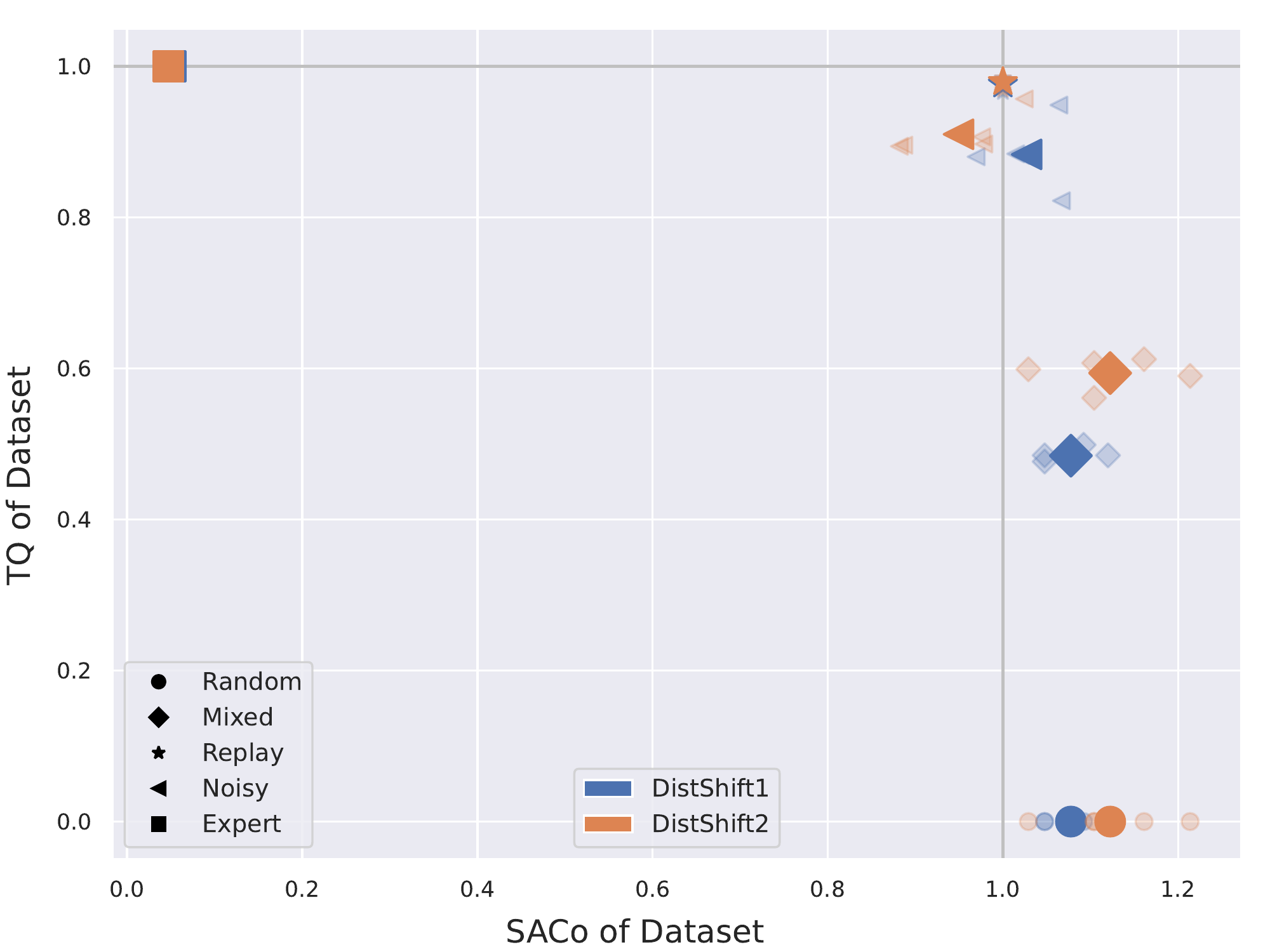} & \includegraphics[width=0.58\textwidth]{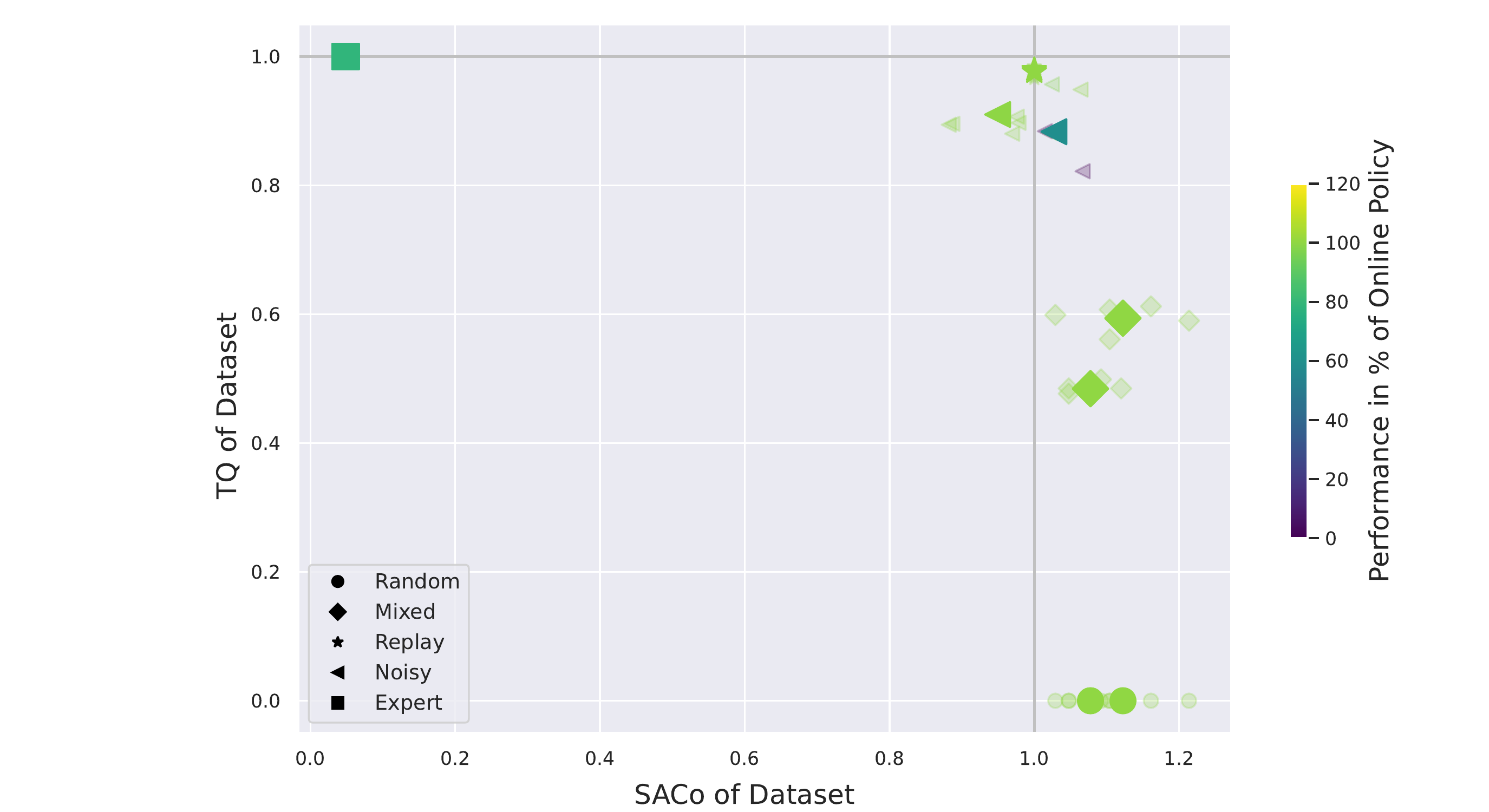} \\
\end{tabular}
\captionof{figure}{In this experiment we evaluate MiniGrid with two settings, the original MDP and a domain shifted version of the orignal MDP (top, left: original, and top, right: domain shifted). In the lower left figure, we see SACo and TQ for the two generated datasets of the MDPs (blue: original, orange: domain shifted; 5 seeds each). The generated datasets measures are similar since the dynamics are similar, and therefore, are slightly shifted versions of each other. On the lower right, we see the performance of offline trained DQN on the generated datasets. We observe that the performances match almost everywhere, with slight deviation of DistShift1, which is to be expected, since it is the slightly more difficult environment. In DistShift1 the agent has to longer paths next to the lava to get toward the goal state in the corner.}
\label{fig:ablations_1}
\end{table}

\clearpage
\subsubsection{Assessing SACo in continuous state spaces}
\label{sec:state_action_continous}

\begin{table}[H]
\centering
\begin{tabular}{c}
\includegraphics[width=0.45\textwidth]{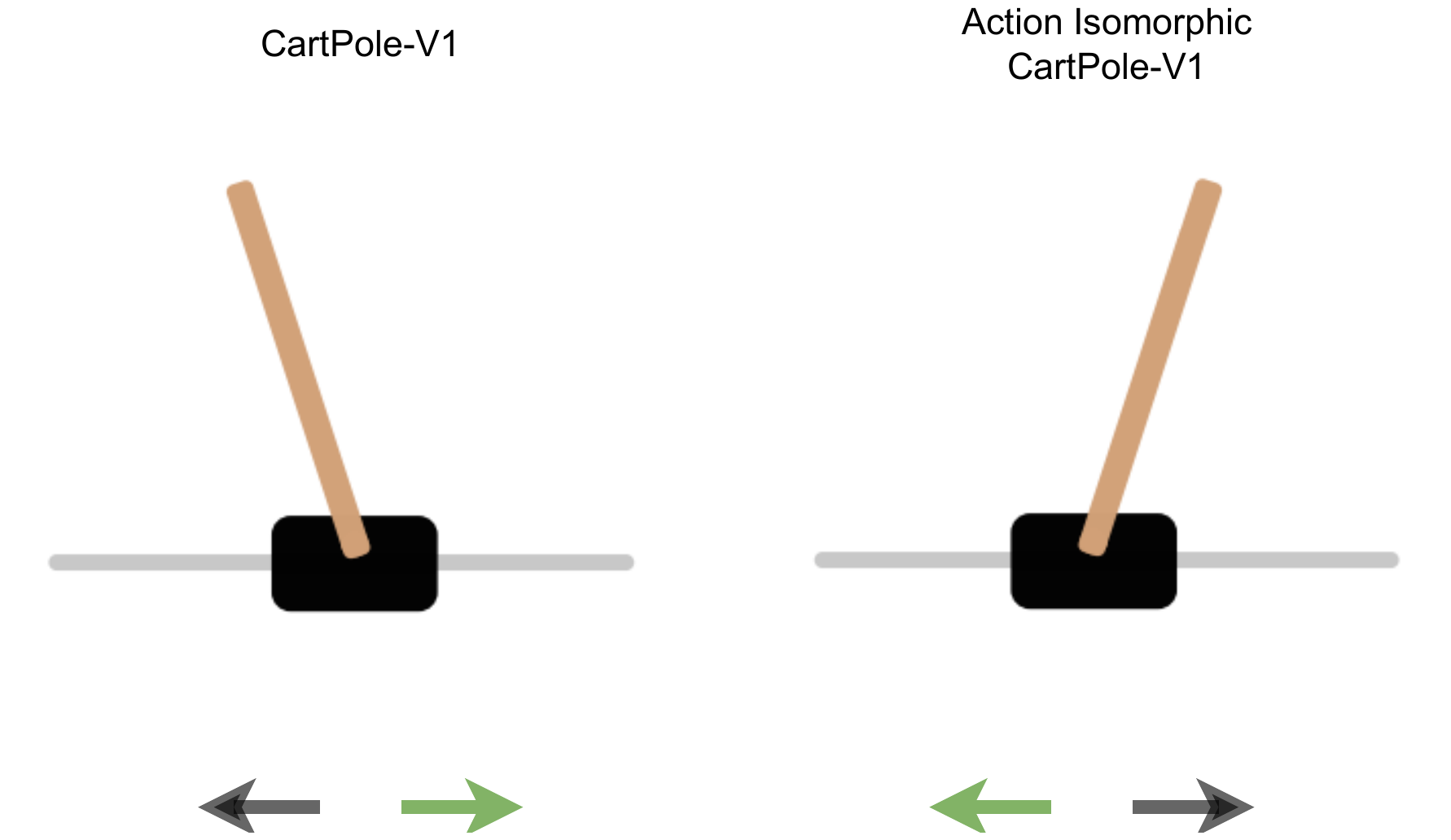}
\end{tabular}\\
\begin{tabular}{c}
\includegraphics[width=0.55\textwidth]{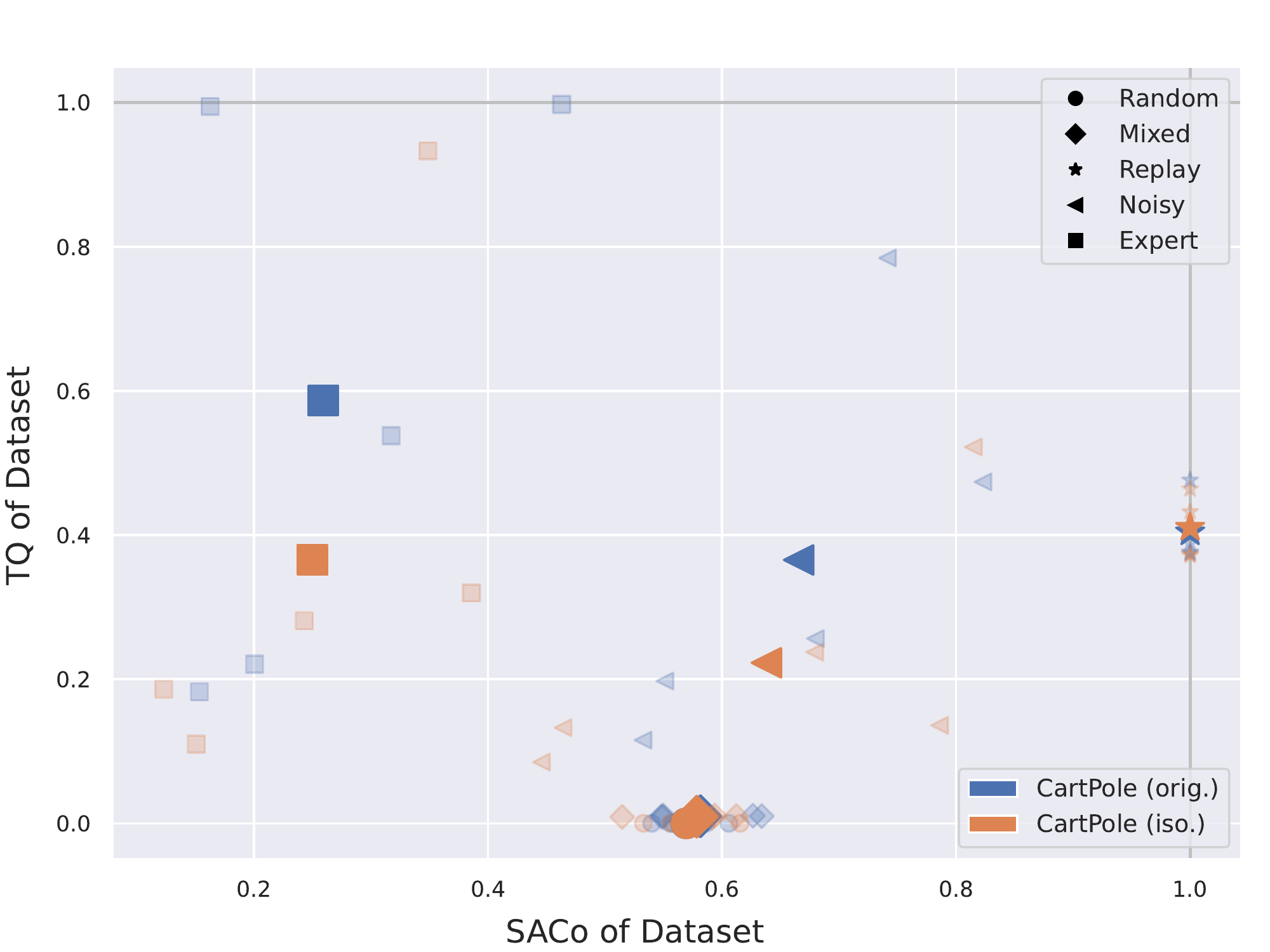} \\
\end{tabular}
\captionof{figure}{In this experiment we have evaluated CartPole with an isomorphically transformed MDP that flips the actions. We conducted this experiment to assess the stability of our measures, even for continuous states. Although CartPole has a continuous state space, the values for TQ and SACo are concentrated in distinctive regions (though with high variance), which we have observed also in experiments with discrete state and action spaces. We evaluated five seeds for each MDP.}
\label{fig:state_action_continuous}
\end{table}

\clearpage
\subsubsection{Assessing DQN vs QRDQN dataset collection properties}
\label{sec:sym_state_action_continous}

\begin{figure}[H]
    \centering
    \includegraphics[width=0.55\textwidth]{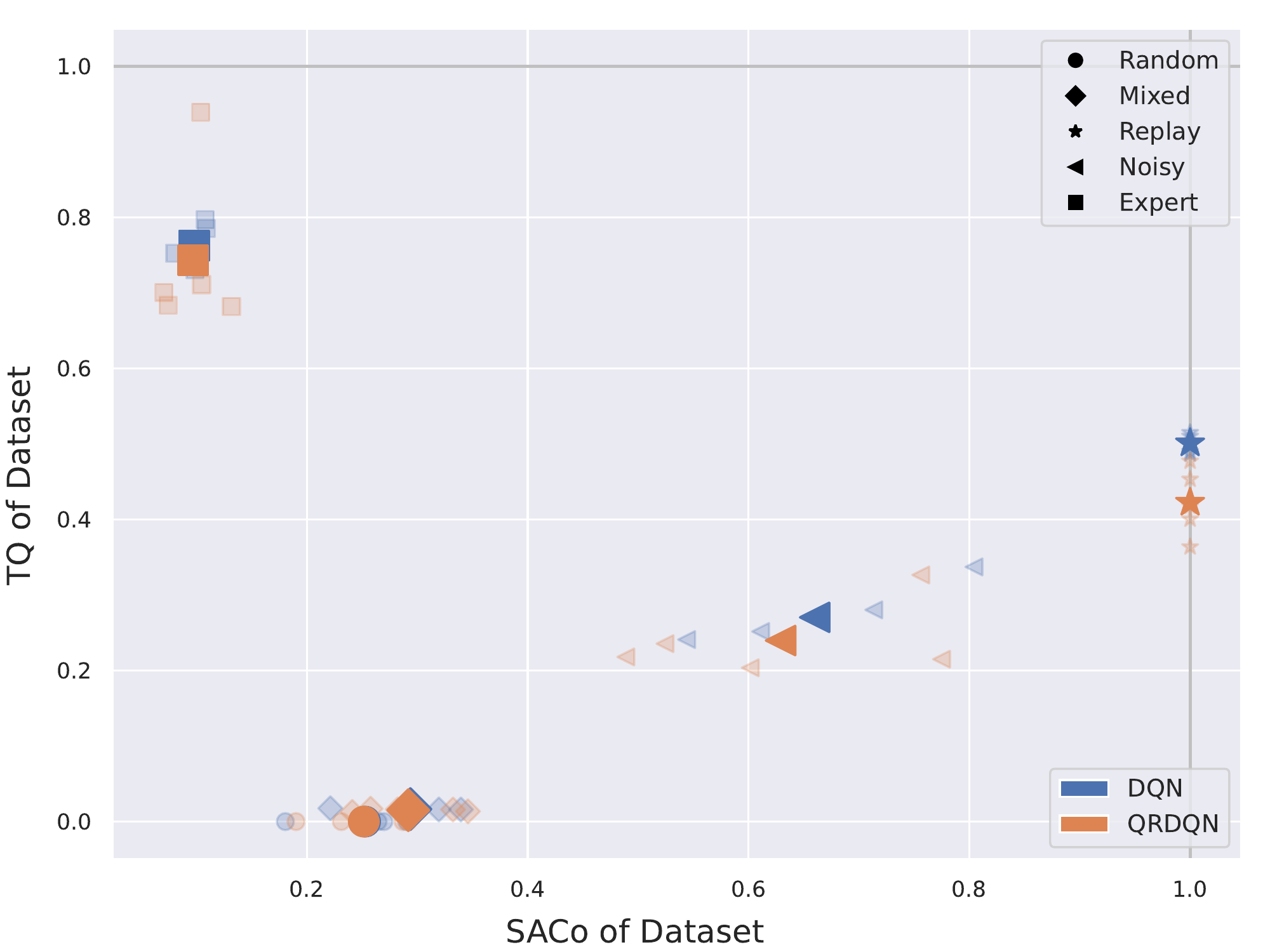}
    \caption{In this experiment, we also considered evaluating DQN to contrast it with QRDQN when generating the datasets. We see that both methods are projected to similar cluster regions. This result is expected since DQN and QRDQN are closely related algorithms. We evaluated five seeds for each DQN variant.}
    \label{fig:dist_shift_similar_structure}
\end{figure}

\subsection{Empirical Evaluations of Domain Shifts in Different MDP Settings} \label{sec:results_mdp_relaxations}

We investigate our implemented measures, TQ and SACo, regarding their properties under domain shifts.
Specifically, we are interested in the stability of TQ and SACo and whether they indicate domain shifts in the underlying datasets.
Our experimental setup utilizes the dataset generation schemes described in Sec.~\ref{sec:dataset_generation}.
We create datasets using these schemes on a total of six environments, that are transformations (isomorphic, homomorphic) of each other, or exhibit different domain shifts.
The results are presented in Fig.~\ref{fig:ablations_2}.

We see, that on the same environment, the TQ and SACo of different types of behavioral policies (random, expert, ...) result in different clusters of datasets.
This matches the intuition that changing the policy, thus introducing a policy shift, changes the dataset distribution drastically.
How prevalent the changes are becomes apparent, when comparing the results between different MDP settings.
The locations of clusters, representing the same behavioral policy types, change only minor when switching between the original MDP (Breakout) to isomorphic and homomorphic transformations of it.
Furthermore, we present results on general domain shifts, where we are not assured that TQ and SACo indicate the domain shift, since the joint probability distribution and thus both the policy and the environment changes.

\begin{table}[h]
\centering
\begin{tabular}{ccc}
\begin{tabular}[c]{@{}c@{}}Same MDP\\(Breakout)\end{tabular}& 
\begin{tabular}[c]{@{}c@{}}Isomorphically transformed \\ MDP (Breakout)\end{tabular} & 
\begin{tabular}[c]{@{}c@{}}Homorphically transformed \\ MDP (Breakout)\end{tabular} \\
\cr
\includegraphics[width=0.15\textwidth]{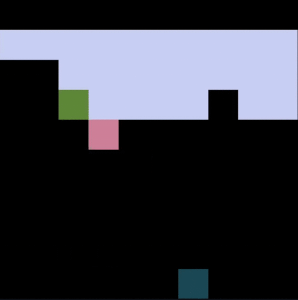} & \includegraphics[width=0.15\textwidth]{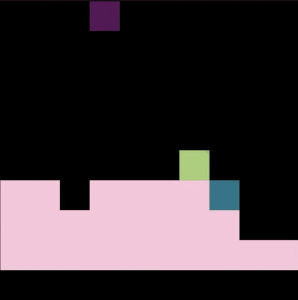} & \includegraphics[width=0.15\textwidth]{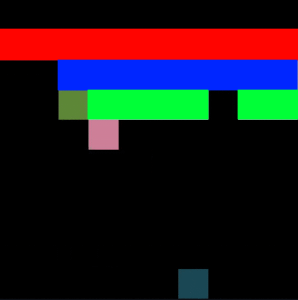} \\
\cr
\includegraphics[width=0.32\textwidth]{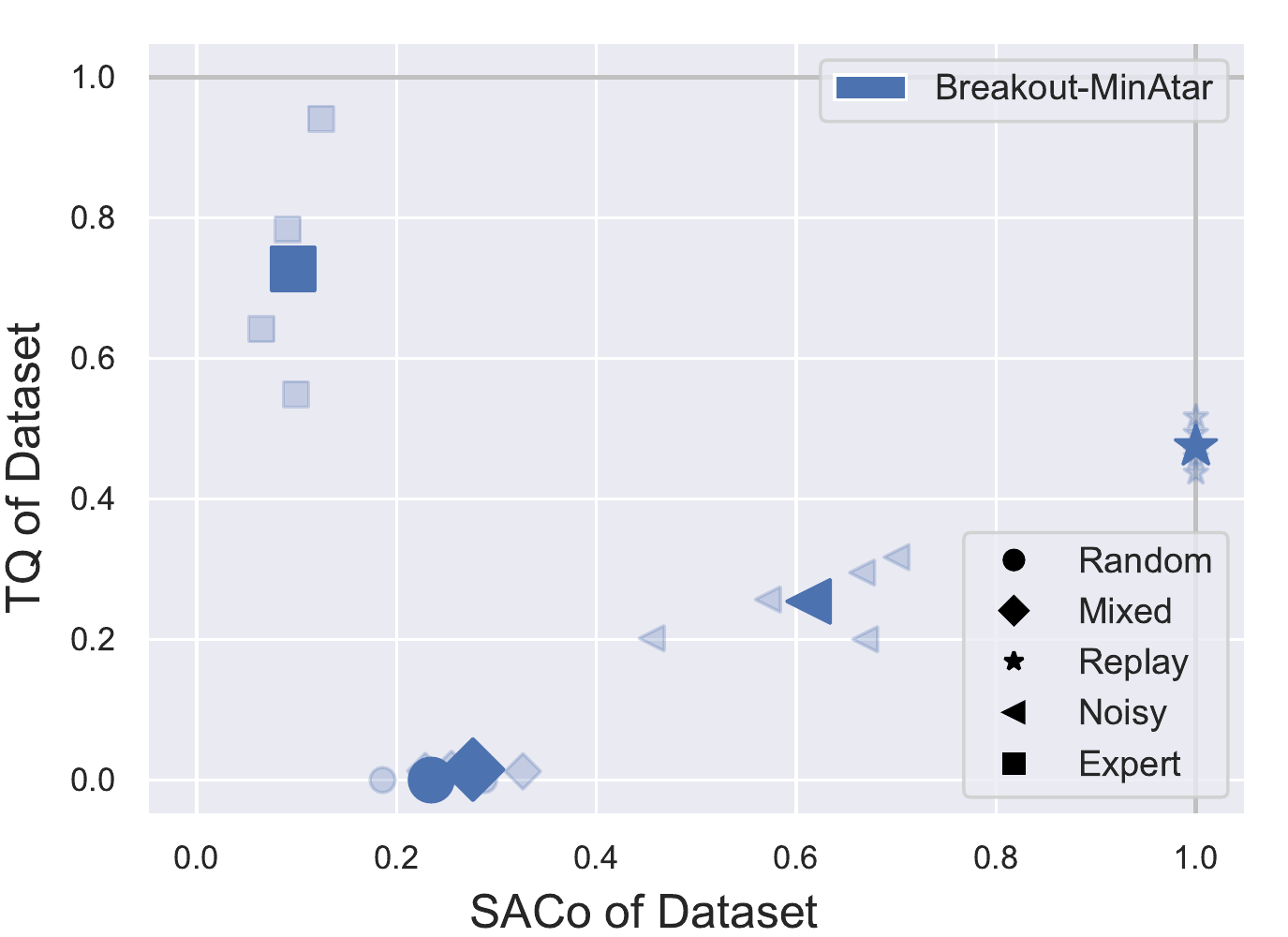} & \includegraphics[width=0.32\textwidth]{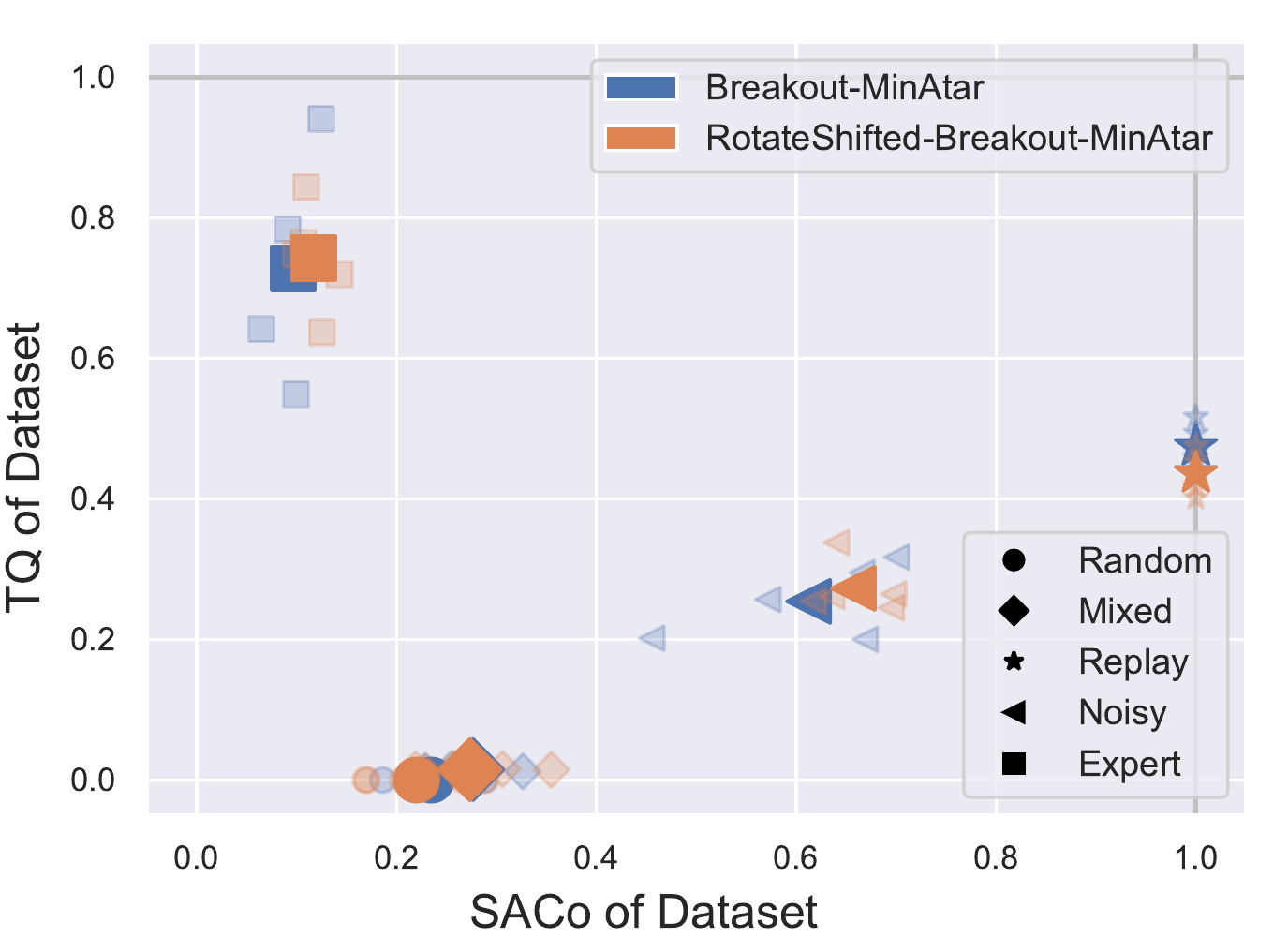} & \includegraphics[width=0.32\textwidth]{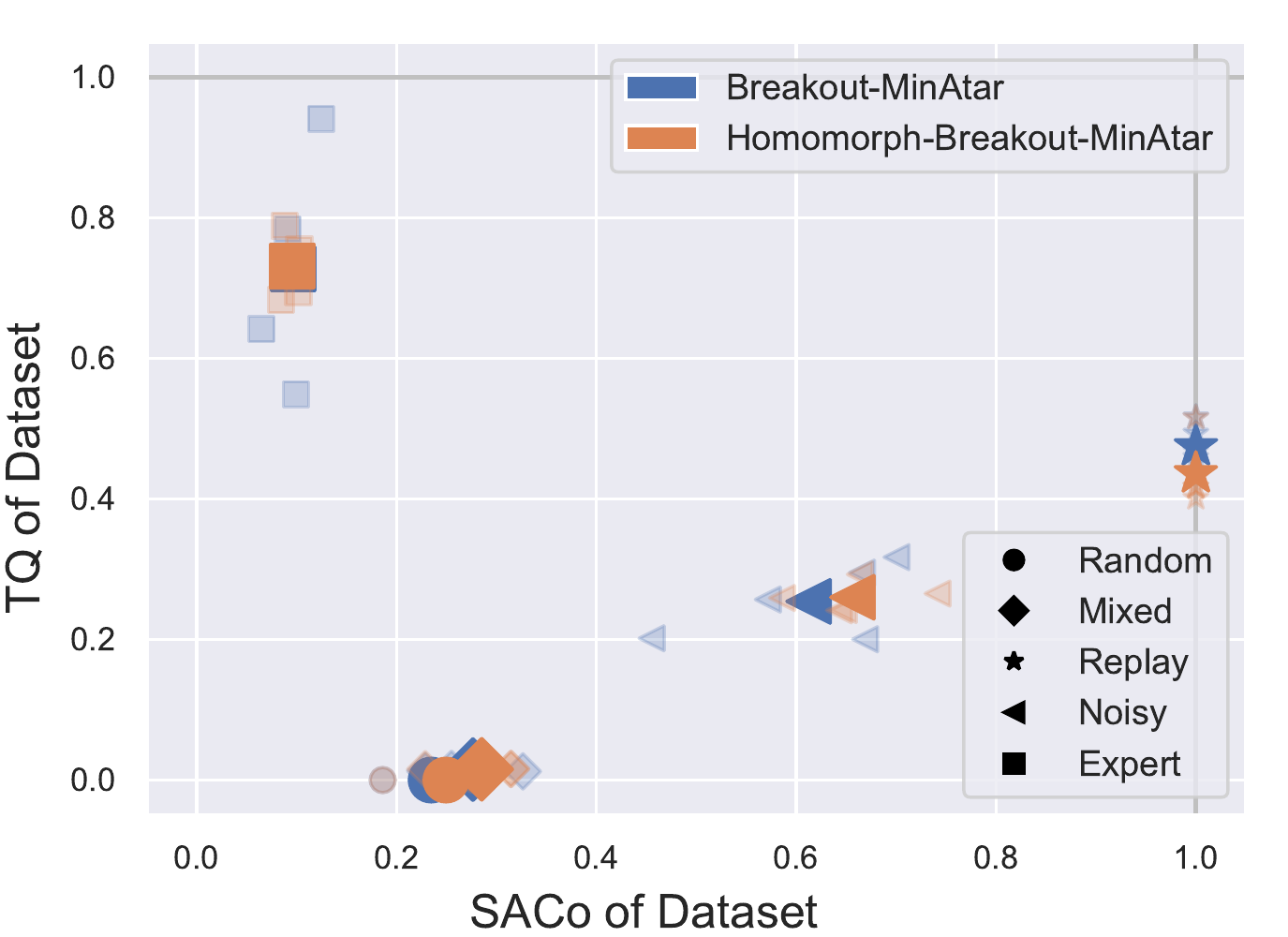}
\cr
\cr
\cr
\begin{tabular}[c]{@{}c@{}}Same MDP with\\ shift in state-occupancy (Breakout)\end{tabular}
 & \begin{tabular}[c]{@{}c@{}}Different MDP (also from MinAtar) with\\ same state-action support \& \\general domain shift (Space Invaders)\end{tabular} 
 & \begin{tabular}[c]{@{}c@{}}Different MDP (Minigrid)\\from different environment suite\end{tabular} \\
 \cr
\includegraphics[width=0.15\textwidth]{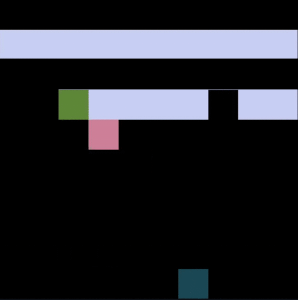} & \includegraphics[width=0.15\textwidth]{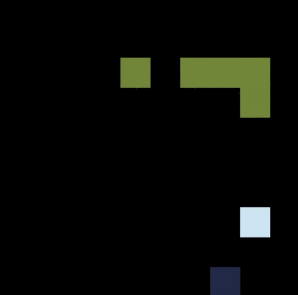} & \includegraphics[width=0.15\textwidth]{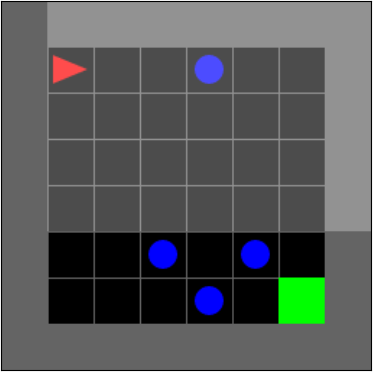} \\
\cr
\includegraphics[width=0.32\textwidth]{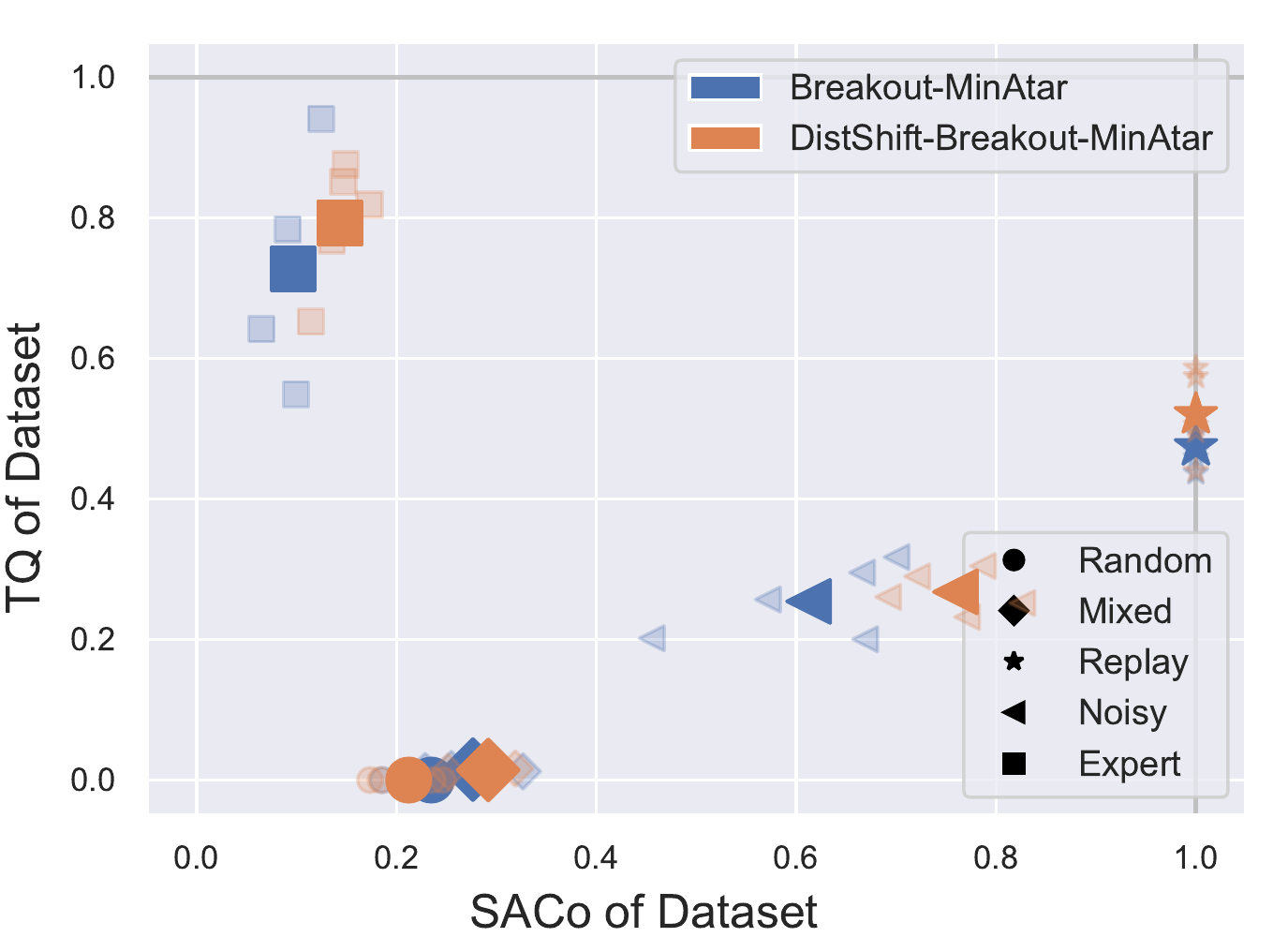} & \includegraphics[width=0.32\textwidth]{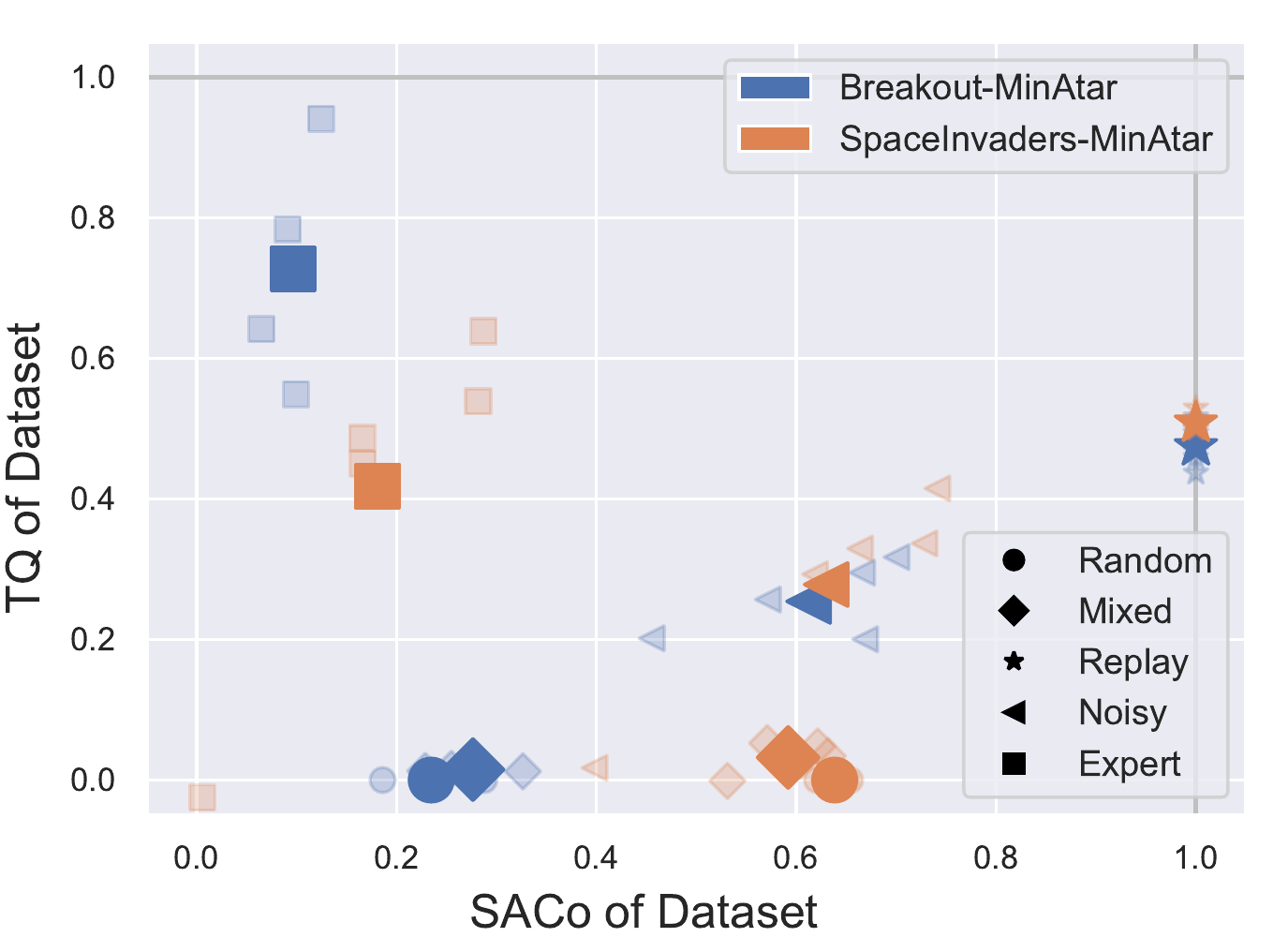} & \includegraphics[width=0.32\textwidth]{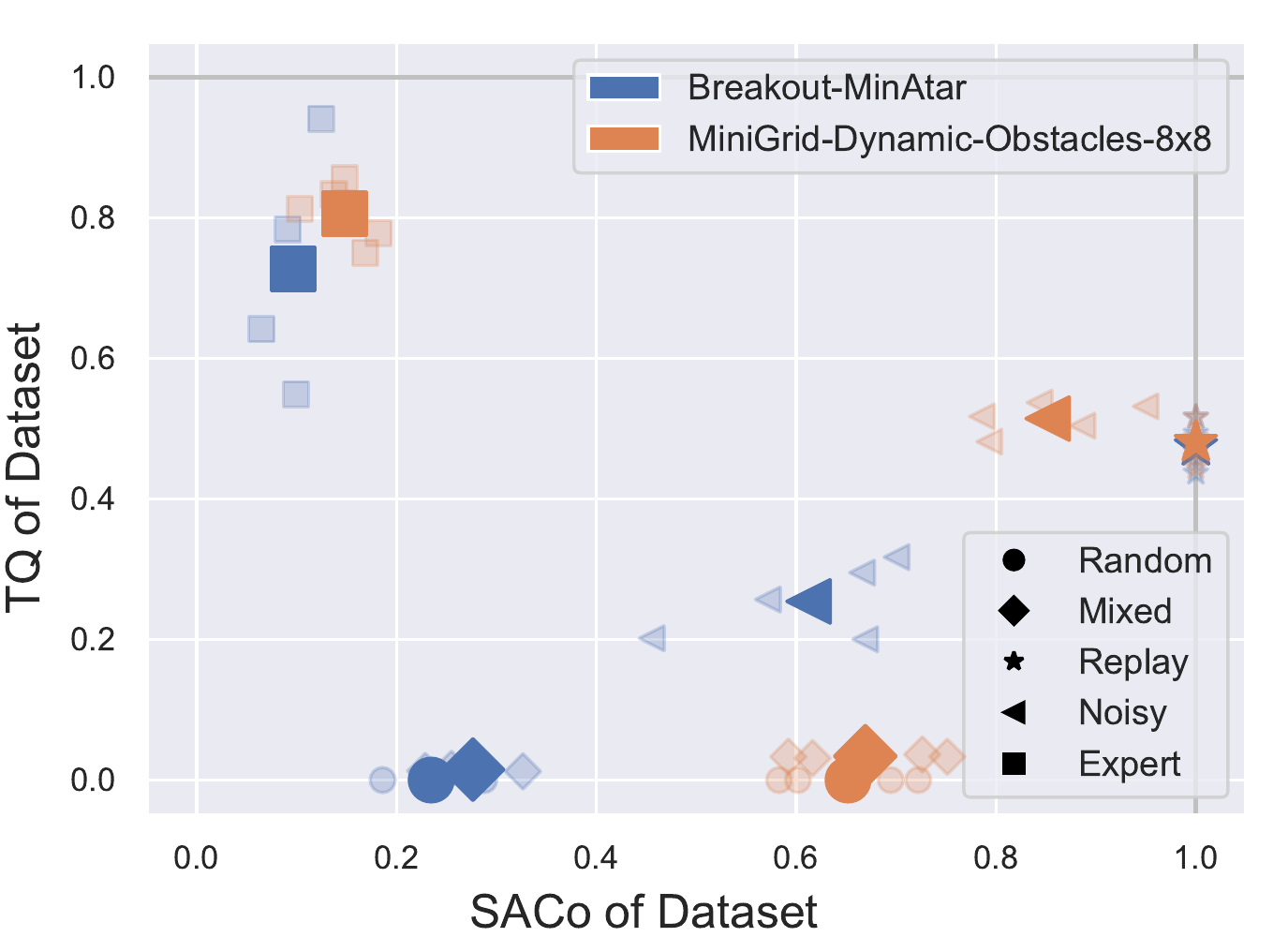}
\end{tabular}
\captionof{figure}{We show TQ and SACo of datasets sampled from different behavioral policies in different MDP settings. 
Sampling from different behavioral policies corresponds to a policy shift, which is expected to result in different values for TQ and SACo.
However, we also show that small deviations between datasets may occur by sampling with different seeds, even with the same type of behavioral policy. 
Breakout (upper left) is the reference environment all other environments are compared to.
Environments in the first row exhibit isomorphic and homomorphic transformations.
As expected, datasets sampled from the same behavioral policies are very similar in terms of TQ and SACo and deviate only slightly under isomorphic and homomorphic transformations of the environment.
Environments in the second row exhibit domain shifts and reflect different MDP settings, which results in stronger deviations between datasets even among the same behavioral policies.}
\label{fig:ablations_2}
\end{table}

\clearpage
\subsection{Environments}
\label{sec:environments}

Although the dynamics of the environments used throughout the main experiments are rather different and range from motion equations to predefined game rules, they share common traits. 
This includes the dimension of the state $dim(\bm s)$, the number of eligible actions $|\mathcal{A}|$, the maximum episode length $T_{max}$ as well as the minimum and maximum expected return $g_{min}, g_{max}$ of an episode. 
Furthermore, the discount factor $\gamma$ is fixed for every environment regardless of the specific experiment executed on it and is thus listed with the other parameters in Tab.~\ref{tab:env_params}.
An overview of the environments, depicted by their graphical user interfaces is given in Fig.~\ref{fig:environments}.\\

Two environments contained in the MinAtar suite, Breakout and SpaceInvaders, do not have an explicit maximum episode length, as the episode termination is ensured through the game rules. 
Breakout terminates either if the ball is hitting the ground or two rows of bricks were destroyed, which results in the maximum of $60$ reward. 
An optimal agent could attain infinite reward for SpaceInvaders, as the aliens always reset if they are eliminated entirely by the player and there is a speed limit that aliens can maximally attain. 
Nevertheless, returns much higher than $200 - 300$ are very unlikely due to stochasticity in the environment dynamics that is introduced through sticky actions with a probability of $0.1$ for all MinAtar environments.\\

\begin{table}[h]
\centering
\begin{tabular}{lcccccc}
\hline
Environment           & $dim(\bm s)$ & $|\mathcal{A}|$ & $T_{max}$ & $g_{min}$ & $g_{max}$ & $\gamma$ \\ \hline
\texttt{CartPole-v1}              & $4$            & $2$               & $500$       & $9^*$        & $500$       & $0.95$     \\
\texttt{MountainCar-v0}           & $2$            & $3$               & $200$       & $-200$      & $-90^*$       & $0.99$     \\
\texttt{MiniGrid-LavaGapS7-v0}             & $98$           & $3$               & $196$       & $0$         & $0.945^*$     & $0.95$     \\
\texttt{MiniGrid-Dynamic-Obstacles-8x8-v0} & $98$           & $3$               & $256$       & $-1$        & $0.935^*$     & $0.95$     \\
\texttt{Breakout-MinAtar-v0}              & $100$          & $3$               & -         & $0$         & $60$        & $0.99$     \\
\texttt{SpaceInvaders-MinAtar-v0}       & $100$          & $4$               & -         & $0$         & $\infty$  & $0.99$     \\ \hline
\end{tabular}
\caption{Environment specific characteristics and parameters. $^*$Minimum or maximum expected returns depend on the starting state.}
\label{tab:env_params}
\end{table}

Action-spaces for MiniGrid and MinAtar are reduced to the number of eligible actions and state representations simplified. 
Specifically, the third layer in the symbolical state representation of MiniGrid environments was removed as it contained no information for the chosen environments. 
The state representation of MinAtar environments was collapsed into one layer, where the respective entries have been set to the index of the layer, divided by the total number of layers. 
The resulting two-dimensional state representations are flattened for MiniGrid as well as for MinAtar environments.

\begin{figure}[h]
    \centering
    \includegraphics[width=0.7\textwidth]{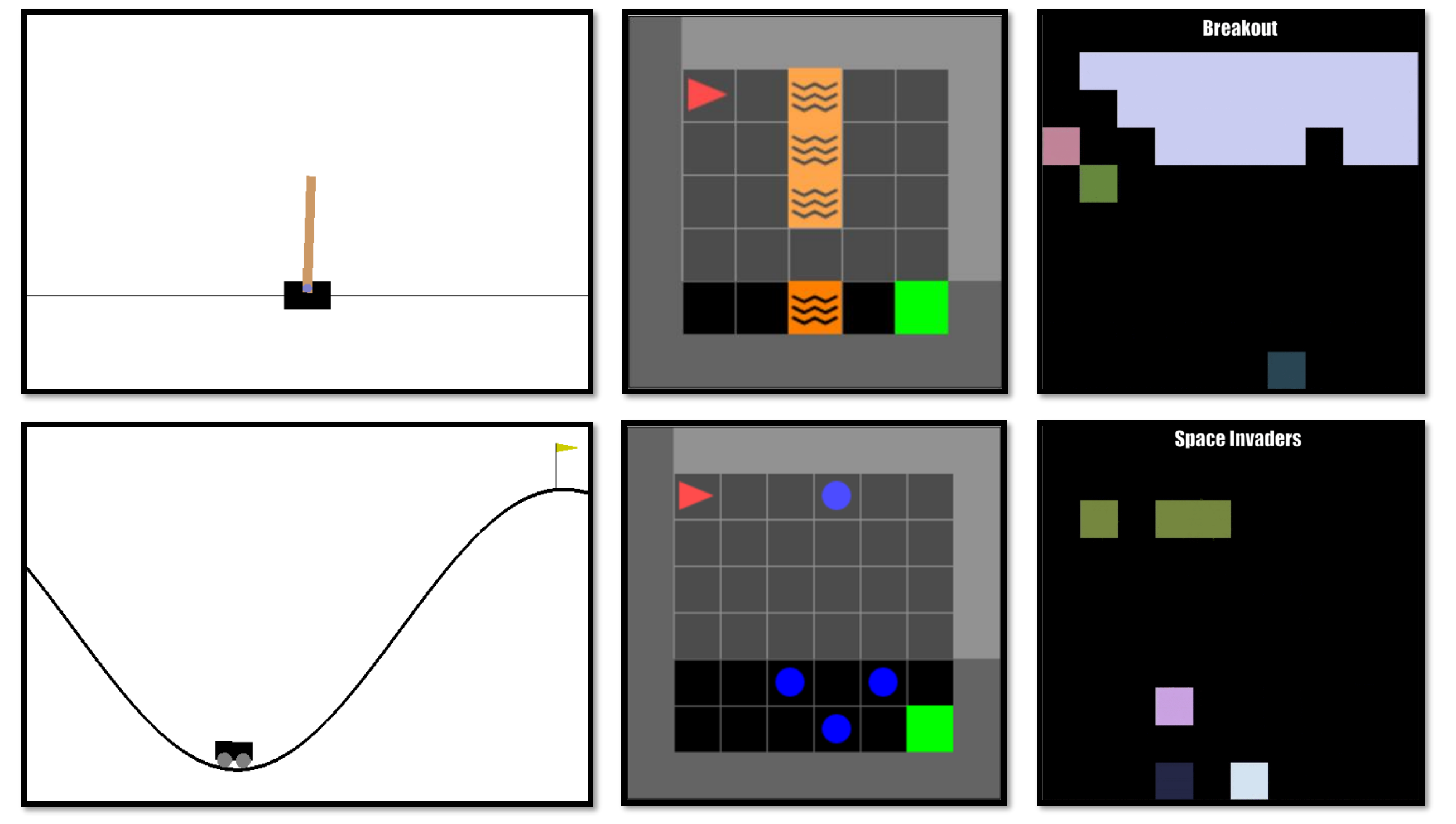}
    \caption{Graphical interfaces of environments used throughout the main experiments. Enumerating from top to bottom, left to right, these environments are \texttt{CartPole-v1}, \texttt{MountainCar-v0}, \texttt{MiniGrid-LavaGapS7-v0}, \texttt{MiniGrid-Dynamic-Obstacles-8x8-v0}, \texttt{Breakout-MinAtar-v0} and \texttt{SpaceInvaders-MinAtar-v0}.}
    \label{fig:environments}
\end{figure}

\subsection{Algorithms}
\label{sec:algorithms}

We conducted an evaluation of the different dataset compositions using nine different algorithms applicable in an Offline RL setting. 
The selection covers recent advances in the field, as well as off-policy methods not specifically designed for Offline RL that are often utilized for comparison.
Behavioral cloning (BC) \citep{Pomerleau:91} serves as a baseline algorithm, as it mimics the behavioral policy used to create the dataset. 
Consequently, its performance is expected to be strongly correlated with the TQ of the dataset.

Behavior Value Estimation (BVE) \citep{Gulcehre:21} is utilized without the ranking regularization that it was proposed to be coupled with. 
This way, extrapolation errors are circumvented during training as the action-value of the behavioral policy is evaluated.
Policy improvement only happens during inference, when the action is greedily selected on the learned action-values.
BVE uses SARSA updates where the next state and action are sampled from the dataset, utilizing temporal difference updates to evaluate the policy.\\
As a comparison, Monte-Carlo Estimation (MCE) evaluates the behavioral policy that created the dataset from the observed returns.
Again, actions are greedily selected on the action-values obtained from Monte-Carlo estimates.

Deep Q-Network (DQN) \citep{Mnih:13} is used to obtain the online policy, but can be applied in the Offline RL setting as well, as it is an off-policy algorithm.
The dataset serves as a replay buffer in this case, which remains constant throughout training.
As it is not originally designed for the Offline RL setting, there are no countermeasures to the erroneous extrapolation of action-values during training nor during inference.\\
Quantile-Regression DQN (QRDQN) \citep{Dabney:17} approximates a set of $K$ quantiles of the action-value distribution instead of a point estimate during training.
During inference, the action is selected greedy through the mean values of the action-value distribution.\\
Random Ensemble Mixture (REM) \citep{Agarwal:20} utilizes an ensemble of $J$ action-value approximations to attain a more robust estimate.
During training, the influence of each approximation on the overall loss is weighted through a randomly sampled categorical distribution.
Selecting an action is done greedy on the average of the action-value estimates.
Batch-Constrained Deep Q-learning (BCQ) \citep{Fujimoto:19a} for discrete action-spaces is based on DQN, but uses a BC policy on the dataset to constrain eligible actions during training and inference.
A relative threshold $\tau$ is utilized for this constraint, where eligible actions must attain at least $\tau$ times the probability of the most probable action under the BC policy.\\
Conservative Q-learning (CQL) \citep{Kumar:20} introduces a regularization term to  policy evaluation. 
The general framework might be applied to any off-policy algorithm that approximates action-values, therefore we based it on DQN as used for the online policy.
Furthermore, the particular regularizer has to be chosen, where we used the KL-divergence against a uniform prior distribution, referred to as CQL($\cH$) by the authors.
The influence of the regularizing term is controlled by a temperature parameter $\alpha$.\\
Critic Regularized Regression (CRR) \citep{Wang:20} aims to ameliorate the problem that the performance of BC suffers from low-quality data, by filtering actions based on action-value estimates.
Two filters which can be combined with several advantage functions were proposed by the authors, where the combination referred to as binary max was utilized in this study.
Furthermore, DQN is used instead of a distributional action-value estimator for obtaining the $m$ action-value samples in the advantage estimate.

\subsection{Implementation Details}
\label{sec:impl_details}

\subsubsection{Network Architectures}

The state input space is as defined in Tab.~\ref{tab:env_params}, followed by 3 linear layers with a hidden size of 256. The number of output actions for the final linear layer is defined by the number of eligible actions for action-value networks. For QRDQN and REM, the number of actions times the number of quantiles or estimators respectively is used as output size. All except the last linear layer use the SELU activation function \citep{klambauer:17} with proper initialization of weights, whereas the final one applies a linear activation. Behavioral cloning networks use the softmax activation in the last layer to output a proper probability distribution, but are otherwise identical to the action-value networks.

\subsubsection{Online Training}
\label{subsec:online_training}

For every environment, a single online policy is obtained through training with DQN. This policy is the one used to generate the datasets under the different settings described in Sec.~\ref{sec:dataset_generation}. All hyperparameters are listed in Tab.~\ref{tab:online_hyperparams}.

Initially, as many samples as the batch size are collected by a random policy to pre-populate the experience replay buffer. 
Rather than training for a fixed amount of episodes, the number of policy-environment interactions is used as training steps. 
Consequently, the number of training steps is independent of the agent's intermediate performance and comparable across environments.
The policy is updated in every of those steps, after a single interaction with the environment, where tuples $(s, a, r, s')$ are collected and stored in the buffer. 
After the buffer has reached the maximum size, the oldest tuple is discarded for every new one. 
Action selection during environment interactions to collect samples starts out with an initial $\epsilon$ that linearly decays over a period of steps towards the minimal $\epsilon$, which remains fixed throughout the rest of the training procedure. 
Training batches are sampled randomly from the experience replay buffer.
The Adam optimizer was used for all algorithms and the target network parameters $\theta'$ is updated to match the parameters $\theta$ of the current action-value estimator every $100$ training steps.

The policy is evaluated periodically after a certain number of training steps, depending on the used environment. 
It interacts greedy based on the current value estimate with the environment for $10$ episodes, averaging over the returns to estimate its performance.

\begin{table}[h]
\centering
\begin{tabular}{ll}
\hline
Hyperparameter                & Value      \\ \hline
Algorithm                      & DQN        \\
Learning rate                  & $0.0001$     \\
Batch size                     & $32$         \\
Optimizer                      & Adam    \\
Loss                           & Huber with $\lambda=1$      \\
Initial $\epsilon$             & $1.0$        \\
Linear $\epsilon$ decay period & $1 \, 000$ steps \\
Minimal $\epsilon$             & $0.01$       \\
Target update frequency    & $100$ steps  \\
Training steps             & $100 \, 000$ ($2 \, 000 \, 000$)    \\
Network update frequency  & $1$ step     \\
Experience-Replay Buffer size                 & $50 \, 000$ ($500 \, 000$)   \\ 
Evaluation frequency       & $200$ ($4 \, 000$) steps  \\ \hline
\end{tabular}
\caption{Online training hyperparameters, values in parenthesis apply for MinAtar environments.}
\label{tab:online_hyperparams}
\end{table}

\subsubsection{Offline Training}

If not stated otherwise, the hyperparameters for offline training are identical to the ones used during online training, stated in Tab.~\ref{tab:online_hyperparams}.
All others which differ in an Offline RL setting are listed in Tab.~\ref{tab:res:offlinetraining}.
Furthermore, parameters specific to the used algorithms are stated as well, relying on the parameters provided by the original authors.

Five times as many training steps as in the online case are used for training, which is common in Offline RL since one is interested in asymptotic performance on the fixed dataset.
Algorithms are evaluated after a certain number of training steps through $10$ interaction episodes with the environment, as it is done during the online training.
Resulting returns for each of those evaluation steps are averaged over five independent runs, given an algorithm and a dataset.
The maximum of this returns is then compared to the online policy to obtain the performance of the algorithm on a specific dataset.

\begin{table}[H]
\center
\begin{tabular}{llc}
\hline
Algorithm & Hyperparameter          & Value              \\ \hline
All       & Evaluation frequency     &  $1 \, 000$ ($20 \, 000$) steps              \\
All       & Training steps  & $500 \, 000$ ($10 \, 000 \, 000$) \\
All       & Batch size      &   $128$    \\
QRDQN    & Number of quantiles   $K$  &  $50$               \\
REM       & Number of estimators  $J$  &  $200$               \\
BCQ       & Threshold $\tau$ &   $0.3$           \\
CQL       & Temperature parameter $\alpha$ &   $0.1$         \\
CRR       & samples for advantage estimate $m$      & $4$ \\\hline
\end{tabular}
\caption{Offline training hyperparameters, values in parenthesis apply for MinAtar environments.}
\label{tab:res:offlinetraining}
\end{table}

\subsubsection{Counting Unique State-Action Pairs}
\label{sec:count_unique_saco}
Counting unique state-action pairs of large datasets is often infeasible due to time and memory restrictions. 
Therefore, we evaluate several methods to enable counting on large benchmark datasets.
We compared 1) a simple list-based approach to store all state-action pairs, 2) a Hash-Table and 3) HyperLogLog \citep{Flajolet:07}, a probabilistic counting method.
We specifically chose the HyperLogLog approach, because it can be optimally parallelized or distributed across machines and can be adapted to a "sliding window" usage \citep{Flajolet:07}, which makes it especially useful to RL scenarios.
HyperLogLog has a worst-case time complexity of $\mathcal{O}(N)$ and worst-case memory complexity of $\Theta(\log_2 \log_2 N)$, as there is no need to store a list of unique values.
Even for large $N > 10^9$, estimations typically deviate by a maximum of $2 \%$ from the true counts, as shown in \cite{Flajolet:07}.
An overview of the time and memory complexities of all methods are provided in Tab.~\ref{tab:complexities}.

\begin{table}[h]
\centering
\begin{tabular}{lcc}
\hline
Algorithm       & Time complexity    & Memory complexity     \\
\hline
List of uniques & $\mathcal{O}(N^2)$ & $\Theta(N)$           \\
Hash Table       & $\mathcal{O}(N)$   & $\Theta(N)$           \\
HyperLogLog     & $\mathcal{O}(N)$   & $\Theta(\log_2 \log_2 N)$ \\
\hline
\end{tabular}
\caption{Time and Memory complexities of different algorithms that count unique state-action pairs.}
\label{tab:complexities}
\end{table}

Based on the presented findings, we chose HyperLogLog as a probabilistic counting method to determine the number of unique state-action pairs for each dataset.

\subsubsection{Hardware and Software Specifications}
\label{sec:specs}

Throughout the experiments, PyTorch 1.8 \citep{pytorch2019} with CUDA toolkit 11 \citep{Nickolls:08} on Python 3.8 \citep{Rossum:09} was used. Plots are created using Matplotlib 3.4 \citep{Hunter:2007}.

We used a mixture of 27 GPUs, including GTX 1080 Ti, TITAN X, and TITAN V. Runs for Classic Control and MiniGrid environments took 96 hours in total, the executed runs for MinAtar environments took around 20 days.

\FloatBarrier
\subsection{Calculating TQ and SACo}
\label{sec:env_tq_saco}

All necessary measurements for calculating the TQ and SACo are listed in this section.
The maximum returns attained by the online policy are listed in Tab.~\ref{tab:res:gonline}, the average return attained by the random policy in Tab.~\ref{tab:res:grandom}. 
Furthermore, the average return and unique state-action pairs of each dataset are given in Tab.~\ref{tab:res:gd} and Tab.~\ref{tab:res:ud}. 

\begin{table}[h]
\centering
\begin{tabular}{lrrrrr}
\hline
Environment                    & \multicolumn{5}{c}{Maximum return of online policy $\bar g({\cD_{\text{expert}}})$} \\
                               & Run 1      & Run 2      & Run 3      & Run 4      & Run 5      \\ \hline
\texttt{CartPole-v1}                       & $500.00$   & $500.00$   & $500.00$   & $500.00$   & $500.00$  \\
\texttt{MountainCar-v0}                    & $-99.78$   & $-102.07$  & $-102.70$  & $-100.19$  & $-99.82$  \\
\texttt{MiniGrid-LavaGapS7-v0}             & $0.80$     & $0.91$     & $0.86$     & $0.81$     & $0.85$    \\
\texttt{MiniGrid-Dynamic-Obstacles-8x8-v0} & $0.93$     & $0.93$     & $0.93$     & $0.93$     & $0.92$    \\
\texttt{Breakout-MinAtar-v0}               & $18.02$    & $19.46$    & $17.00$    & $18.47$    & $19.32$   \\
\texttt{SpaceInvaders-MinAtar-v0}        & $26.31$    & $25.17$    & $28.45$    & $28.09$    & $28.08$   \\ \hline
\end{tabular}
\caption{Maximum return of the policy trained online.}
\label{tab:res:gonline}
\end{table}

\begin{table}[h]
\centering
\begin{tabular}{lrrrrr}
\hline
Environment                       & \multicolumn{5}{c}{Average return of the random policy $\bar g({\cD_{\text{min}}})$} \\
                                  & Run 1         & Run 2        & Run 3        & Run 4        & Run 5        \\ \hline
\texttt{CartPole-v1}                       & $22.23$       & $22.12$      & $22.04$      & $22.51$      & $22.05$      \\
\texttt{MountainCar-v0}                    & $-200.00$     & $-200.00$    & $-200.00$    & $-200.00$    & $-200.00$    \\
\texttt{MiniGrid-LavaGapS7-v0}             & $0.02$        & $0.02$       & $0.02$       & $0.02$       & $0.03$       \\
\texttt{MiniGrid-Dynamic-Obstacles-8x8-v0} & $-1.00$       & $-1.00$      & $-1.00$      & $-1.00$      & $-1.00$      \\
\texttt{Breakout-MinAtar-v0}               & $0.51$        & $0.51$       & $0.51$       & $0.51$       & $0.51$       \\
\texttt{SpaceInvaders-MinAtar-v0}          & $2.84$        & $2.83$       & $2.84$       & $2.85$       & $2.85$       \\ \hline
\end{tabular}
\caption{Average return of the random policy.}
\label{tab:res:grandom}
\end{table}

\begin{table}[h]
\centering
\begin{tabular}{llrrrrr}
\hline
Environment         & Dataset & \multicolumn{5}{c}{Average return of dataset trajectories $\bar g({\cD})$} \\
                    &         & Run 1        & Run 2       & Run 3       & Run 4       & Run 5       \\ \hline
\texttt{CartPole-v1}         & Random  & $22.23$      & $22.12$     & $22.04$     & $22.51$     & $22.05$     \\
                    & Mixed   & $27.47$      & $27.15$     & $26.79$     & $26.87$     & $26.17$     \\
                    & Replay  & $208.05$     & $249.72$    & $201.13$    & $215.27$    & $201.98$    \\
                    & Noisy   & $397.03$     & $144.62$    & $248.48$    & $116.77$    & $77.21$     \\
                    & Expert  & $497.48$     & $498.82$    & $279.08$    & $127.98$    & $109.23$    \\ \hline
\texttt{MountainCar-v0}      & Random  & $-200.00$    & $-200.00$   & $-200.00$   & $-200.00$   & $-200.00$   \\
                    & Mixed   & $-176.36$    & $-183.04$   & $-179.71$   & $-182.96$   & $-181.01$   \\
                    & Replay  & $-159.69$    & $-135.38$   & $-135.44$   & $-133.67$   & $-136.20$   \\
                    & Noisy   & $-156.13$    & $-164.55$   & $-164.98$   & $-155.13$   & $-166.10$   \\
                    & Expert  & $-118.90$    & $-135.23$   & $-128.93$   & $-134.63$   & $-132.52$   \\ \hline
\texttt{MiniGrid}            & Random  & $0.02$       & $0.02$      & $0.02$      & $0.02$      & $0.03$      \\
\texttt{-LavaGapS7-v0}       & Mixed   & $0.09$       & $0.05$      & $0.16$      & $0.10$      & $0.08$      \\
                    & Replay  & $0.59$       & $0.70$      & $0.71$      & $0.57$      & $0.62$      \\
                    & Noisy   & $0.61$       & $0.56$      & $0.70$      & $0.70$      & $0.65$      \\
                    & Expert  & $0.63$       & $0.42$      & $0.75$      & $0.70$      & $0.57$      \\ \hline
\texttt{MiniGrid-Dynamic}    & Random  & $-1.00$      & $-1.00$     & $-1.00$     & $-1.00$     & $-1.00$     \\
\texttt{-Obstacles-8x8-v0}   & Mixed   & $-0.87$      & $-0.88$     & $-0.82$     & $-0.81$     & $-0.99$     \\
                    & Replay  & $0.58$       & $0.71$      & $0.53$      & $0.57$      & $0.46$      \\
                    & Noisy   & $-0.09$      & $0.14$      & $0.16$      & $0.19$      & $-0.42$     \\
                    & Expert  & $0.89$       & $0.89$      & $0.92$      & $0.93$      & $0.00$      \\ \hline
\texttt{Breakout-MinAtar-v0} & Random  & $0.51$       & $0.51$      & $0.51$      & $0.51$      & $0.51$      \\
                    & Mixed   & $0.80$       & $0.81$      & $0.80$      & $0.80$      & $0.81$      \\
                    & Replay  & $9.53$       & $10.04$     & $8.92$      & $9.25$      & $9.72$      \\
                    & Noisy   & $4.91$       & $6.90$      & $4.48$      & $4.86$      & $5.78$      \\
                    & Expert  & $13.59$      & $15.61$     & $12.56$     & $14.02$     & $15.28$     \\ \hline
\texttt{SpaceInvaders}       & Random  & $2.84$       & $2.83$      & $2.84$      & $2.85$      & $2.85$      \\
\texttt{-MinAtar-v0}         & Mixed   & $4.07$       & $2.81$      & $4.07$      & $3.54$      & $3.71$      \\
                    & Replay  & $14.85$      & $14.66$     & $15.62$     & $15.46$     & $15.48$     \\
                    & Noisy   & $9.71$       & $3.22$      & $11.46$     & $11.16$     & $13.32$     \\
                    & Expert  & $14.26$      & $2.28$      & $16.65$     & $14.21$     & $18.96$     \\ \hline
\end{tabular}
\caption{Average return of dataset trajectories per environment and dataset creation setting for every run.}
\label{tab:res:gd}
\end{table}

\begin{table}[h]
\centering
\begin{tabular}{llrrrrr}
\hline
Environment         & Dataset & \multicolumn{5}{c}{Unique state-action pairs in dataset $u_{s, a}(\cD)$}          \\
                    &         & Run 1         & Run 2         & Run 3         & Run 4         & Run 5         \\ \hline
\texttt{CartPole-v1}         & Random  & $55\,916$     & $52\,888$     & $58\,127$     & $52\,100$     & $54\,085$     \\
                    & Mixed   & $52\,409$     & $59\,350$     & $60\,820$     & $52\,896$     & $53\,467$     \\
                    & Replay  & $95\,384$     & $94\,749$     & $95\,950$     & $96\,499$     & $97\,263$     \\
                    & Noisy   & $70\,710$     & $64\,392$     & $78\,952$     & $53\,173$     & $51\,771$     \\
                    & Expert  & $15\,496$     & $43\,860$     & $30\,434$     & $19\,349$     & $14\,909$     \\ \hline
\texttt{MountainCar-v0}      & Random  & $3\,315$      & $3\,294$      & $3\,448$      & $3\,015$      & $3\,212$      \\
                    & Mixed   & $5\,294$      & $5\,838$      & $5\,891$      & $4\,725$      & $5\,980$      \\
                    & Replay  & $13\,740$     & $11\,183$     & $12\,411$     & $12\,444$     & $12\,549$     \\
                    & Noisy   & $14\,669$     & $14\,187$     & $14\,138$     & $12\,934$     & $14\,575$     \\
                    & Expert  & $2\,947$      & $3\,768$      & $3\,709$      & $2\,432$      & $4\,123$      \\ \hline
\texttt{MiniGrid}            & Random  & $1\,842$      & $1\,847$      & $1\,879$      & $1\,919$      & $1\,840$      \\
\texttt{-LavaGapS7-v0}       & Mixed   & $1\,819$      & $1\,813$      & $1\,827$      & $1\,866$      & $1\,808$      \\
                    & Replay  & $1\,368$      & $1\,394$      & $1\,343$      & $1\,401$      & $1\,421$      \\
                    & Noisy   & $1\,310$      & $1\,288$      & $1\,450$      & $1\,360$      & $1\,311$      \\
                    & Expert  & $114$         & $112$         & $116$         & $116$         & $104$         \\ \hline
\texttt{MiniGrid-Dynamic}    & Random  & $41\,497$     & $40\,791$     & $40\,843$     & $41\,591$     & $41\,110$     \\
\texttt{-Obstacles-8x8-v0}   & Mixed   & $43\,278$     & $44\,118$     & $42\,968$     & $43\,164$     & $37\,401$     \\
                    & Replay  & $45\,283$     & $45\,423$     & $46\,916$     & $46\,191$     & $44\,801$     \\
                    & Noisy   & $46\,571$     & $49\,191$     & $45\,526$     & $44\,998$     & $40\,115$     \\
                    & Expert  & $38\,704$     & $42\,140$     & $36\,202$     & $35\,331$     & $14\,435$     \\ \hline
\texttt{Breakout-MinAtar-v0} & Random  & $16\,218$     & $15\,915$     & $16\,459$     & $16\,247$     & $16\,182$     \\
                    & Mixed   & $18\,351$     & $18\,608$     & $20\,175$     & $18\,179$     & $19\,166$     \\
                    & Replay  & $62\,737$     & $54\,810$     & $91\,183$     & $61\,433$     & $59\,980$     \\
                    & Noisy   & $38\,326$     & $44\,074$     & $49\,592$     & $38\,527$     & $42\,789$     \\
                    & Expert  & $5\,809$      & $5\,914$      & $9\,006$      & $4\,950$      & $6\,535$      \\ \hline
\texttt{SpaceInvaders}       & Random  & $935\,920$    & $920\,093$    & $925\,641$    & $934\,557$    & $933\,024$    \\
\texttt{-MinAtar-v0}         & Mixed   & $860\,935$    & $777\,601$    & $898\,787$    & $869\,611$    & $901\,127$    \\
                    & Replay  & $1\,507\,798$ & $1\,463\,980$ & $1\,446\,246$ & $1\,439\,305$ & $1\,426\,702$ \\
                    & Noisy   & $933\,016$    & $582\,548$    & $1\,053\,096$ & $955\,208$    & $1\,057\,379$ \\
                    & Expert  & $250\,085$    & $8\,163$      & $407\,306$    & $239\,007$    & $409\,359$    \\ \hline
\end{tabular}
\caption{Unique state-action pairs per environment and dataset creation setting for every run.}
\label{tab:res:ud}
\end{table}

\clearpage
\subsection{Correlations between TQ, SACo, Policy-Entropy and Agent Performance}
\label{sec:correlations}

In the following, detailed plots showing all correlations between TQ, SACo, the entropy of the approximated behavioral policy that created the dataset, and the performance of offline agents are given.
The policy's entropy is calculated by the probabilities of sampling actions for a state, given by the final network output of the same network used for BC.
We included this measure as a comparison to TQ and SACo, as it is easy to obtain by a practitioner.
Nevertheless, the entropy of the behavioral policy has no direct connection to exploitation or exploration as we have shown for our other measures. 
To give the intuition why this is the case, acting as random as possible does not guarantee to explore the state-action space thoroughly as there could be a bottleneck such as the door of a room that prevents to explore the environment further if the policy does not navigate through the door by chance.
Similarly, low entropy thus acting very deterministic, does not necessarily correspond to high exploitation as it could be bad deterministic behaviour as well.
Other possible measures such as using the reward distribution, episode length distribution \citep{Monier:20}, reward sparsity rate, etc. suffer from the same issue.

Scatterplots of TQ, SACo and the entropy estimate of the behavioral policy are depicted in Fig.~\ref{fig:tq_saco_entropy}, where we observe that there is no correlation between TQ and SACo, a stronger negative correlation between the entropy and TQ and a medium positive correlation between the entropy and SACo.

\begin{figure}[h]
    \centering
    \includegraphics[width=\textwidth]{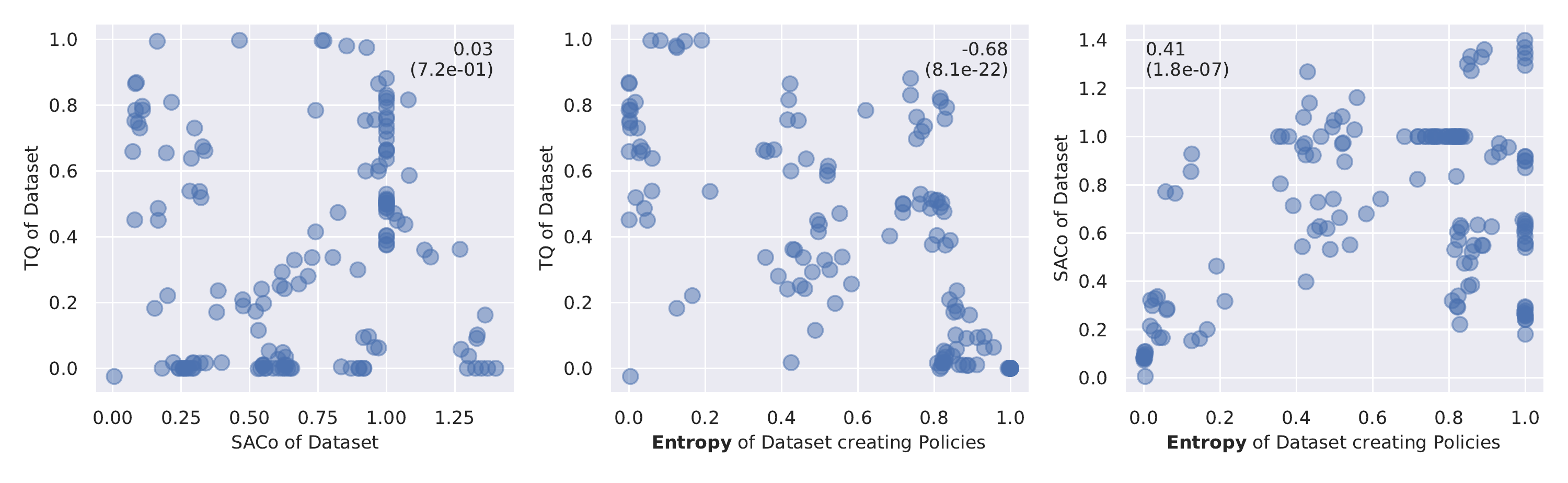}
    \caption{Scatterplots of TQ, SACo and Entropy. Annotation insets state correlation coefficients with corresponding p-value in brackets.}
    \label{fig:tq_saco_entropy}
\end{figure}

Scatterplots between offline agent performance and the TQ are given in Fig.\ref{fig:ap_tq}. As expected, BC has a very strong positive correlation with the TQ of the underlying dataset. 
We observe that the performance of off-policy algorithms DQN, QRDQN and REM that do not constrain the learned policy towards the behavioral policy, exhibit weak negative correlations with the TQ.
Conversely, algorithms that do constrain the learned policy towards the behavioral policy, BCQ, CQL and CRR, exhibit weak positive correlations.

Scatterplots between offline agent performance and the SACo are given in Fig.\ref{fig:ap_saco}. While BC shows very weak if any correlation with the SACo of the underlying dataset, all other algorithms exhibit medium to high positive correlations.

Scatterplots between offline agent performance and the entropy are given in Fig.\ref{fig:ap_entropy}. 
As shown in Fig.\ref{fig:tq_saco_entropy}, TQ and Entropy as well as SACo and Entropy are correlated.
Therefore, BC exhibits a medium negative correlation with entropy, as lower-entropy datasets were created by high performing policies in our experiments.

Algorithms that constrain the learned policy towards the behavioral policy were found to be uncorrelated with the entropy of the behavioral policy, whereas all other algorithms that do not enforce such a constraint have medium positive correlations.
An intuitive explanation would be that algorithms that constrain towards the behavioral policy implicitly use the entropy of the behavioral policy during training, to adjust between searching for the optimal policy and staying close to the behavioral policy.
This is especially easy to see in the case of BCQ, which results in DQN if the behavioral policy is the random policy or to BC if the behavioral policy selected actions greedily.

\begin{figure}[h]
    \centering
    \includegraphics[width=\textwidth]{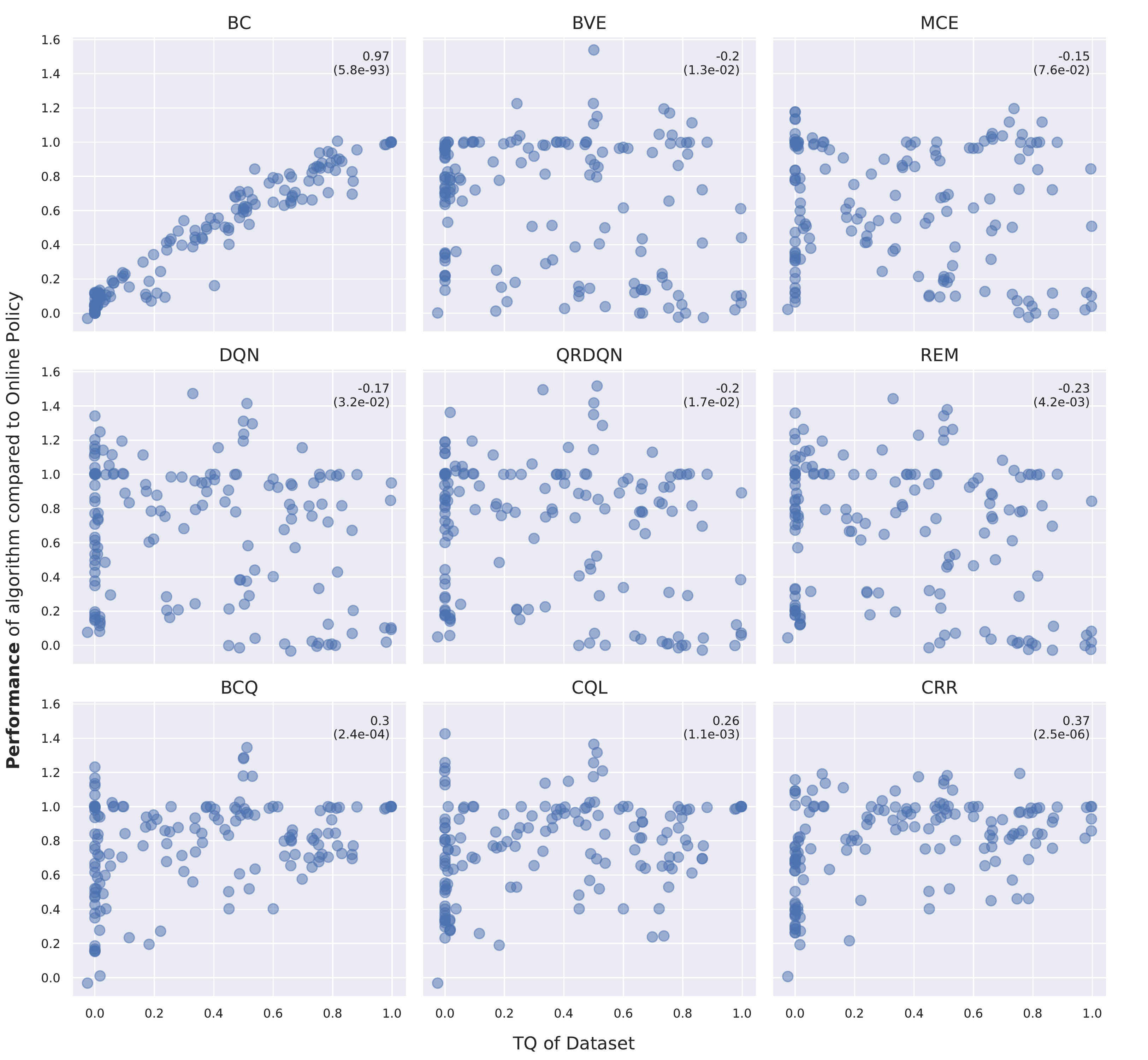}
    \caption{Scatterplots of performance of algorithms and TQ. Annotation insets state correlation coefficients with corresponding p-value in brackets.}
    \label{fig:ap_tq}
\end{figure}

\begin{figure}[h]
    \centering
    \includegraphics[width=\textwidth]{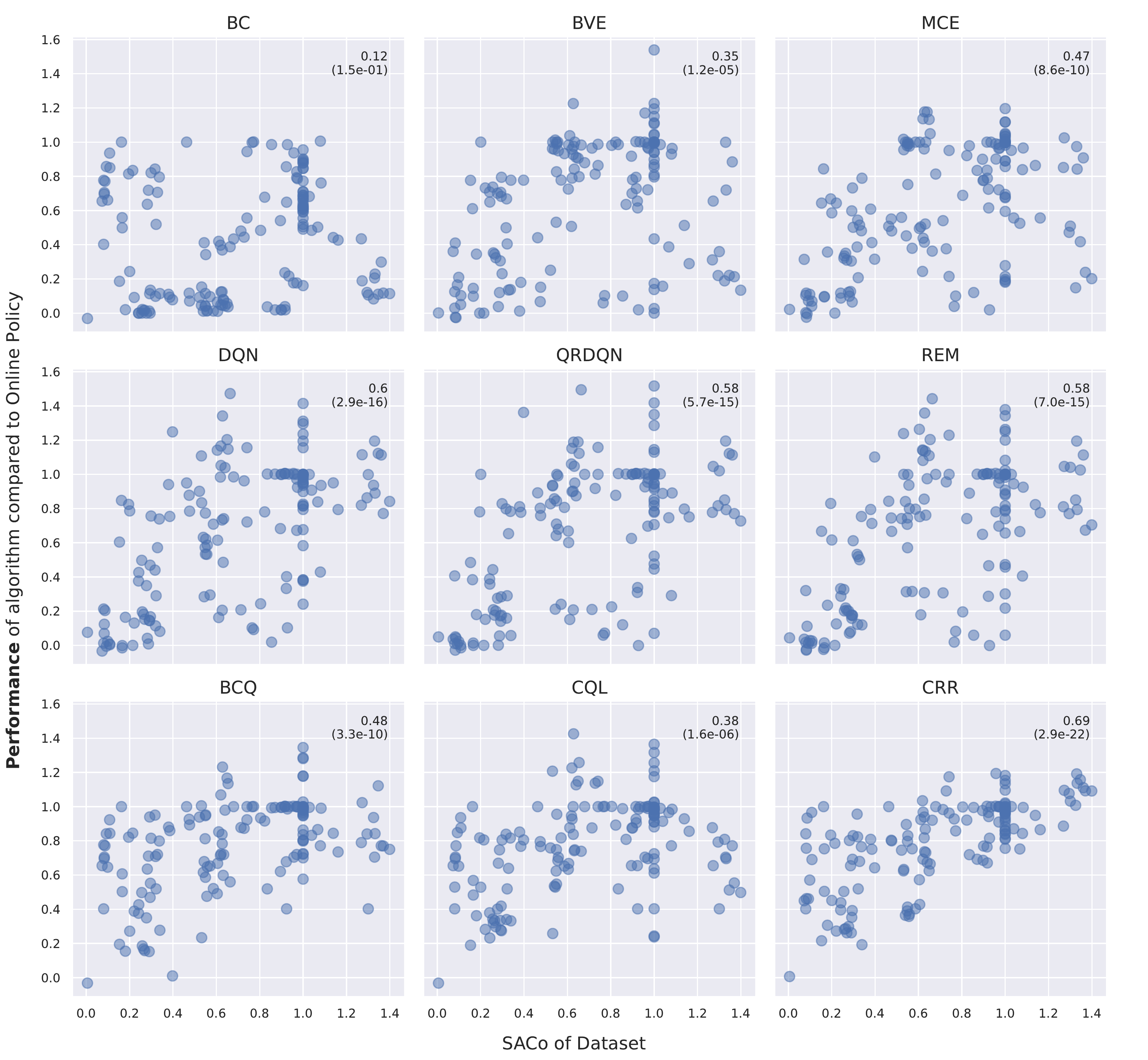}
    \caption{Scatterplots of performance of algorithms and SACo. Annotation insets state correlation coefficients with corresponding p-value in brackets.}
    \label{fig:ap_saco}
\end{figure}

\begin{figure}[h]
    \centering
    \includegraphics[width=\textwidth]{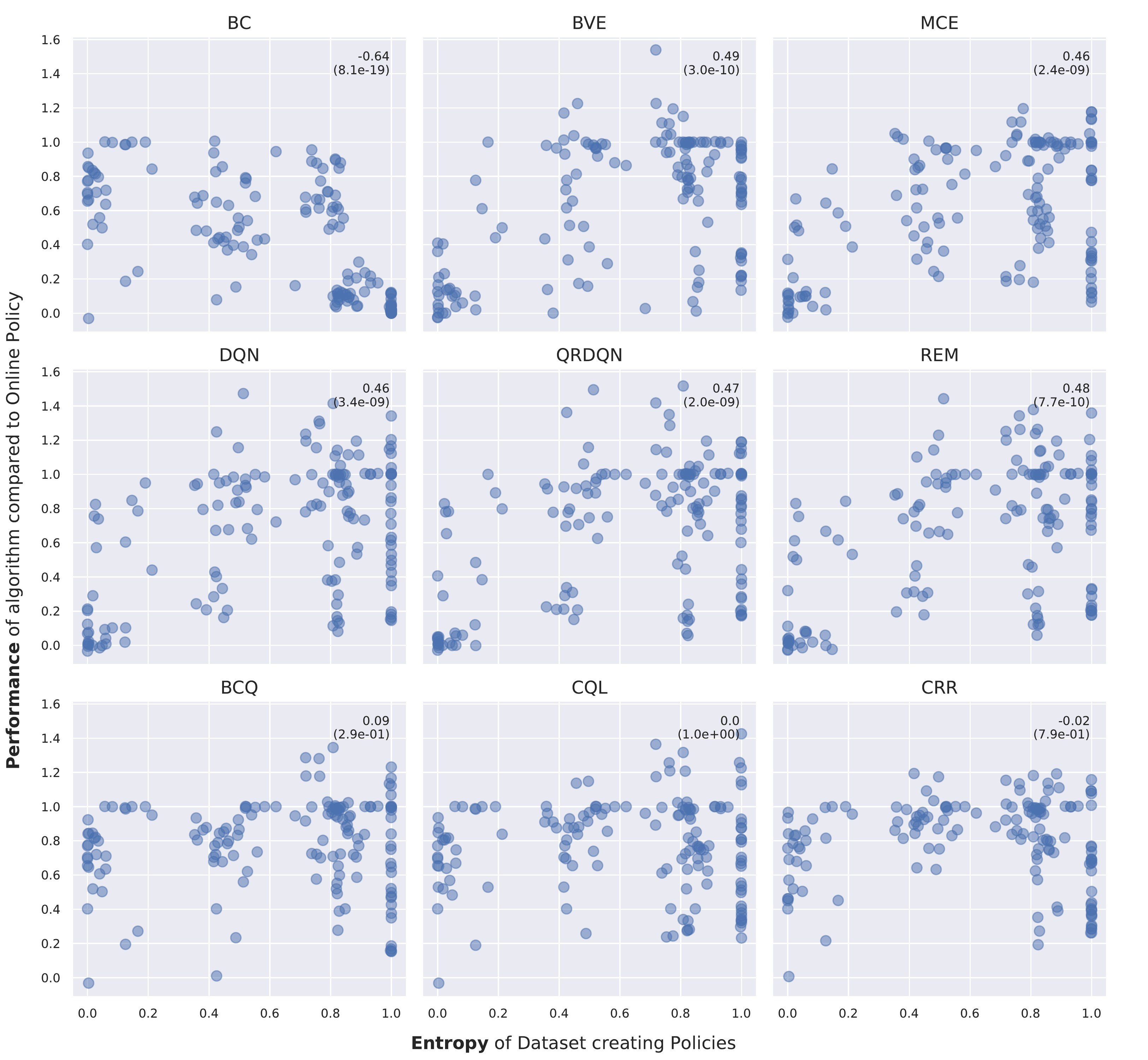}
    \caption{Scatterplots of performance of algorithms and Entropy. Annotation insets state correlation coefficients with corresponding p-value in brackets.}
    \label{fig:ap_entropy}
\end{figure}

\clearpage
\subsection{Performance of Offline Algorithms}
\label{sec:perf_offline_algo}

Results for the best policies learned during the offline training given the generation scheme of the dataset used for training are provided in Fig.~\ref{fig:buffertypes}.

\begin{figure}[h]
    \centering
    \includegraphics[width=\textwidth]{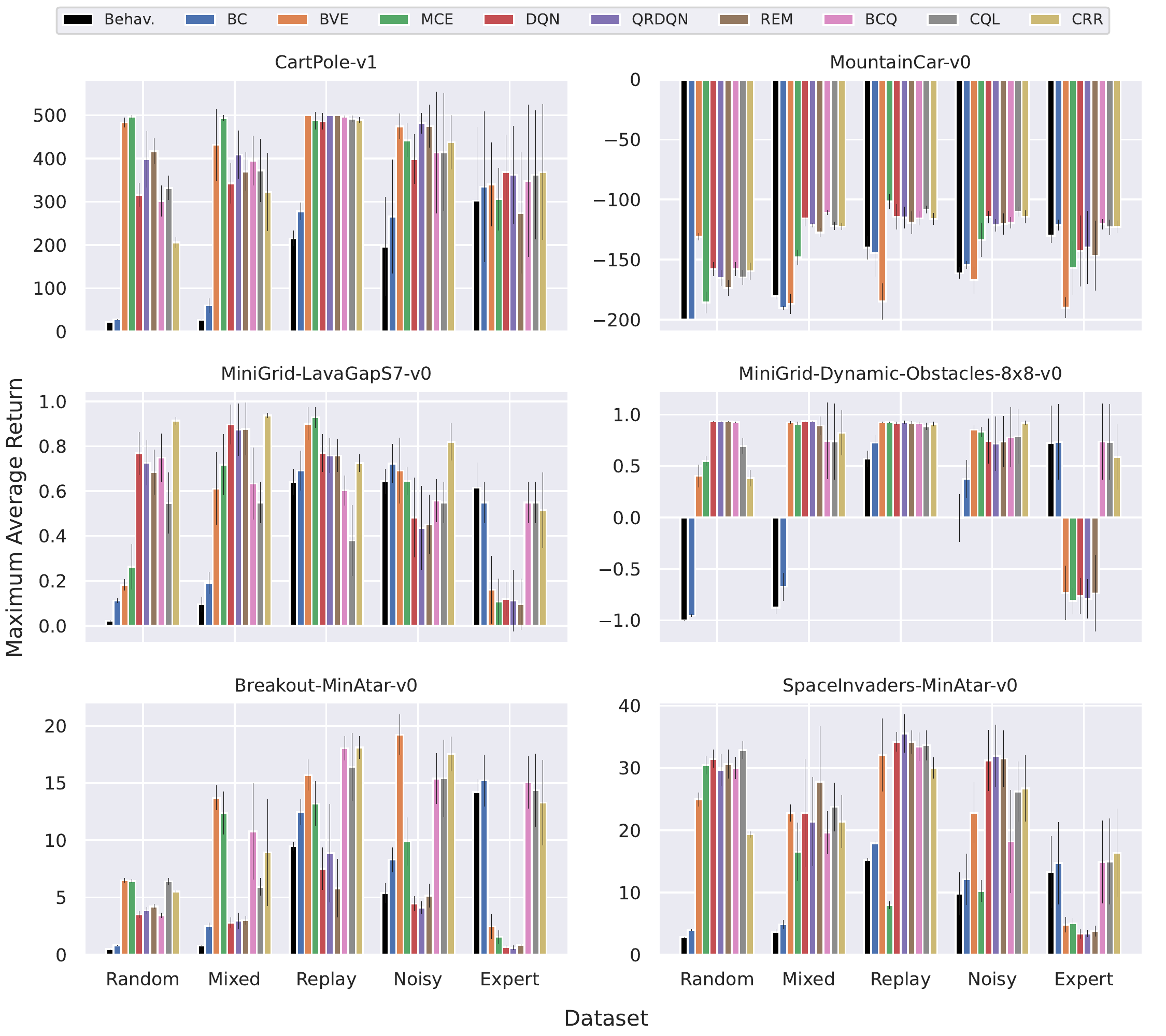}
    \caption{Maximum average return of policies learned during the offline training. Error bars denote the standard deviation over runs and datasets created by the same dataset creation scheme on the same environment. Behav. denotes the behavioral policy used to generate the dataset, thus is the average return of episodes contained in the dataset. }
    \label{fig:buffertypes}
\end{figure}

Performances $\omega$ for every algorithm with the respective dataset settings are given in Tab.~\ref{tab:res:performance_1}.
The results are averaged over different dataset creation seeds and multiple runs carried out with each algorithm, compared to the respective online policy used to create the dataset.

\begin{table}[h]
 \centering 
 \begin{tabular}{lccccccccc} \hline
Dataset 	 & 	 BC	 & 	BVE	 & 	MCE	 & 	DQN	 & 	QRDQN	 & 	REM	 & 	BCQ	 & 	CQL	 & 	CRR 	 \\
\hline \multicolumn{10}{c}{\texttt{CartPole-v1}} \\ 
Random 	 & 	 \begin{tabular}[c]{@{}c@{}}$0.01 $ \\ $\pm 0.00$\end{tabular}	 & 	\begin{tabular}[c]{@{}c@{}}$\mathbf{0.96} $ \\ $\pm 0.02$\end{tabular}	 & 	\begin{tabular}[c]{@{}c@{}}$\mathbf{0.99} $ \\ $\pm 0.01$\end{tabular}	 & 	\begin{tabular}[c]{@{}c@{}}$0.61 $ \\ $\pm 0.06$\end{tabular}	 & 	\begin{tabular}[c]{@{}c@{}}$0.79 $ \\ $\pm 0.14$\end{tabular}	 & 	\begin{tabular}[c]{@{}c@{}}$0.83 $ \\ $\pm 0.06$\end{tabular}	 & 	\begin{tabular}[c]{@{}c@{}}$0.59 $ \\ $\pm 0.07$\end{tabular}	 & 	\begin{tabular}[c]{@{}c@{}}$0.65 $ \\ $\pm 0.06$\end{tabular}	 & 	\begin{tabular}[c]{@{}c@{}}$0.38 $ \\ $\pm 0.03$\end{tabular} 	 \\
Mixed 	 & 	 \begin{tabular}[c]{@{}c@{}}$0.08 $ \\ $\pm 0.04$\end{tabular}	 & 	\begin{tabular}[c]{@{}c@{}}$0.86 $ \\ $\pm 0.17$\end{tabular}	 & 	\begin{tabular}[c]{@{}c@{}}$\mathbf{0.99} $ \\ $\pm 0.02$\end{tabular}	 & 	\begin{tabular}[c]{@{}c@{}}$0.67 $ \\ $\pm 0.10$\end{tabular}	 & 	\begin{tabular}[c]{@{}c@{}}$0.81 $ \\ $\pm 0.12$\end{tabular}	 & 	\begin{tabular}[c]{@{}c@{}}$0.73 $ \\ $\pm 0.09$\end{tabular}	 & 	\begin{tabular}[c]{@{}c@{}}$0.78 $ \\ $\pm 0.12$\end{tabular}	 & 	\begin{tabular}[c]{@{}c@{}}$0.73 $ \\ $\pm 0.15$\end{tabular}	 & 	\begin{tabular}[c]{@{}c@{}}$0.63 $ \\ $\pm 0.19$\end{tabular} 	 \\
Replay 	 & 	 \begin{tabular}[c]{@{}c@{}}$0.54 $ \\ $\pm 0.04$\end{tabular}	 & 	\begin{tabular}[c]{@{}c@{}}$\mathbf{1.00} $ \\ $\pm 0.00$\end{tabular}	 & 	\begin{tabular}[c]{@{}c@{}}$\mathbf{0.97} $ \\ $\pm 0.04$\end{tabular}	 & 	\begin{tabular}[c]{@{}c@{}}$\mathbf{0.97} $ \\ $\pm 0.04$\end{tabular}	 & 	\begin{tabular}[c]{@{}c@{}}$\mathbf{1.00} $ \\ $\pm 0.00$\end{tabular}	 & 	\begin{tabular}[c]{@{}c@{}}$\mathbf{1.00} $ \\ $\pm 0.00$\end{tabular}	 & 	\begin{tabular}[c]{@{}c@{}}$\mathbf{0.99} $ \\ $\pm 0.01$\end{tabular}	 & 	\begin{tabular}[c]{@{}c@{}}$\mathbf{0.98} $ \\ $\pm 0.02$\end{tabular}	 & 	\begin{tabular}[c]{@{}c@{}}$\mathbf{0.98} $ \\ $\pm 0.01$\end{tabular} 	 \\
Noisy 	 & 	 \begin{tabular}[c]{@{}c@{}}$0.51 $ \\ $\pm 0.28$\end{tabular}	 & 	\begin{tabular}[c]{@{}c@{}}$\mathbf{0.95} $ \\ $\pm 0.06$\end{tabular}	 & 	\begin{tabular}[c]{@{}c@{}}$0.88 $ \\ $\pm 0.08$\end{tabular}	 & 	\begin{tabular}[c]{@{}c@{}}$0.79 $ \\ $\pm 0.12$\end{tabular}	 & 	\begin{tabular}[c]{@{}c@{}}$\mathbf{0.96} $ \\ $\pm 0.05$\end{tabular}	 & 	\begin{tabular}[c]{@{}c@{}}$\mathbf{0.95} $ \\ $\pm 0.10$\end{tabular}	 & 	\begin{tabular}[c]{@{}c@{}}$0.82 $ \\ $\pm 0.30$\end{tabular}	 & 	\begin{tabular}[c]{@{}c@{}}$0.82 $ \\ $\pm 0.28$\end{tabular}	 & 	\begin{tabular}[c]{@{}c@{}}$0.87 $ \\ $\pm 0.13$\end{tabular} 	 \\
Expert 	 & 	 \begin{tabular}[c]{@{}c@{}}$\mathbf{0.65} $ \\ $\pm 0.36$\end{tabular}	 & 	\begin{tabular}[c]{@{}c@{}}$\mathbf{0.67} $ \\ $\pm 0.20$\end{tabular}	 & 	\begin{tabular}[c]{@{}c@{}}$\mathbf{0.59} $ \\ $\pm 0.15$\end{tabular}	 & 	\begin{tabular}[c]{@{}c@{}}$\mathbf{0.73} $ \\ $\pm 0.18$\end{tabular}	 & 	\begin{tabular}[c]{@{}c@{}}$\mathbf{0.71} $ \\ $\pm 0.24$\end{tabular}	 & 	\begin{tabular}[c]{@{}c@{}}$0.53 $ \\ $\pm 0.29$\end{tabular}	 & 	\begin{tabular}[c]{@{}c@{}}$\mathbf{0.68} $ \\ $\pm 0.37$\end{tabular}	 & 	\begin{tabular}[c]{@{}c@{}}$\mathbf{0.71} $ \\ $\pm 0.31$\end{tabular}	 & 	\begin{tabular}[c]{@{}c@{}}$\mathbf{0.72} $ \\ $\pm 0.33$\end{tabular} 	 \\
\hline \multicolumn{10}{c}{\texttt{MountainCar-v0}} \\ 
Random 	 & 	 \begin{tabular}[c]{@{}c@{}}$0.00 $ \\ $\pm 0.00$\end{tabular}	 & 	\begin{tabular}[c]{@{}c@{}}$\mathbf{0.70} $ \\ $\pm 0.03$\end{tabular}	 & 	\begin{tabular}[c]{@{}c@{}}$0.14 $ \\ $\pm 0.09$\end{tabular}	 & 	\begin{tabular}[c]{@{}c@{}}$0.42 $ \\ $\pm 0.06$\end{tabular}	 & 	\begin{tabular}[c]{@{}c@{}}$0.35 $ \\ $\pm 0.06$\end{tabular}	 & 	\begin{tabular}[c]{@{}c@{}}$0.26 $ \\ $\pm 0.06$\end{tabular}	 & 	\begin{tabular}[c]{@{}c@{}}$0.42 $ \\ $\pm 0.06$\end{tabular}	 & 	\begin{tabular}[c]{@{}c@{}}$0.35 $ \\ $\pm 0.07$\end{tabular}	 & 	\begin{tabular}[c]{@{}c@{}}$0.41 $ \\ $\pm 0.07$\end{tabular} 	 \\
Mixed 	 & 	 \begin{tabular}[c]{@{}c@{}}$0.10 $ \\ $\pm 0.02$\end{tabular}	 & 	\begin{tabular}[c]{@{}c@{}}$0.13 $ \\ $\pm 0.08$\end{tabular}	 & 	\begin{tabular}[c]{@{}c@{}}$0.52 $ \\ $\pm 0.07$\end{tabular}	 & 	\begin{tabular}[c]{@{}c@{}}$\mathbf{0.85} $ \\ $\pm 0.07$\end{tabular}	 & 	\begin{tabular}[c]{@{}c@{}}$0.80 $ \\ $\pm 0.02$\end{tabular}	 & 	\begin{tabular}[c]{@{}c@{}}$0.73 $ \\ $\pm 0.04$\end{tabular}	 & 	\begin{tabular}[c]{@{}c@{}}$\mathbf{0.90} $ \\ $\pm 0.03$\end{tabular}	 & 	\begin{tabular}[c]{@{}c@{}}$0.79 $ \\ $\pm 0.03$\end{tabular}	 & 	\begin{tabular}[c]{@{}c@{}}$0.78 $ \\ $\pm 0.03$\end{tabular} 	 \\
Replay 	 & 	 \begin{tabular}[c]{@{}c@{}}$0.56 $ \\ $\pm 0.20$\end{tabular}	 & 	\begin{tabular}[c]{@{}c@{}}$0.15 $ \\ $\pm 0.15$\end{tabular}	 & 	\begin{tabular}[c]{@{}c@{}}$\mathbf{0.99} $ \\ $\pm 0.07$\end{tabular}	 & 	\begin{tabular}[c]{@{}c@{}}$0.86 $ \\ $\pm 0.11$\end{tabular}	 & 	\begin{tabular}[c]{@{}c@{}}$0.86 $ \\ $\pm 0.10$\end{tabular}	 & 	\begin{tabular}[c]{@{}c@{}}$0.81 $ \\ $\pm 0.10$\end{tabular}	 & 	\begin{tabular}[c]{@{}c@{}}$0.85 $ \\ $\pm 0.05$\end{tabular}	 & 	\begin{tabular}[c]{@{}c@{}}$\mathbf{0.92} $ \\ $\pm 0.03$\end{tabular}	 & 	\begin{tabular}[c]{@{}c@{}}$0.85 $ \\ $\pm 0.05$\end{tabular} 	 \\
Noisy 	 & 	 \begin{tabular}[c]{@{}c@{}}$0.46 $ \\ $\pm 0.03$\end{tabular}	 & 	\begin{tabular}[c]{@{}c@{}}$0.33 $ \\ $\pm 0.12$\end{tabular}	 & 	\begin{tabular}[c]{@{}c@{}}$0.67 $ \\ $\pm 0.15$\end{tabular}	 & 	\begin{tabular}[c]{@{}c@{}}$\mathbf{0.86} $ \\ $\pm 0.06$\end{tabular}	 & 	\begin{tabular}[c]{@{}c@{}}$0.79 $ \\ $\pm 0.05$\end{tabular}	 & 	\begin{tabular}[c]{@{}c@{}}$0.80 $ \\ $\pm 0.09$\end{tabular}	 & 	\begin{tabular}[c]{@{}c@{}}$0.81 $ \\ $\pm 0.05$\end{tabular}	 & 	\begin{tabular}[c]{@{}c@{}}$\mathbf{0.91} $ \\ $\pm 0.04$\end{tabular}	 & 	\begin{tabular}[c]{@{}c@{}}$\mathbf{0.86} $ \\ $\pm 0.06$\end{tabular} 	 \\
Expert 	 & 	 \begin{tabular}[c]{@{}c@{}}$\mathbf{0.79} $ \\ $\pm 0.05$\end{tabular}	 & 	\begin{tabular}[c]{@{}c@{}}$0.10 $ \\ $\pm 0.09$\end{tabular}	 & 	\begin{tabular}[c]{@{}c@{}}$0.43 $ \\ $\pm 0.23$\end{tabular}	 & 	\begin{tabular}[c]{@{}c@{}}$0.58 $ \\ $\pm 0.30$\end{tabular}	 & 	\begin{tabular}[c]{@{}c@{}}$0.61 $ \\ $\pm 0.31$\end{tabular}	 & 	\begin{tabular}[c]{@{}c@{}}$0.54 $ \\ $\pm 0.29$\end{tabular}	 & 	\begin{tabular}[c]{@{}c@{}}$\mathbf{0.80} $ \\ $\pm 0.04$\end{tabular}	 & 	\begin{tabular}[c]{@{}c@{}}$\mathbf{0.78} $ \\ $\pm 0.07$\end{tabular}	 & 	\begin{tabular}[c]{@{}c@{}}$\mathbf{0.78} $ \\ $\pm 0.06$\end{tabular} 	 \\
\hline
\end{tabular}
\caption{Performance of algorithms averaged over dataset creation seeds and offline runs, where $\pm$ captures the standard deviation. Results are for Classic Control environments on all nine algorithms.}
\label{tab:res:performance_1}
\end{table}

\begin{table}[h]
 \centering 
 \begin{tabular}{lccccccccc} \hline
Dataset 	 & 	 BC	 & 	BVE	 & 	MCE	 & 	DQN	 & 	QRDQN	 & 	REM	 & 	BCQ	 & 	CQL	 & 	CRR 	 \\
\hline \multicolumn{10}{c}{\texttt{MiniGrid-LavaGapS7-v0}} \\ 
Random 	 & 	 \begin{tabular}[c]{@{}c@{}}$0.11 $ \\ $\pm 0.01$\end{tabular}	 & 	\begin{tabular}[c]{@{}c@{}}$0.20 $ \\ $\pm 0.03$\end{tabular}	 & 	\begin{tabular}[c]{@{}c@{}}$0.30 $ \\ $\pm 0.13$\end{tabular}	 & 	\begin{tabular}[c]{@{}c@{}}$0.91 $ \\ $\pm 0.12$\end{tabular}	 & 	\begin{tabular}[c]{@{}c@{}}$0.86 $ \\ $\pm 0.14$\end{tabular}	 & 	\begin{tabular}[c]{@{}c@{}}$0.80 $ \\ $\pm 0.13$\end{tabular}	 & 	\begin{tabular}[c]{@{}c@{}}$0.88 $ \\ $\pm 0.14$\end{tabular}	 & 	\begin{tabular}[c]{@{}c@{}}$0.63 $ \\ $\pm 0.14$\end{tabular}	 & 	\begin{tabular}[c]{@{}c@{}}$\mathbf{1.09} $ \\ $\pm 0.05$\end{tabular} 	 \\
Mixed 	 & 	 \begin{tabular}[c]{@{}c@{}}$0.21 $ \\ $\pm 0.06$\end{tabular}	 & 	\begin{tabular}[c]{@{}c@{}}$0.72 $ \\ $\pm 0.22$\end{tabular}	 & 	\begin{tabular}[c]{@{}c@{}}$0.85 $ \\ $\pm 0.18$\end{tabular}	 & 	\begin{tabular}[c]{@{}c@{}}$\mathbf{1.06} $ \\ $\pm 0.11$\end{tabular}	 & 	\begin{tabular}[c]{@{}c@{}}$1.03 $ \\ $\pm 0.13$\end{tabular}	 & 	\begin{tabular}[c]{@{}c@{}}$1.04 $ \\ $\pm 0.13$\end{tabular}	 & 	\begin{tabular}[c]{@{}c@{}}$0.75 $ \\ $\pm 0.20$\end{tabular}	 & 	\begin{tabular}[c]{@{}c@{}}$0.65 $ \\ $\pm 0.13$\end{tabular}	 & 	\begin{tabular}[c]{@{}c@{}}$\mathbf{1.11} $ \\ $\pm 0.05$\end{tabular} 	 \\
Replay 	 & 	 \begin{tabular}[c]{@{}c@{}}$0.81 $ \\ $\pm 0.08$\end{tabular}	 & 	\begin{tabular}[c]{@{}c@{}}$\mathbf{1.07} $ \\ $\pm 0.08$\end{tabular}	 & 	\begin{tabular}[c]{@{}c@{}}$\mathbf{1.10} $ \\ $\pm 0.06$\end{tabular}	 & 	\begin{tabular}[c]{@{}c@{}}$0.91 $ \\ $\pm 0.13$\end{tabular}	 & 	\begin{tabular}[c]{@{}c@{}}$0.90 $ \\ $\pm 0.12$\end{tabular}	 & 	\begin{tabular}[c]{@{}c@{}}$0.90 $ \\ $\pm 0.13$\end{tabular}	 & 	\begin{tabular}[c]{@{}c@{}}$0.71 $ \\ $\pm 0.07$\end{tabular}	 & 	\begin{tabular}[c]{@{}c@{}}$0.43 $ \\ $\pm 0.17$\end{tabular}	 & 	\begin{tabular}[c]{@{}c@{}}$0.85 $ \\ $\pm 0.04$\end{tabular} 	 \\
Noisy 	 & 	 \begin{tabular}[c]{@{}c@{}}$\mathbf{0.85} $ \\ $\pm 0.12$\end{tabular}	 & 	\begin{tabular}[c]{@{}c@{}}$0.82 $ \\ $\pm 0.21$\end{tabular}	 & 	\begin{tabular}[c]{@{}c@{}}$0.76 $ \\ $\pm 0.10$\end{tabular}	 & 	\begin{tabular}[c]{@{}c@{}}$0.57 $ \\ $\pm 0.25$\end{tabular}	 & 	\begin{tabular}[c]{@{}c@{}}$0.51 $ \\ $\pm 0.26$\end{tabular}	 & 	\begin{tabular}[c]{@{}c@{}}$0.53 $ \\ $\pm 0.18$\end{tabular}	 & 	\begin{tabular}[c]{@{}c@{}}$0.66 $ \\ $\pm 0.13$\end{tabular}	 & 	\begin{tabular}[c]{@{}c@{}}$0.65 $ \\ $\pm 0.13$\end{tabular}	 & 	\begin{tabular}[c]{@{}c@{}}$\mathbf{0.97} $ \\ $\pm 0.12$\end{tabular} 	 \\
Expert 	 & 	 \begin{tabular}[c]{@{}c@{}}$\mathbf{0.65} $ \\ $\pm 0.13$\end{tabular}	 & 	\begin{tabular}[c]{@{}c@{}}$0.17 $ \\ $\pm 0.19$\end{tabular}	 & 	\begin{tabular}[c]{@{}c@{}}$0.10 $ \\ $\pm 0.12$\end{tabular}	 & 	\begin{tabular}[c]{@{}c@{}}$0.12 $ \\ $\pm 0.09$\end{tabular}	 & 	\begin{tabular}[c]{@{}c@{}}$0.10 $ \\ $\pm 0.15$\end{tabular}	 & 	\begin{tabular}[c]{@{}c@{}}$0.08 $ \\ $\pm 0.13$\end{tabular}	 & 	\begin{tabular}[c]{@{}c@{}}$\mathbf{0.65} $ \\ $\pm 0.13$\end{tabular}	 & 	\begin{tabular}[c]{@{}c@{}}$\mathbf{0.65} $ \\ $\pm 0.13$\end{tabular}	 & 	\begin{tabular}[c]{@{}c@{}}$\mathbf{0.60} $ \\ $\pm 0.21$\end{tabular} 	 \\
\hline \multicolumn{10}{c}{\texttt{MiniGrid-Dynamic-Obstacles-8x8-v0}} \\ 
Random 	 & 	 \begin{tabular}[c]{@{}c@{}}$0.02 $ \\ $\pm 0.01$\end{tabular}	 & 	\begin{tabular}[c]{@{}c@{}}$0.73 $ \\ $\pm 0.06$\end{tabular}	 & 	\begin{tabular}[c]{@{}c@{}}$0.80 $ \\ $\pm 0.03$\end{tabular}	 & 	\begin{tabular}[c]{@{}c@{}}$\mathbf{1.00} $ \\ $\pm 0.00$\end{tabular}	 & 	\begin{tabular}[c]{@{}c@{}}$\mathbf{1.00} $ \\ $\pm 0.00$\end{tabular}	 & 	\begin{tabular}[c]{@{}c@{}}$\mathbf{1.00} $ \\ $\pm 0.00$\end{tabular}	 & 	\begin{tabular}[c]{@{}c@{}}$\mathbf{1.00} $ \\ $\pm 0.00$\end{tabular}	 & 	\begin{tabular}[c]{@{}c@{}}$0.88 $ \\ $\pm 0.04$\end{tabular}	 & 	\begin{tabular}[c]{@{}c@{}}$0.72 $ \\ $\pm 0.04$\end{tabular} 	 \\
Mixed 	 & 	 \begin{tabular}[c]{@{}c@{}}$0.17 $ \\ $\pm 0.07$\end{tabular}	 & 	\begin{tabular}[c]{@{}c@{}}$\mathbf{1.00} $ \\ $\pm 0.01$\end{tabular}	 & 	\begin{tabular}[c]{@{}c@{}}$\mathbf{0.99} $ \\ $\pm 0.01$\end{tabular}	 & 	\begin{tabular}[c]{@{}c@{}}$\mathbf{1.00} $ \\ $\pm 0.00$\end{tabular}	 & 	\begin{tabular}[c]{@{}c@{}}$\mathbf{1.00} $ \\ $\pm 0.00$\end{tabular}	 & 	\begin{tabular}[c]{@{}c@{}}$\mathbf{0.98} $ \\ $\pm 0.05$\end{tabular}	 & 	\begin{tabular}[c]{@{}c@{}}$0.90 $ \\ $\pm 0.19$\end{tabular}	 & 	\begin{tabular}[c]{@{}c@{}}$0.90 $ \\ $\pm 0.19$\end{tabular}	 & 	\begin{tabular}[c]{@{}c@{}}$0.94 $ \\ $\pm 0.11$\end{tabular} 	 \\
Replay 	 & 	 \begin{tabular}[c]{@{}c@{}}$0.90 $ \\ $\pm 0.04$\end{tabular}	 & 	\begin{tabular}[c]{@{}c@{}}$\mathbf{1.00} $ \\ $\pm 0.00$\end{tabular}	 & 	\begin{tabular}[c]{@{}c@{}}$\mathbf{1.00} $ \\ $\pm 0.00$\end{tabular}	 & 	\begin{tabular}[c]{@{}c@{}}$\mathbf{0.99} $ \\ $\pm 0.01$\end{tabular}	 & 	\begin{tabular}[c]{@{}c@{}}$\mathbf{1.00} $ \\ $\pm 0.01$\end{tabular}	 & 	\begin{tabular}[c]{@{}c@{}}$\mathbf{1.00} $ \\ $\pm 0.01$\end{tabular}	 & 	\begin{tabular}[c]{@{}c@{}}$\mathbf{0.99} $ \\ $\pm 0.01$\end{tabular}	 & 	\begin{tabular}[c]{@{}c@{}}$\mathbf{0.98} $ \\ $\pm 0.02$\end{tabular}	 & 	\begin{tabular}[c]{@{}c@{}}$\mathbf{0.99} $ \\ $\pm 0.01$\end{tabular} 	 \\
Noisy 	 & 	 \begin{tabular}[c]{@{}c@{}}$0.71 $ \\ $\pm 0.09$\end{tabular}	 & 	\begin{tabular}[c]{@{}c@{}}$\mathbf{0.96} $ \\ $\pm 0.02$\end{tabular}	 & 	\begin{tabular}[c]{@{}c@{}}$\mathbf{0.95} $ \\ $\pm 0.03$\end{tabular}	 & 	\begin{tabular}[c]{@{}c@{}}$0.90 $ \\ $\pm 0.11$\end{tabular}	 & 	\begin{tabular}[c]{@{}c@{}}$0.89 $ \\ $\pm 0.14$\end{tabular}	 & 	\begin{tabular}[c]{@{}c@{}}$0.90 $ \\ $\pm 0.13$\end{tabular}	 & 	\begin{tabular}[c]{@{}c@{}}$0.92 $ \\ $\pm 0.15$\end{tabular}	 & 	\begin{tabular}[c]{@{}c@{}}$0.93 $ \\ $\pm 0.14$\end{tabular}	 & 	\begin{tabular}[c]{@{}c@{}}$\mathbf{0.99} $ \\ $\pm 0.01$\end{tabular} 	 \\
Expert 	 & 	 \begin{tabular}[c]{@{}c@{}}$\mathbf{0.90} $ \\ $\pm 0.19$\end{tabular}	 & 	\begin{tabular}[c]{@{}c@{}}$0.14 $ \\ $\pm 0.14$\end{tabular}	 & 	\begin{tabular}[c]{@{}c@{}}$0.10 $ \\ $\pm 0.07$\end{tabular}	 & 	\begin{tabular}[c]{@{}c@{}}$0.12 $ \\ $\pm 0.09$\end{tabular}	 & 	\begin{tabular}[c]{@{}c@{}}$0.11 $ \\ $\pm 0.10$\end{tabular}	 & 	\begin{tabular}[c]{@{}c@{}}$0.14 $ \\ $\pm 0.19$\end{tabular}	 & 	\begin{tabular}[c]{@{}c@{}}$\mathbf{0.90} $ \\ $\pm 0.19$\end{tabular}	 & 	\begin{tabular}[c]{@{}c@{}}$\mathbf{0.90} $ \\ $\pm 0.19$\end{tabular}	 & 	\begin{tabular}[c]{@{}c@{}}$\mathbf{0.82} $ \\ $\pm 0.16$\end{tabular} 	 \\
\hline
\end{tabular}
\caption{Performance of algorithms averaged over dataset creation seeds and offline runs, where $\pm$ captures the standard deviation. Results are for MiniGrid environments on all nine algorithms.}
\label{tab:res:performance_2}
\end{table}

\begin{table}[]
\centering
\begin{tabular}{lcllcllccl}
\hline
Dataset & BC                                                                    & BVE                                                                  & MCE                                                                  & DQN                                                                  & QRDQN                                                               & REM                                                                  & BCQ                                                                  & CQL                                                                   & CRR                                                                  \\ \hline
\multicolumn{10}{c}{Breakout-MinAtar-v0}                                                                                                                                                                                                                                                                                                                                                                                                                                                                                                                                                                                                                                 \\
Random  & \begin{tabular}[c]{@{}c@{}}$0.02$ \\ $\pm 0.00$\end{tabular}          & \begin{tabular}[c]{@{}l@{}}$\mathbf{0.33}$\\ $\pm 0.02$\end{tabular} & \begin{tabular}[c]{@{}l@{}}$\mathbf{0.33}$\\ $\pm 0.02$\end{tabular} & \begin{tabular}[c]{@{}c@{}}$0.17$\\ $\pm 0.02$\end{tabular}          & \begin{tabular}[c]{@{}l@{}}$0.19$\\ $\pm 0.02$\end{tabular}          & \begin{tabular}[c]{@{}l@{}}$0.21$\\ $\pm 0.02$\end{tabular}          & \begin{tabular}[c]{@{}c@{}}$0.16$ \\ $\pm 0.01$\end{tabular}         & \begin{tabular}[c]{@{}c@{}}$\mathbf{0.33}$\\ $\pm 0.02$\end{tabular}  & \begin{tabular}[c]{@{}l@{}}$0.28$\\ $\pm 0.02$\end{tabular}          \\
Mixed   & \begin{tabular}[c]{@{}c@{}}$0.11$\\ $\pm 0.01$\end{tabular}           & \begin{tabular}[c]{@{}l@{}}$\mathbf{0.73}$\\ $\pm 0.05$\end{tabular} & \begin{tabular}[c]{@{}l@{}}$\mathbf{0.66}$\\ $\pm 0.09$\end{tabular} & \begin{tabular}[c]{@{}c@{}}$0.13$\\ $\pm 0.03$\end{tabular}          & \begin{tabular}[c]{@{}l@{}}$0.14$\\ $\pm 0.04$\end{tabular}          & \begin{tabular}[c]{@{}l@{}}$0.14$\\ $\pm 0.02$\end{tabular}          & \begin{tabular}[c]{@{}c@{}}$0.57$\\ $\pm 0.23$\end{tabular}          & \begin{tabular}[c]{@{}c@{}}$0.30$\\ $\pm 0.03$\end{tabular}           & \begin{tabular}[c]{@{}l@{}}$0.47$\\ $\pm 0.25$\end{tabular}          \\
Replay  & \begin{tabular}[c]{@{}c@{}}$0.67$\\ $\pm 0.05$\end{tabular}           & \begin{tabular}[c]{@{}l@{}}$0.85$\\ $\pm 0.04$\end{tabular}          & \begin{tabular}[c]{@{}l@{}}$0.71$\\ $\pm 0.10$\end{tabular}          & \begin{tabular}[c]{@{}c@{}}$0.39$\\ $\pm 0.11$\end{tabular}          & \begin{tabular}[c]{@{}l@{}}$0.47$\\ $\pm 0.25$\end{tabular}          & \begin{tabular}[c]{@{}l@{}}$0.30$\\ $\pm 0.15$\end{tabular}          & \begin{tabular}[c]{@{}c@{}}$\mathbf{0.98}$\\ $\pm 0.03$\end{tabular} & \begin{tabular}[c]{@{}c@{}}$\mathbf{0.88}$\\ $\pm 0.15$\end{tabular}  & \begin{tabular}[c]{@{}l@{}}$\mathbf{0.98}$\\ $\pm 0.03$\end{tabular} \\
Noisy   & \begin{tabular}[c]{@{}c@{}}$0.43$\\ $\pm 0.04$\end{tabular}           & \begin{tabular}[c]{@{}l@{}}$\mathbf{1.04}$\\ $\pm 0.09$\end{tabular} & \begin{tabular}[c]{@{}l@{}}$0.52$\\ $\pm 0.09$\end{tabular}          & \begin{tabular}[c]{@{}c@{}}$0.22$\\ $\pm 0.04$\end{tabular}          & \begin{tabular}[c]{@{}l@{}}$0.20$\\ $\pm 0.03$\end{tabular}          & \begin{tabular}[c]{@{}l@{}}$0.26$\\ $\pm 0.06$\end{tabular}          & \begin{tabular}[c]{@{}c@{}}$0.83$\\ $\pm 0.09$\end{tabular}          & \begin{tabular}[c]{@{}c@{}}$0.82$\\ $\pm 0.16$\end{tabular}           & \begin{tabular}[c]{@{}l@{}}$\mathbf{0.95}$\\ $\pm 0.04$\end{tabular} \\
Expert  & \begin{tabular}[c]{@{}c@{}}$\mathbf{0.82}$\\ $\pm 0.09$\end{tabular}  & \begin{tabular}[c]{@{}l@{}}$0.11$\\ $\pm 0.07$\end{tabular}          & \begin{tabular}[c]{@{}l@{}}$0.06$\\ $\pm 0.04$\end{tabular}          & \begin{tabular}[c]{@{}c@{}}$0.01$\\ $\pm 0.01$\end{tabular}          & \begin{tabular}[c]{@{}l@{}}$0.01$\\ $\pm 0.01$\end{tabular}          & \begin{tabular}[c]{@{}l@{}}$0.02$\\ $\pm 0.01$\end{tabular}          & \begin{tabular}[c]{@{}c@{}}$\mathbf{0.81}$\\ $\pm 0.09$\end{tabular} & \begin{tabular}[c]{@{}c@{}}$\mathbf{0.77}$\\ $\pm 0.15$\end{tabular}  & \begin{tabular}[c]{@{}l@{}}$\mathbf{0.71}$\\ $\pm 0.18$\end{tabular} \\ \hline
\multicolumn{10}{c}{SpaceInvaders-MinAtar-v0}                                                                                                                                                                                                                                                                                                                                                                                                                                                                                                                                                                                                                            \\
Random  & \begin{tabular}[c]{@{}c@{}}$0.05$\\ $\pm 0.01$\end{tabular}           & \begin{tabular}[c]{@{}l@{}}$0.91$\\ $\pm 0.06$\end{tabular}          & \begin{tabular}[c]{@{}l@{}}$\mathbf{1.13}$\\ $\pm 0.05$\end{tabular} & \begin{tabular}[c]{@{}c@{}}$\mathbf{1.18}$\\ $\pm 0.10$\end{tabular} & \begin{tabular}[c]{@{}l@{}}$\mathbf{1.11}$\\ $\pm 0.12$\end{tabular} & \begin{tabular}[c]{@{}l@{}}$\mathbf{1.15}$\\ $\pm 0.13$\end{tabular} & \begin{tabular}[c]{@{}c@{}}$\mathbf{1.12}$\\ $\pm 0.09$\end{tabular} & \begin{tabular}[c]{@{}c@{}}$\mathbf{1.24}$\\ $\pm 0.11$\end{tabular}  & \begin{tabular}[c]{@{}l@{}}$0.68$\\ $\pm 0.04$\end{tabular}          \\
Mixed   & \begin{tabular}[c]{@{}c@{}}$0.08$\\ $\pm 0.03$\end{tabular}           & \begin{tabular}[c]{@{}l@{}}$\mathbf{0.82}$\\ $\pm 0.08$\end{tabular} & \begin{tabular}[c]{@{}l@{}}$0.57$\\ $\pm 0.23$\end{tabular}          & \begin{tabular}[c]{@{}c@{}}$\mathbf{0.82}$\\ $\pm 0.35$\end{tabular} & \begin{tabular}[c]{@{}l@{}}$\mathbf{0.76}$\\ $\pm 0.29$\end{tabular} & \begin{tabular}[c]{@{}l@{}}$\mathbf{1.02}$\\ $\pm 0.36$\end{tabular} & \begin{tabular}[c]{@{}c@{}}$\mathbf{0.69}$\\ $\pm 0.17$\end{tabular} & \begin{tabular}[c]{@{}c@{}}$\mathbf{0.87}$\\ $\pm 0.20$\end{tabular}  & \begin{tabular}[c]{@{}l@{}}$\mathbf{0.76}$\\ $\pm 0.15$\end{tabular} \\
Replay  & \begin{tabular}[c]{@{}c@{}}$0.62$\\ $\pm 0.02$\end{tabular}           & \begin{tabular}[c]{@{}l@{}}$\mathbf{1.19}$\\ $\pm 0.20$\end{tabular} & \begin{tabular}[c]{@{}l@{}}$0.21$\\ $\pm 0.04$\end{tabular}          & \begin{tabular}[c]{@{}c@{}}$\mathbf{1.29}$\\ $\pm 0.07$\end{tabular} & \begin{tabular}[c]{@{}l@{}}$\mathbf{1.34}$\\ $\pm 0.13$\end{tabular} & \begin{tabular}[c]{@{}l@{}}$\mathbf{1.29}$\\ $\pm 0.07$\end{tabular} & \begin{tabular}[c]{@{}c@{}}$\mathbf{1.25}$\\ $\pm 0.07$\end{tabular} & \begin{tabular}[c]{@{}c@{}}$\mathbf{1.26}$\\ $\pm 0.07$\end{tabular}  & \begin{tabular}[c]{@{}l@{}}$1.12$\\ $\pm 0.06$\end{tabular}          \\
Noisy   & \begin{tabular}[c]{@{}c@{}}$0.37$\\ $\pm 0.16$\end{tabular}           & \begin{tabular}[c]{@{}l@{}}$0.82$\\ $\pm 0.18$\end{tabular}          & \begin{tabular}[c]{@{}l@{}}$0.30$\\ $\pm 0.06$\end{tabular}          & \begin{tabular}[c]{@{}c@{}}$\mathbf{1.16}$\\ $\pm 0.19$\end{tabular} & \begin{tabular}[c]{@{}l@{}}$\mathbf{1.20}$\\ $\pm 0.21$\end{tabular} & \begin{tabular}[c]{@{}l@{}}$\mathbf{1.17}$\\ $\pm 0.16$\end{tabular} & \begin{tabular}[c]{@{}c@{}}$0.62$\\ $\pm 0.33$\end{tabular}          & \begin{tabular}[c]{@{}c@{}}$\mathbf{0.96}$ \\ $\pm 0.17$\end{tabular} & \begin{tabular}[c]{@{}l@{}}$\mathbf{0.97}$\\ $\pm 0.18$\end{tabular} \\
Expert  & \begin{tabular}[c]{@{}c@{}}$\mathbf{0.48}$ \\ $\pm 0.26$\end{tabular} & \begin{tabular}[c]{@{}l@{}}$0.08$\\ $\pm 0.05$\end{tabular}          & \begin{tabular}[c]{@{}l@{}}$0.09$\\ $\pm 0.04$\end{tabular}          & \begin{tabular}[c]{@{}c@{}}$0.02$\\ $\pm 0.03$\end{tabular}          & \begin{tabular}[c]{@{}l@{}}$0.02$\\ $\pm 0.02$\end{tabular}          & \begin{tabular}[c]{@{}l@{}}$0.04$\\ $\pm 0.03$\end{tabular}          & \begin{tabular}[c]{@{}c@{}}$\mathbf{0.48}$\\ $\pm 0.27$\end{tabular} & \begin{tabular}[c]{@{}c@{}}$\mathbf{0.49}$ \\ $\pm 0.27$\end{tabular} & \begin{tabular}[c]{@{}l@{}}$\mathbf{0.54}$\\ $\pm 0.29$\end{tabular} \\ \hline
\end{tabular}
\caption{Performance of algorithms averaged over dataset creation seeds and offline runs, where $\pm$ captures the standard deviation. Results are for MinAtar environments on all nine algorithms.}
\label{tab:res:performance_3}
\end{table}

\clearpage
\subsection{Illustration of sampled State-Action Space on MountainCar}

In Fig.~\ref{fig:res:projections} we illustrate the effect of the behavioral policy on the distribution of the sampled dataset on the example of the \texttt{MountainCar} environment. 
This environment was chosen as the state space is two-dimensional and thus provides axes with physical meaning.

In this example, the dataset obtained through a random policy has only limited coverage of the whole state-action space. 
This is, because the random policy is not able to transition far from the starting position due to restrictive environment dynamics, necessitating long sequences with identical actions (swinging left and right) to deviate farther from the starting state distribution.

Furthermore, the expert policies obtained in each independent run differ from one another in how they steer the agent towards the goal, for instance, refraining to use the action \textit{"Don't accelerate"} in the first run.

\begin{figure}[h]
    \centering
    \includegraphics[width=1\textwidth]{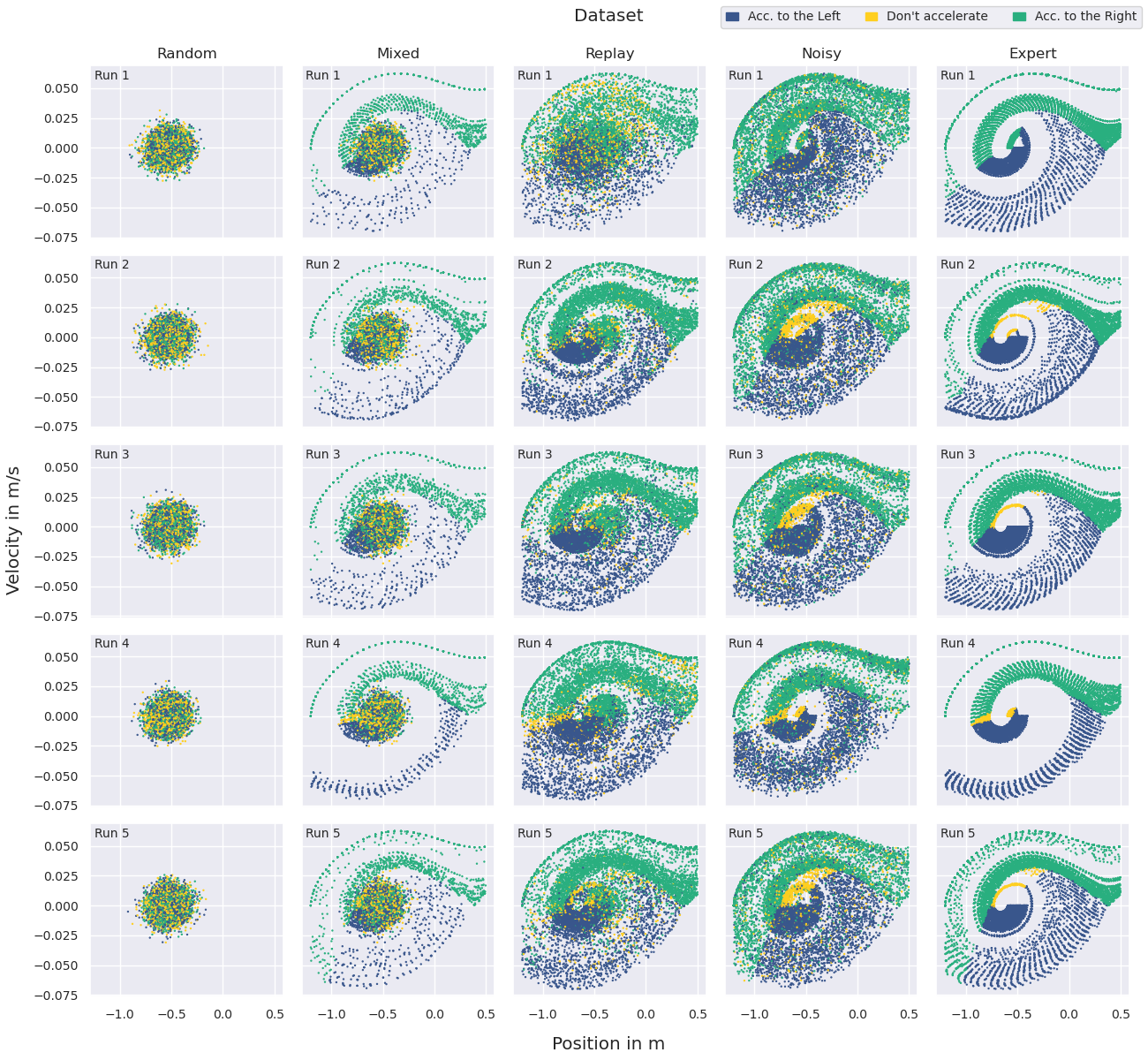}
    \caption{State-action space for different datasets created from the environment \texttt{MountainCar} under different dataset schemes for five independent runs. 10\% of the datasets were sub-sampled for plotting.}
    \label{fig:res:projections}
\end{figure}

\clearpage
\subsection{Performance per Dataset Generation Scheme}

To obtain results per dataset generation scheme, the results for the five dataset creation runs per scheme are averaged. Therefore, the TQ and SACo are averaged as well as the performance for the respective algorithm on each dataset. Results are depicted in Fig.~\ref{fig:algos_all_avg}.\\

\begin{figure}[h]
    \centering
    \includegraphics[width=\textwidth]{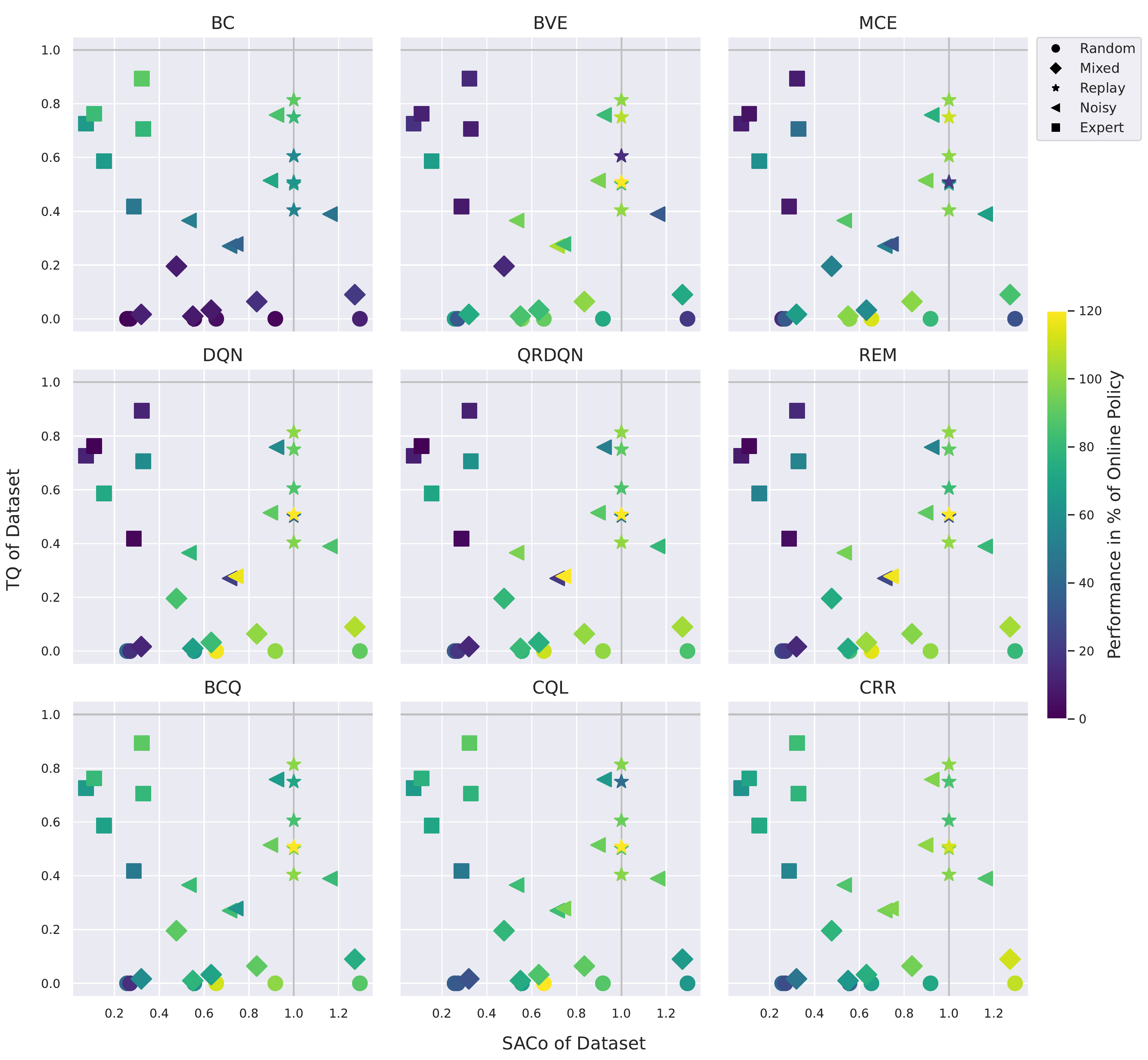}
    \caption{Performance of algorithms compared to the online policy used to create the datasets, with respect to the TQ and SACo of the dataset. Points denote the different datasets. TQ, SACo and performance are averaged over results for each of the five dataset creation seeds.}
    \label{fig:algos_all_avg}
\end{figure}

\clearpage
\subsection{Results with $\mathit{lSACo}$}
\label{app:lsaco}

We define lSACo analogous to SACo as defined in Sec.~\ref{derived-measures} as

\begin{align}
    \mathit{lSACo}(\cD) := \frac{\log(u_{s,a}(\cD))}{\log(u_{s,a}(\cD_{\text{ref}}))}.
    \label{eq:lsaco}
\end{align}

In Fig.~\ref{fig:algos_all_log} we visualize the same results as presented in Fig.~\ref{fig:algos_all}.
We conclude that our findings elaborated in Sec.\ref{sec:experiments} remain the same, but are harder to interpret from an empirical point of view due to the logarithmic scaling of lSACo.

\begin{figure}[h]
    \centering
    \includegraphics[width=\textwidth]{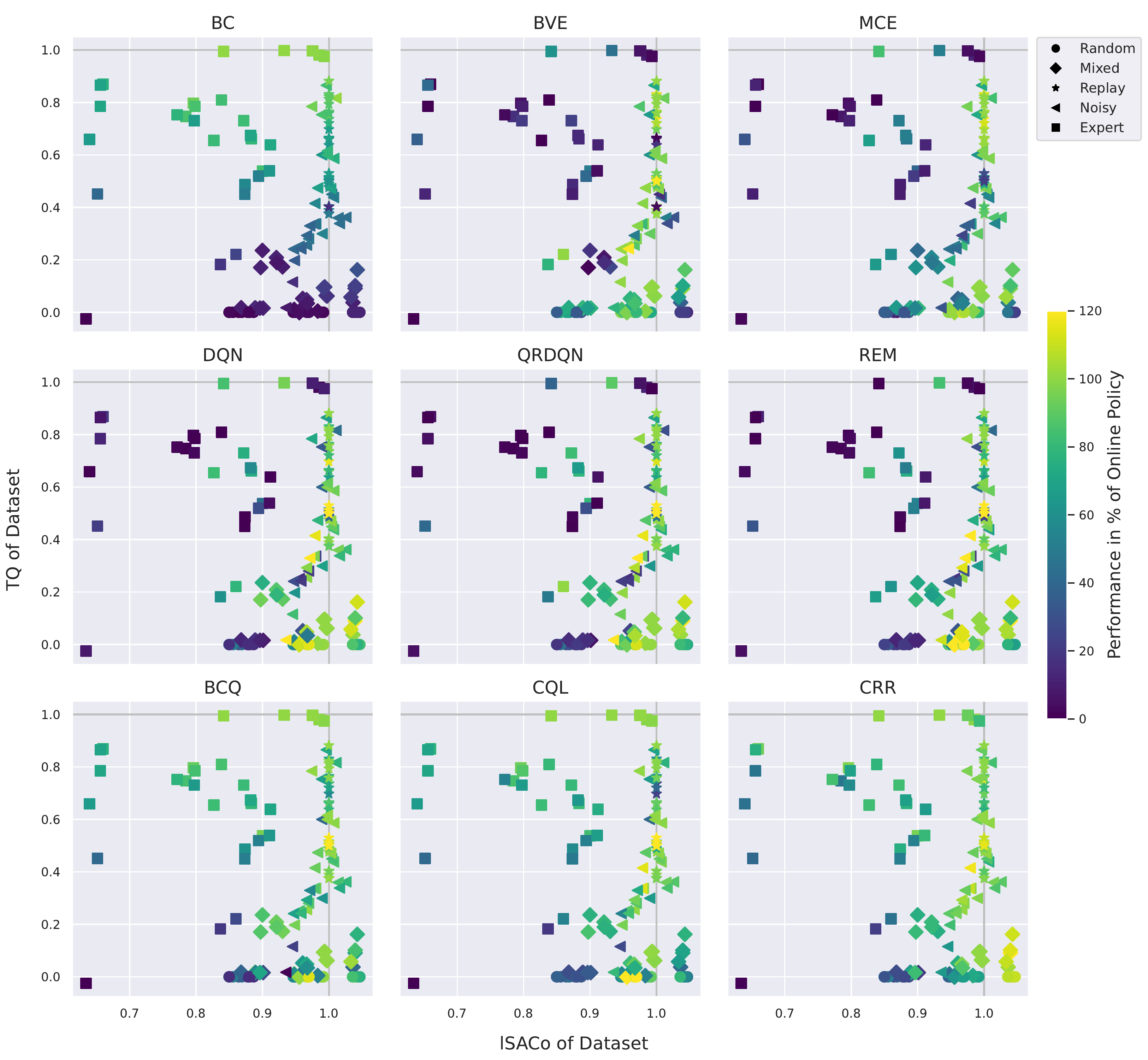}
    \caption{Performance of algorithms compared to the online policy used to create the datasets, with respect to the TQ and lSACo of the dataset. Points denote the different datasets, color denotes performance of the respective algorithm.}
    \label{fig:algos_all_log}
\end{figure}

\clearpage
\subsection{Results with Na\"{\i}ve Entropy Estimator}\label{sec:naive_emp}

As discussed in Sec.~\ref{derived-measures}, we also evaluated our main experiments with an empirical measures based on the na\"{\i}ve entropy estimator $\hat H(\cD) = - \sum_i^K \hat p_i \log(\hat p_i)$ with $\hat p_i = n_{s,a} / N$, where $n_{s,a}$ is the count of a specific visitable state-action pair $(s,a)$ of the visitable state-action pairs $K$ under the policy, in the dataset of size $N$.
The entropy estimate of a given dataset is also normalized with a reference dataset $\cD_{\text{ref}}$, where we use the replay dataset throughout all experiments.
The empirical exploration measure of a dataset is thus $\hat H(\cD) / \hat H(\cD_{\text{ref}})$.

We find qualitatively similar results to our main experiments using SACo.
A major difference is the strong concentration around one, thus the entropy estimate of different datasets are close together in value.
Furthermore, random datasets and mixed datasets (consisting to 80\% of the random dataset) exhibit much higher scores than for SACo.
This is expected, because even if a smaller number of state-action pairs is visited, chances are high the reachable states are visited relatively even under a random policy which results in a high entropy estimate.

\begin{figure}[h]
    \centering
    \includegraphics[width=\textwidth]{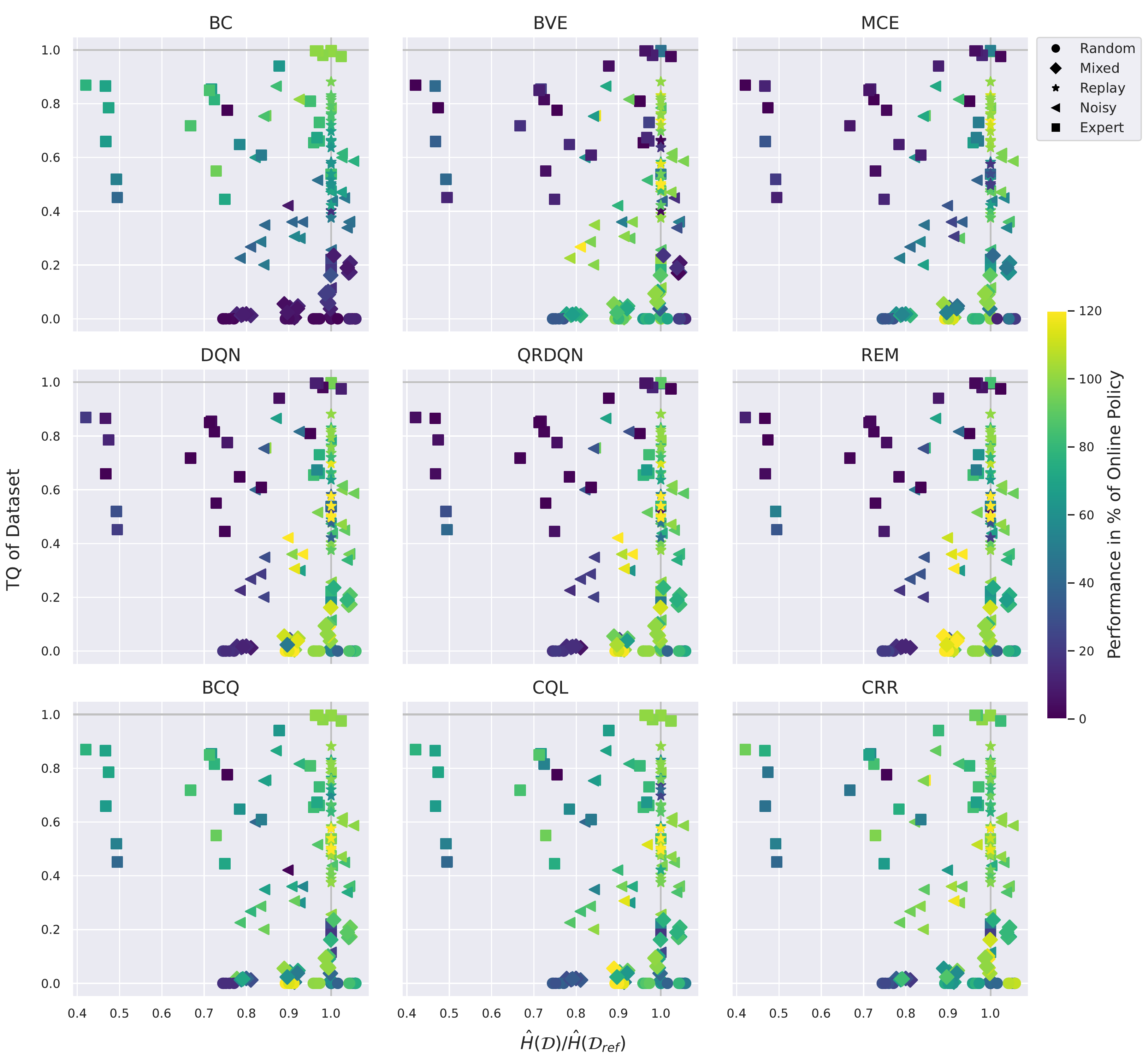}
    \caption{Performance of algorithms compared to the online policy used to create the datasets, with respect to the TQ and $\hat H(\cD) / \hat H(\cD_{\text{ref}})$ of the dataset. Points denote different datasets, color denotes performance of the respective algorithm.}
    \label{fig:pp_algos_all_log}
\end{figure}

\clearpage
\subsection{Additional Results on D4RL Gym-MuJoCo Datasets}\label{mujoco}

Many recent algorithmic advances in Offline RL for continuous action spaces \citep{Wu:19, Kumar:19, Kumar:20, Kostrikov:21, Fujimoto:21} report results on D4RL datasets \citep{Fu:21}, most prominently on the Gym-MuJoCo datasets.
Here we present some additional results on these datasets, based on the results reported by \cite{Kumar:20}, to investigate the applicability of our proposed measures to characterize datasets sampled from continuous state-action spaces.

Calculating SACo in continuous state-action spaces requires discretization.
We discretized for each dimension for the state and action space individually, using the same ranges and bin sizes for each datasets of the same environment.
In line with our experiments depicted in Fig.~\ref{fig:algos_all} in the main paper, we use the replay dataset as reference dataset.
The reference scores for TQ are the return of the best online policy and the return of the random policy, which are generally used to normalize the performance when reporting results on these datasets \citep{Fu:21}.
Performances of all algorithms are taken from \cite{Kumar:20}.
Results are depicted in Fig.~\ref{fig:algos_d4rl}.

\begin{figure}[h]
    \centering
    \includegraphics[width=\textwidth]{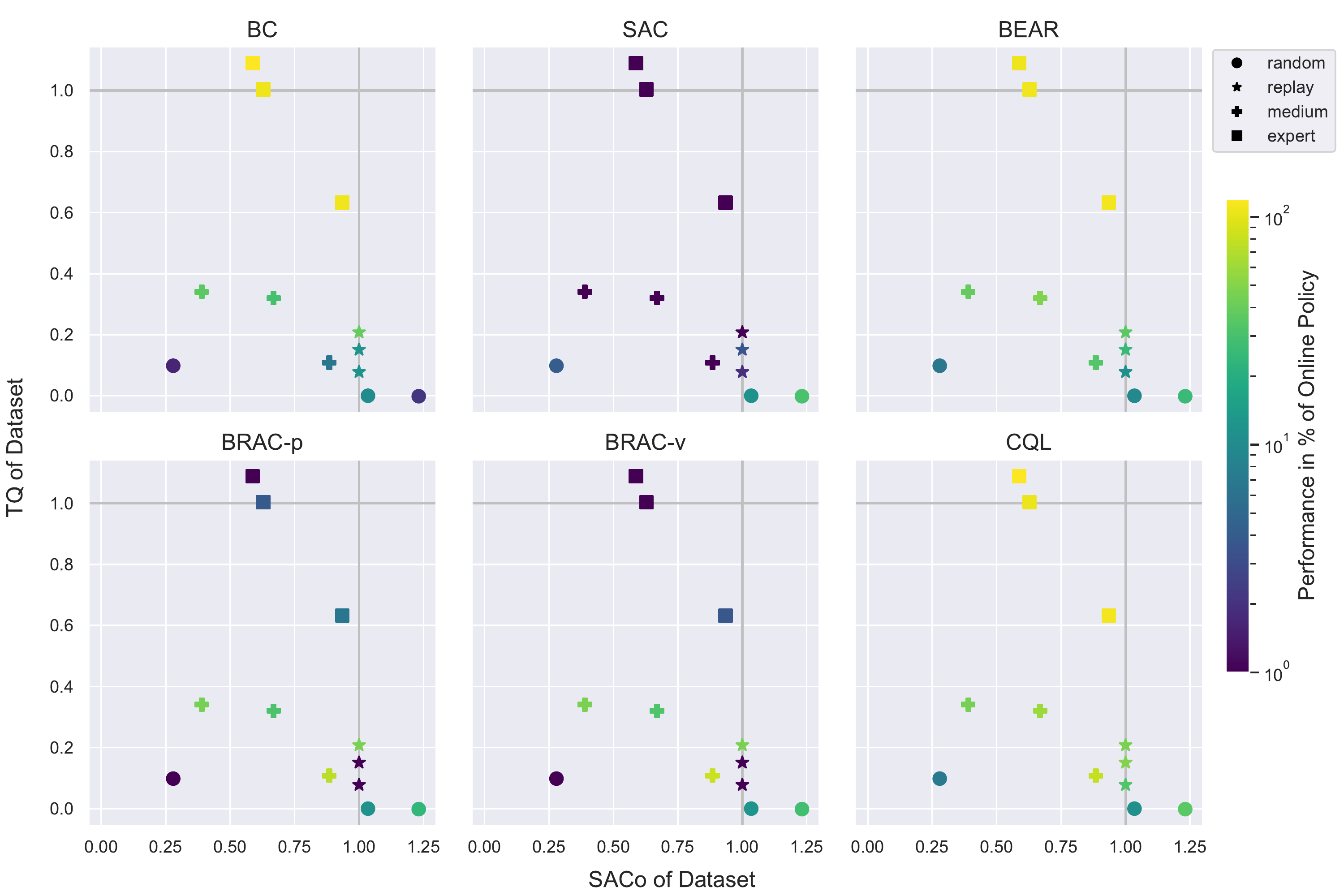}
    \caption{Results of a variety of algorithms applicable to continuous action-spaces on D4RL Gym-MuJoCo \texttt{halfcheetah}, \texttt{walker2d} and \texttt{hopper} environments and feature the random, replay, mixed and expert datasets. Datasets are \texttt{-v0} variants as published by \cite{Fu:21}. Performance of algorithms are taken from \cite{Kumar:20}, who evaluates on the same datasets. The color coding of the performance is scaled logarithmic to highlight performance differences on the lower performance end.}
    \label{fig:algos_d4rl}
\end{figure}

In line with the results on discrete action environments, we find that BC has a strong correlation with TQ.
The algorithm used for online learning, SAC \citep{Haarnoja:18}, only finds policies that are better than random for datasets with high SACo.
Furthermore, we find that BEAR \cite{Kumar:19} and CQL \citep{Kumar:20} attain good policies (relative to BC and SAC) for both datasets with high TQ and SACo.
The results for BRAC-p and BRAC-v \cite{Wu:19} are worse than for BEAR and CQL overall, especially for the expert datasets with high TQ, where they fail to meet the performance of BC.

Overall, other than exhibited in our main experiments, algorithms only attain performance close to the online agent if trained on large portions of expert data.
We hypothesize, this might be due to the nature of the robotic control environments, where steering the agent to any point in the state space is in general very complex and useful behavior only operates in a small subspace of the full state-action space.
Future work would have to investigate the effects of dynamics and dimensionality on continuous state-action space problems, as all the environments considered have similarly high dimensional state-action spaces and similar dynamics (especially \texttt{walker2d} and \texttt{hopper}).

\end{document}